\documentclass{article}

\usepackage{fullpage}
\usepackage{graphicx}
\usepackage{subcaption}
\usepackage{algpseudocode}
\usepackage[ruled]{algorithm2e}
% If your paper is accepted, change the options for the package
% aistats2020 as follows:
%
% \usepackage[accepted]{aistats2020}
%
% This option will print headings for the title of your paper and
% headings for the authors names, plus a copyright note at the end of
% the first column of the first page.

% If you set papersize explicitly, activate the following three lines:
%\special{papersize = 8.5in, 11in}
%\setlength{\pdfpageheight}{11in}
%\setlength{\pdfpagewidth}{8.5in}

% If you use natbib package, activate the following three lines:
\usepackage[round]{natbib}

% If you use BibTeX in apalike style, activate the following line:
\bibliographystyle{apalike}
\usepackage{commands}

\title{Learning RUMs: Reducing Mixture to Single Component via PCA}
\date{}
\author{
	Devavrat Shah\\
	\texttt{devavrat@mit.edu}
	\and
	Dogyoon Song\\
	\texttt{dgsong@mit.edu}
}

\begin{document}

% If your paper is accepted and the title of your paper is very long,
% the style will print as headings an error message. Use the following
% command to supply a shorter title of your paper so that it can be
% used as headings.
%
%\runningtitle{I use this title instead because the last one was very long}

% If your paper is accepted and the number of authors is large, the
% style will print as headings an error message. Use the following
% command to supply a shorter version of the authors names so that
% they can be used as headings (for example, use only the surnames)
%
%\runningauthor{Surname 1, Surname 2, Surname 3, ...., Surname n}

%\twocolumn[
%
%\aistatstitle{}
%%Reducing Mixture Learning of RUM to Single Component Learning using PCA}
%%\aistatstitle{Reducing Mixture of Random Utility Models Using PCA}
%
%%\aistatsauthor{ Devavrat Shah \And Dogyoon Song }
%%
%%\aistatsaddress{ MIT LIDS/IDSS} 
%]

\maketitle

\begin{abstract}
We consider the problem of learning a mixture of Random Utility Models (RUMs). Despite the success of RUMs in various domains and the versatility of mixture RUMs to capture the heterogeneity in preferences, there has been only limited progress in learning a mixture of RUMs from partial data such as pairwise comparisons. In contrast, there have been significant advances in terms of learning a single component RUM using pairwise comparisons. In this paper, we aim to bridge this gap between mixture learning and single component learning of RUM by developing a `reduction' procedure. 
We propose to utilize PCA-based spectral clustering that simultaneously `de-noises' pairwise comparison data. We prove that our algorithm manages to cluster the partial data correctly (i.e., comparisons from the same RUM component are grouped in the same cluster) with high probability even when data is generated from a possibly {\em heterogeneous} mixture of well-separated {\em generic} RUMs. Both the time and the sample complexities scale polynomially in model parameters including the number of items.
Two key features in the analysis are in establishing (1) a meaningful upper bound on the sub-Gaussian norm for RUM components embedded into the vector space of pairwise marginals and (2) the robustness of PCA with missing values in the $L_{2, \infty}$ sense, which might be of interest in their own right. 

%
%In particular, the only known statistical guarantees are restricted to a mixture of specific RUMs and there are no comparable results known so far for a mixture of generic RUMs.  In this paper, we address the challenge by 
%
%proposing to use spectral clustering for identifying RUM components and analyzing the statistical and computational performance of the proposed procedure. Specifically, we show that spectral clustering 
%successfully cluster samples from a possibly {\em heterogeneous}\footnote{Components are modeled by different types of RUMs.} mixture of {\em generic} RUM components with high probability from $\texttt{poly}(n, r, 1/\Gamma, \tau)$ number of pairwise comparisons where $n$ is the number of items, $r$ is the number of components, $\Gamma$ is the gap between component centers and $\tau$ is the sub-Gaussian norm of RUM components. 
%Two key features in the analysis is in establishing (1) a meaningful upper bound on the sub-Gaussian norm for RUM components embedded into the vector space of pairwise marginals and (2) the robustness of PCA with missing values in the $L_{2, \infty}$ sense, which might be of interest in their own right. 

\end{abstract}

%%% ======================================================================
%%%	Introduction
%%% ======================================================================

\section{INTRODUCTION}\label{sec:introduction}

Random Utility Model (RUM), originally proposed by \cite{thurstone1927law}, has since been utilized across disciplines to capture preferences 
including social choice in Economics, policy making in Political Science, revenue management in Operations Research, and ranking 
(or recommendations) in Computer Science and Statistics to name a few. RUM is a family of models for distribution over permutations 
parameterized by inherent ``skills'' or ``scores'', say $u_a \in \Reals$ for each item (or option or player, etc.) $a \in [n]$ amongst the universe 
of $n$ items. A random permutation as per this model is generated by sampling random variables $Z_a, ~\forall a\in [n]$, where
$Z_a = u_a + \varepsilon_a$, with noise $\varepsilon_a$; then the ordering of $Z_a$ provides the random permutation. 
Two popular instances of RUM are obtained by selecting noise distribution as Gumbel and Gaussian, respectively: Gumbel leads to the so-called 
Multinomial Logit (MNL) model that is used for transportation applications by Nobel Laureate \cite{McF}; % or airline ticket pricing, cf. \cite{pricing}; 
Gaussian leads to a model popularly utilized for rating online players through TrueSkill rating, cf. \cite{trueskill}. 

In practice, a mixture of RUMs rather than a simple RUM has been effectively utilized to model the diversity in preferences; for example, 
see \cite{pricing} for airline ticket pricing application. Each component of the mixture RUM represents a group of individuals with similar 
preferences and there are multiple such clusters in the mixture. 

Learning the underlying RUM or the mixture of RUMs from partial information on preferences such as pairwise comparisons is a natural question; 
for example, learning preferences for ticket prices using comparative shopping data for airlines. Despite its importance, systematic studies toward 
developing computationally efficient methods with rigorous statistical guarantees have occurred only in the last decade or so. 
While some good progress has been made for learning single component RUMs or homogeneous mixtures of specific RUMs, there is little or 
no improvement toward learning a {\em heterogeneous}\footnote{Components are modeled by different types of RUMs, e.g., some components 
are MNL and others are Gaussian.} mixture of RUMs which provide flexibility to achieve a more parsimonious (sparser) representation as well as 
robustness against model mismatches. Our aim in this work is at learning such a model from pairwise comparison data.

%While good progress has been made for learning single component RUMs
%and some progress for homogenous mixtures of RUMs, a little or no progress has been for learning heterogenous mixture RUMs
%which is the focus of this work. 
%learning mixture of 
%
%Notably, a significant progress has been made for learning a single MNL model using 
%pairwise comparisons, cf. \cite{lu2011learning, negahban2012iterative, soufiani2013generalized} and its variations cf. \cite{jagshah2008}. 
%On the other hand, the progress on learning mixture of MNL has been relatively limited, cf. \cite{ammar2014s, oh2014learning}. 
%Moreover, rigorous results are not known for learning a mixture of RUM with generic noise model, e.g., a heterogeneous mixture containing 
%some MNL components and some Gaussian RUM components. See Section \ref{ssec:related} for details on related work. 
%\DG{It could be a good idea to add a sentence explaining why one should consider a heterogeneous mixture.}

%\subsection{Contributions}\label{ssec:contributions}
\subsection{Our Contributions}
%\paragraph{Our Contributions.} 
As the main contribution of this work, we argue that spectral methods (e.g., PCA) can correctly classify 
pairwise comparison data generated from a heterogenous mixture of generic RUMs. We can first separate a collection of pairwise
comparison data into clusters and then learn the parameters for single RUMs describing each of the clusters. That is, the hard task of 
learning a mixture RUM reduces to (relatively) simpler tasks of learning single RUMs. 

We present Algorithm \ref{alg:main_algorithm} as an instance of the proposed procedure. It requires pairwise comparisons that scale 
polynomially in model parameters $n$, $k$, $1/\Gamma$ and $\tau$ where $n$ is the number of options, $k$ is the number of mixture 
components, $\Gamma$ is the minimum gap between the centers of mixture components and $\tau$ is the maximum sub-Gaussian norm 
of RUM components. The algorithm does not require any knowledge of the noise model within each component RUMs, thus being robust 
against model mismatches. See Theorem \ref{thm:main} and Corollary \ref{cor:main} for precise statements of our results.

Specifically, we view a permutation over $n$ items as a vector embedded into $\{ \pm \frac{1}{2} \}^{n \choose 2}$ rather than a combinatorial 
object. Each coordinate of the binary vector represents the outcome of pairwise comparisons, i.e., which item in the pair precedes the other. 
From matrix estimation viewpoint, we stack partial observations about $N$ permutations through (a small subset of) pairwise comparisons 
to construct a matrix of size $N \times {n \choose 2}$ that might have lots of missing values (unobserved comparisons). After filling $0$ in place 
of missing comparisons, we perform PCA to `denoise' the vectors and then group them using single-linkage clustering (e.g., Kruskal) based on 
pairwise Euclidean distance. The proposed algorithm is closely related to that in \cite{achlioptas2005spectral}. 

Our analysis is based on the observations that each row of the data matrix is a random vector from a sub-Gaussian distribution and that 
the expectation of the random matrix has rank $r$, which is equal to the dimension of the span of the component centers. Hence, singular value 
thresholding (SVT) on this random matrix (with an appropriate choice of threshold) results in the projection onto top $r$ principal components 
with the noise power diminished. In the end, we prove that all the data points get correctly clustered from $O(\sqrt{r}\tau n^{6} \log n / \Gamma )$ 
number of pairwise comparisons.

With $\tau=O(\sqrt{n})$ and $\Gamma=\Omega(n)$ for typical setups, this result implies that a mixture of $r$ \emph{generic} RUM 
components\footnote{Our result extends to the case where there are $k \geq r$ components as long as the mean matrix has rank $r$.} can be 
learnt using $O\big(r^{0.5}n^{5.5} \log n\big)$ number of comparisons; see Remark \ref{rem:number_comparisons}. 
For example, having $O \big( \beta r^{0.5} n^{5.5} \log n / \min_{a \neq b} | u_a - u_b | \big)$ comparisons suffices for clustering a mixture MNL, 
and $O ( \sigma r^{0.5} n^{5.5} \log n )$ is sufficient for a mixture of RUMs with Gaussian noise\footnote{The quantities $u_i$, $\beta$, $\sigma$ 
are defined in Section \ref{sec:examples}.}, cf. Section \ref{sec:examples}.

We make a few remarks about our result. When restricted to a homogeneous mixture of MNLs, our result might seem weaker than the result by 
\cite{oh2014learning}, which states that a mixture of $r$ MNL components can be learned using a total of $O\big(r^{3.5}n^3 \log^4 n\big)$ 
pairwise comparisons when $r \ll n^{2/7}$ and the model parameters are sufficiently incoherent. However, the result by \cite{oh2014learning}
works for mixture of MNLs only, while our algorithm does not exploit such a priori knowledge. Further, our sample complexity outperforms theirs 
when $r \approx n$ and moreover, our algorithm turns out to require only $O\big(n^4\big)$ comparisons in a sufficiently nicer setup of mixture MNLs.
%Like ours, \cite{oh2014learning} cluster data into components first and then utilize the single
%component learning algorithm. 

Technically, \cite{oh2014learning} require independence on pairwise comparisons from the same permutation instance. In the language of matrix
estimation view described above, they require observations across columns in a given row to be independent, whereas we do not impose such 
an artificial condition and our method works with dependence across columns for a given row. This is where our sharp characterization of 
the sub-Gaussian property of RUM as in Definition \ref{def:subG} comes handy. 

%Lastly, we emphasize two features of our result. First, our result is applicable to a general mixture of RUMs; our algorithm identifies mixture 
%components with the aforementioned sample complexity regardless of the noise model for RUM components, even when each component 
%is described by different types of RUMs. To the best of our knowledge, there is no prior work that attempts to identify components of mixture 
%RUM in the generality considered in our work. Second, our (simplified version of) sample complexity bound might seem not as strong as 
%that of \cite{oh2014learning}, but it is likely caused by simplification steps in our analysis. For example, we show that $O \big( n^4 \big)$ 
%is sufficient under a nice setup, cf. Remark \ref{rem:samples_necessary}.

Finally, we note that our analysis implies the robustness of PCA in the sense that SVT on a random matrix leads to faithful estimation of the underlying 
low-rank mean matrix in the max row $\ell_2$ norm (a.k.a. $L_{2,\infty}$ matrix norm)  sense. This result is stronger than the widely used notion of robustness 
in Frobenius norm sense\footnote{$L_{2,\infty}$ vs Frobenius $\approx$ worst case vs average case.} and is likely to be of interest in its own right.

%%% ======================================================================
%%%	Related Works
%%% ======================================================================

\subsection{Related Work}
%\paragraph{Related Work.}
%\noindent{\bf Related Work.} %\label{ssec:related} 
%\subsection{Related Work}\label{ssec:related}
% A brief overview of related works follow. 
% A brief overview follows.

%\medskip
\paragraph{RUM and RUM Learning.}
%\noindent{\bf RUM and RUM Learning.}
%Random Utility Model (RUM) 
RUM is a broad class of models to describe ranking data. A special case of RUMs is the Multinomial Logit (MNL), 
which can be seen as the extension of multinomial logistic regression to ranking data. The model is also known as the Bradley-Terry model 
or the Plackett-Luce model; cf. \cite{bradley1952rank, r1959individual, plackett1975analysis}. RUM with Gaussian noise is widely used 
in skill-based ranking systems cf. \cite{trueskill}.
%There are also other popular ranking models including the Mallows model \cite{mallows1957non} and sparse choice model \cite{jagshah2008}. 

In the context of learning RUMs, various methods have been studied to estimate parameters of a simple MNL model from pairwise comparison data, e.g., \cite{lu2011learning, negahban2012iterative, soufiani2013generalized}. However, there have been only limited 
advances toward learning mixtures of such models. While some provable methods have been proposed by \cite{ammar2014s, oh2014learning, 
awasthi2014learning, sedghi2016provable}, these methods have certain limitations. Since these methods are essentially methods of moments 
that exploit tensor decomposition, they are model-specific and require certain technical assumptions, e.g., tensor non-degeneracy. In a sense, these methods solve a system of equations between the model parameters and the empirical moments. Therefore, it is unlikely that these methods can handle a heterogeneous mixture of RUMs. In contrast, our approach only depends on inequality-type certificates\footnote{based on the light-tail behavior of each component RUM.} and is agnostic to model specifications; thus 
it can deal with even heterogeneous mixtures. 

Recently, \cite{zhao2018learning} discuss identifiability of mixture RUMs of a fixed utility distribution and propose algorithms based on 
EM and generalized method of moments, however, they do not provide rigorous analysis for time and sample complexity.

%\subsection{Further Related Work}\label{ssec:further.related}
%
%Our work broadly falls in the category of learning mixture distributions. Learning mixtures has a long history, 
%dating back to  \cite{pearson1894contributions}. There are three widely utilized approaches: spectral clustering,
%methods of moments and sum-of-squares certificates. 

\paragraph{Spectral Clustering.} 
%\noindent{\bf Spectral Clustering.} 
This line of works generally require certain separation condition between the component centers for clustering. 
Since \cite{vempala2002spectral} first suggested performing PCA to achieve a separation condition that does not depend on the ambient dimension 
in the context of learning a mixture of spherical Gaussians, this result has been generalized in subsequent works by \cite{achlioptas2005spectral, brubaker2008isotropic, kumar2010clustering, awasthi2012improved}. For example, \cite{kumar2010clustering} -- further improved by 
the subsequent work of \cite{awasthi2012improved} -- show that a variant of spectral clustering (PCA followed by Lloyd's algorithm) can 
recover the hidden components as long as the cluster centers are separated by $\widetilde{\Omega}(\sqrt{r})$ and each cluster has 
bounded covariance. This is the best known result in spectral clustering. Our work is a natural generalization of this line of approach 
where it enables spectral clustering {\em with missing data} whereas all the prior works assume that data is {\em fully} observed.

\paragraph{Method of Moments.} 
%\noindent {\bf Method of Moments.} 
Dating back to the original work by  \cite{pearson1894contributions}, the goal in this approach is to compute empirical moments from observations 
and then learn mixture parameters by solving a system of (typically non-linear) equations between moments and model parameters. This line of 
works utilize higher order moment information to obtain more equations so as to identify mixtures. \cite{anandhkumaretal} suggest to 
use tensor decomposition for learning mixture in $\Reals^d$ and \cite{oh2014learning} utilize this tensor decomposition of higher-order 
moments in the context of mixture MNL learning. This approach is robust to missing values thanks to the robustness of moments, however, 
it requires precise knowledge on the underlying model to match the moments and the model parameters. Therefore, the method is inherently 
brittle to model mismatch. 
%This is the reason why \cite{oh2014learning}, as is, does not extend for the setting of RUM with Gaussian noise, for example. 
In contrast, our work is model agnostic and can work with a mixture of generic RUM components. 

\paragraph{Sum-of-Squares Certificates.} 
%\noindent {\bf Sum-of-Squares Certificates.} 
Recently, there has been a flurry of works on mixture learning based on Sum-of-Squares (SoS) proofs and robust estimation, 
cf. \cite{diakonikolas2016robust, kothari2017better, diakonikolas2018list, hopkins2018mixture}. These methods have many virtues: 
(1) they break the $\widetilde{\Omega}(\sqrt{r})$ barrier in the separation requirement at the expense of increased sample complexity; 
and (2) they are more robust to model mismatch compared to the method of moments because these methods work with certain inequality-type 
constraints that are satisfied subject to sub-Gaussianity and certified by SoS proofs. However, as developed in literature, the up-to-date technique 
applies only to the setting of continuous distribution. In contrast, we work with discrete distributions over permutations.

%%% ======================================================================
%%%	Model
%%% ======================================================================

\section{MODEL AND PROBLEM}\label{sec:model}
%%% ======================================================================
%%%	Generative Model
%%% ======================================================================
%\subsection{Model}\label{sec:gen_model}

%\subsection{Generative Model} 
\paragraph{Generative Model. } 
%\noindent{\bf Generative Model. } 
Let $S_n$ denote the set of all $n!$ permutations (=rankings) over $n$ items.  We are interested in learning
a mixture of {\em simpler} distributions over $S_n$ from observations about these rankings through pairwise
comparisons. In particular, our interest is in learning\footnote{Here, we use the term `learning' to indicate clustering 
rather than identifying parameters for certain model.} mixture of Random Utility Models. 
\begin{definition}[Random Utility Model]\label{def:rum}
Each item $a\in [n]$ has a deterministic utility (skill, score) $u_a \in \Reals$ associated with it. 
The random utility $Z_a$ for $a \in [n]$ obeys the form $Z_a= u_a + \eps_a$, where $\eps_a$ are 
independent random variables. Once $Z_a$ are generated for all $a \in [n]$, $Z_1, \ldots, Z_n$ are 
sorted in descending order and the order of indices is output as a permutation.
\end{definition}
The distribution of random noise $\eps_a, ~a \in [n]$ in RUM induces a distribution of permutations in $S_n$. 
Different noise models lead to different types of distributions in $S_n$. We describe two popular examples next. 
\begin{example}[Multinomial Logit Model]\label{example:mnl}
Let $\eps_a$ be drawn as per the Gumbel distribution with mode $0$ and scaling parameter $\beta > 0$, 
i.e., the PDF of $\eps_a$ is %given by
\[	f(x) = \frac{1}{\beta} \exp \bigg( - \Big( \frac{x }{\beta} + 
	\exp \Big( - \frac{x}{\beta} \Big) \Big) \bigg), \quad\text{for}~ x \in \Reals.	\]
The resulting distribution over $S_n$ is known as the Multinomial Logit (MNL) Model. An important property
of the distribution is that for all $a \neq b \in [n]$,
\begin{equation}\label{eqn:prob_mnl}
	\Prob{Z_a  \geq Z_b}  = \frac{\exp\big(\frac{u_a}{\beta}\big)}{\exp\big(\frac{u_a}{\beta}\big) + \exp\big(\frac{u_b}{\beta}\big)}.
\end{equation} 
\end{example}
\begin{example}[RUM with Gaussian Noise]\label{example:gaussian}
For $a \in [n]$, let $\eps_a$ be drawn as per Gaussian distribution with mean $0$ and variance $\sigma_a^2 > 0$, 
i.e., the PDF of $\eps_a$ is given by 
\[	f(x) = \frac{1}{\sqrt{2\pi \sigma_a^2}} \exp \left( - \left( \frac{x^2 }{\sigma_a^2} \right) \right), \quad\text{for}~ x \in \Reals.	\]
We shall call the resulting distribution over $S_n$ as RUM with Gaussian noise. 
\end{example}
%We are interested in a finite mixture of simpler distributions over $S_n$. 
Let there be $k \geq 1$ mixture components, each of which is an instance of a certain RUM. For $i \in [k]$, let the $i$-th 
mixture component be parameterized by true utilities $u^{(i)} = (u_a^{(i)})_{a \in [n]} \in \Reals^n$ and noise parameters 
$\gamma^{(i)} = ( \gamma_a^{(i)} )_{a \in [n]} \in \Reals^{n \times D}$ where the noise distribution is determined by $D \geq 1$ 
dimensional parameter. For instance, $\gamma^{(i)} = ( \beta_a )_{a \in [n]} \in \Reals^n$ in the MNL example and 
$\gamma^{(i)} = ( \sigma_a^2 )_{a \in [n]} \in \Reals^n$ for the Gaussian RUM.

We consider the following hierarchical data generation process to sample a permutation from the mixture:
%\begin{enumerate}\vspace{-1em}\setlength\itemsep{0em}
%	\item
First,  choose a mixture component $i \in [k]$ at random with probability $p_i, i \in [k]$; and then 
%	\item
%Then 
	draw a permutation per the RUM with parameters $(u^{(i)}, \gamma^{(i)})$. 
%\end{enumerate}
%\subsection{Model for Observations} 

\paragraph{Model for Observations.} 
%\noindent{\bf Model for Observations.} 
Let $\iota: S_n \to \big\{ \pm \frac{1}{2} \big\}^{n \choose 2}$ be a representation of permutation 
$\perm \in S_n$ through pairwise comparisons such that for each $1 \leq a < b \leq n$,
\begin{align*}%\label{eqn:pairwise_representation}	
	\iota(\perm)_{(a, b)} %& =\frac{1}{2}	\Big( \Ind{ \perm^{-1}(i) \geq \perm^{-1}(j) } - \Ind{ \perm^{-1}(i) < \perm^{-1}(j) } \Big), ~~\mbox{for}~1\leq i< j \leq n. 
		& = \Ind{ \perm^{-1}(a) \geq \perm^{-1}(b) } - \frac{1}{2}.%, ~~\mbox{for}~1\leq i< j \leq n. 
\end{align*}
For notational convenience, we index the coordinates of the vector $\iota(\perm)$ using tuples $(a, b), ~1\leq a < b \leq n$. 

We assume the following model for observation of pairwise comparisons. Suppose that $N$ permutations, $\perm^\ell, ~\ell \in [N]$, 
are sampled as per a mixture of $k$ RUMs. Let $X^\ell = \iota(\perm^\ell) \in \{\frac{1}{2}, - \frac{1}{2}\}^{n (n-1)/2}$ denote the embedding 
of $\perm^\ell$ for each $\ell \in [N]$. We observe $\phi^{\ell}$ in the form of $Y^\ell$ where for each $m = (i, j)$, $~1\leq i< j \leq n$,
\begin{align}\label{eq:YnX}
Y^\ell_m & = \begin{cases} X^\ell_m & \text{~with~probability~} p \\
\star & \text{~otherwise.}
\end{cases}
\end{align}
Here, $\star$ indicates the lack of knowledge about the value. In other words, we assume that for each permutation $\phi^\ell$, we have 
access to each of ${n \choose 2}$ pairwise comparisons at random with probability $p$.
Therefore, we expect to observe approximately $pNn(n-1)/2 \approx pNn^2/2$ pairwise comparisons in total.

\paragraph{Notation.}
%\noindent{\bf  Notation.} 
Let $X \in \{-\frac12, \frac12\}^{N\times {n \choose 2}}$ be the random matrix obtained by stacking $N$ random permutations $X^\ell, ~\ell \in [N]$. 
Given $X$, let $\alpha^*(\ell) \in [k]$ denote the index of the component which $\phi^{\ell}$ is drawn from. Accordingly, we can define a map 
$\alpha^* = \alpha^*_X: [N] \to [k]$. Note that $\alpha^*$ is an imaginary map and we don't have access to it. Let $M = \Ex{X |~ \alpha^*}$ 
%$\in \big[-\frac12, \frac12\big]^{N \times {n \choose 2}}$ 
be the mean matrix where the expectation is taken with respect to the randomness 
in step 2 of the data generation process only, i.e., it is the expectation conditioned on $\alpha$. 
We let $r := \rank(M)$ and $\sigma_1(M) \geq \ldots \geq \sigma_r(M)$ denote the $r$ nonzero singular values of $M$ in the nonincreasing order. 
It is easy to observe that $r \leq k$ as $M$ has at most $k$ different types of rows.

%\subsection{Problem Statement} 
\paragraph{Problem Statement.}
%\noindent{\bf Problem Statement.}
Our objective is to identify data points with the mixture components which they are from (up to permutation). Given observations 
$Y^1, \ldots, Y^N \in \{-\frac12, \star, \frac12\}^{n \choose 2}$, we want to estimate the number of mixture components and the label map 
that minimizes the misclassification error rate, i.e., we want 
\[	\big( \hat{k}, \hat{\alpha} \big) \in \arg\min_{ k' \in \Ints \atop \alpha \in [k']^N } \left\{ \min_{\sigma \in S_{k'}} 
		\frac{1}{N} \sum_{\ell=1}^N \Ind{ {\alpha}(\ell) \neq \alpha^*(\sigma(\ell)) } \right\}.	\]
We also want to know the condition when the misclassification error rate is $o(1)$ with probability $1 - o(1)$.

%%% ======================================================================
%%%	Algorithm
%%% ======================================================================

\section{ALGORITHM}\label{sec:algorithm}
 Our algorithm is summarized in Algorithm \ref{alg:main_algorithm}. An appropriate choice of 
 parameters $t_1, t_2$ leads to successful clustering with high probability.
\begin{algorithm} [h]
	\SetKwInOut{Input}{Input}
	\SetKwInOut{Output}{Output}
	
	\Input{$( Y^1, \ldots, Y^N; t_1, t_2 )$}
	\Output{$(\hat{k}, \hat{\alpha})$}
	
	\begin{algorithmic}[1]
		\item Stack $Y^1, \dots, Y^N$ to form $Y \in \{-\frac12, \star, \frac12\}^{N \times {n \choose 2}}$.
		\item Replace $\star$ with $0$ in $Y$.
		\item Take SVD of $Y$: $Y  = \sum_{j=1}^{\min\{N, {n \choose 2}\}} \sigma_j(Y) u_j v_j^T$.
		\item Singular value thresholding at $t_1$: 
			\begin{equation}\label{eq:mhat}
				\hM = \sum_{j=1}^{\min\{N, {n \choose 2}\}} \sigma_j(Y) \Ind{ \sigma_j(Y) > t_1 } u_j v_j^T.
			\end{equation}
		\item Apply single-linkage clustering on the rows of $\hM$ with threshold $t_2$.
	\end{algorithmic}
	\caption{Clustering RUM Components}
	\label{alg:main_algorithm}
\end{algorithm}
%
% (e.g., single-linkage agglomerative clustering)
%
%if $\hM^a$ and $\hM^b$ are such that $\| \hM^{\ell_1} - \hM^{\ell_2} \|_2 \leq t$, then they are part of same cluster for any 
%$\ell_1, \ell_2 \in [N]$. The choice of appropriate $t$ leads to successful clustering with high probability. 

\section{MAIN RESULTS}\label{sec:results}
In this section, we state the main theorem about conditions under which Algorithm \ref{alg:main_algorithm} faithfully identifies
the $k$ RUM components in the mixture. With the success of the clustering, the problem reduces to a bundle of subproblems 
to learn model parameters $(u^{(i)}, \gamma^{(i)}), ~i \in [k]$ of each single RUMs. Then one may solve the subproblems 
by imposing certain model assumptions and applying parameter learning algorithms designed for such models.

\subsection{Theorem Statement}
First, we define a measure of variability in a RUM.
\begin{definition}[sub-Gaussian norm]\label{def:subG}
Let $X \in \Reals^n$ be a random vector. The sub-Gaussian norm of $X$ is  
\begin{align*}
\tau & \triangleq \inf\Big\{\rho > 0 \textnormal{ such that }\forall \lambda \in \Reals,	\\
	&\qquad\qquad \sup_{u: \|u\|_2 = 1} \Ex{ e^{\lambda \langle u, ~X-\bbE X\rangle} } \leq \exp\Big(\frac{\lambda^2 \rho^2}{2}\Big) ~\Big\}
\end{align*}
with the convention that $\inf$ of the empty set is $\infty$. 
Given a RUM, let $\phi$ be a random permutation following the RUM and let $X = \iota(\phi)$. We call the sub-Gaussian norm of $X$ 
as the sub-Gaussian norm of the RUM.
\end{definition}
Given a mixture of $k$ RUMs over $S_n$, let $\mu^{(i)} = \bbE[ X^1| \alpha^*(1)=i ] \in \big[-\frac12, \frac12\big]^{n \choose 2}$ 
and $\tau^{(i)} > 0$ denote the center and the sub-Gaussian norm of the $i$-th mixture component for $i \in [k]$. 
We define the following quantities that will be useful to state the main results: 
\begin{align}
	\tau^\star & \triangleq \max_{i \in [k]} \tau^{(i)} \label{eq:subG}\\
	\Gamma & \triangleq \min_{i, i' \in [k] :~ i \neq i'} \| \mu^{(i)} - \mu^{(i')} \|_2, \nonumber\\%\label{eq:gap}\\
	\Delta &\triangleq C \Big\{(\sqrt{p} + p \tau^\star)\sqrt{N} + (\tau^\star + K(p)) ( n + \sqrt{n} N^{\frac14})\Big\}, 
	\label{eqn:Delta_upper}
\end{align}
where $C > 0$ is an absolute constant and $K(p) \in [0, \frac{1}{4}]$ is the sub-Gaussian norm of the centered Bernoulli random 
variable\footnote{See Equation \ref{eqn:Kp.0} in Appendix \ref{sec:missing_values} for the explicit form.}.

\begin{theorem}\label{thm:main}
Let $r = \rank (M)$. Suppose that $p > \frac{4 \Delta}{\sigma_r(M)}  \vee \frac{16 \log(Nd)}{\min(N, d)}$, $\log N = o(n)$.
If 
\begin{align}\label{eq:gap.condition}
\Gamma  & \geq C' \frac{( \tau^\star+ \frac{1}{4})  \log(nN)}{p} \bigg( \sqrt{r} + \frac{n^2 \big( \tau^\star +  \frac{1}{4} \big) }{p \sigma_r(M)}  \bigg)
\end{align}
for a sufficiently large constant $C' > 0$, then with probability at least $1 - \frac{4}{N^2} - \frac{4}{n^4}$, Algorithm \ref{alg:main_algorithm} 
with proper $t_1, t_2$ successfully clusters $N$ rankings in $k$ groups. 
\end{theorem}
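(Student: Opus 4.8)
The plan is to reduce successful clustering to a single row-wise ($L_{2,\infty}$) recovery guarantee for $\hM$. Conditioning on the latent labels $\alpha^*$, each observed entry equals the truth with probability $p$ and is zeroed otherwise, so $\bbE[Y \mid \alpha^*] = pM$, a rank-$r$ matrix whose distinct rows are the scaled centers $p\mu^{(i)}$ and whose nonzero singular values are $p\sigma_j(M)$. The target of recovery is therefore $pM$, and the separation between its distinct rows is $p\Gamma$. If I can establish $\max_{\ell} \| \hM_\ell - (pM)_\ell \|_2 < \tfrac{1}{2} p\Gamma$, then any two rows of $\hM$ sharing a component lie within distance $< p\Gamma$, while any two from different components lie at distance arbitrarily close to $p\Gamma$ from below separation; more precisely, taking $t_2$ strictly between the largest intra-component distance and the smallest inter-component distance forces single-linkage clustering to return exactly the true partition, giving $\hat{k}=k$ and $\hat\alpha = \alpha^*$ up to relabeling. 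Thus everything hinges on the $L_{2,\infty}$ bound.

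First I would control the spectral norm of the centered noise $E := Y - pM$. Splitting $E$ into a within-component fluctuation term (the RUM randomness) and a Bernoulli-masking term, I would feed the sub-Gaussian characterization of the embedded RUM from Definition \ref{def:subG} into a matrix-concentration / $\varepsilon$-net bound to obtain $\|E\|_{\mathrm{op}} \le \Delta$ with high probability; the two additive pieces of $\Delta$ in \eqref{eqn:Delta_upper} track, respectively, the row-direction variance (the $\sqrt{N}$ terms) and the column/ambient-dimension contribution with $d = {n \choose 2}$ (the $n + \sqrt{n}N^{1/4}$ terms). Under the hypothesis $p\sigma_r(M) > 4\Delta$, Weyl's inequality shows that exactly $r$ singular values of $Y$ exceed any threshold $t_1$ chosen in the nonempty interval $(\Delta,\, p\sigma_r(M) - \Delta)$, so $\hM = Y\hat{V}\hat{V}^\top$ is precisely the best rank-$r$ approximation of $Y$, where $\hat V$ collects the top-$r$ right singular vectors of $Y$.

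The hard part will be upgrading this spectral control to the row-wise $L_{2,\infty}$ bound, which is exactly what the separation condition \eqref{eq:gap.condition} is calibrated to beat. Decomposing
\[
	\hM_\ell - (pM)_\ell = E_\ell \hat V \hat V^\top + (pM)_\ell\big(\hat V \hat V^\top - V^\star V^{\star\top}\big),
\]
where $V^\star$ spans the row space of $pM$, exposes two sources of error. The first term is a projection of the sub-Gaussian noise row $E_\ell$ onto an $r$-dimensional subspace; were $\hat V$ independent of $E_\ell$ it would be $O(\tau^\star \sqrt{r}\,\log(nN))$ by sub-Gaussian concentration, but $\hat V$ is built from all rows including the $\ell$-th, so I would run a leave-one-out argument---forming $Y^{(\ell)}$ by replacing row $\ell$ with its conditional mean, bounding $\|\hat V \hat V^\top - \hat V^{(\ell)}\hat V^{(\ell)\top}\|_{\mathrm{op}}$, and then projecting $E_\ell$ onto the now-independent $\hat V^{(\ell)}$. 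This yields the $\sqrt{r}$ term in \eqref{eq:gap.condition}. The second term is signal leakage through subspace misalignment; a crude Davis--Kahan operator-norm bound paired with the worst-case row norm $\|(pM)_\ell\|_2 = O(pn)$ is too lossy, so I would instead derive a self-consistent $L_{2,\infty}$ subspace-perturbation estimate---the ``robustness of PCA in the $L_{2,\infty}$ sense'' advertised in the introduction---which produces the leakage term $\tfrac{n^2(\tau^\star + \frac14)}{p\sigma_r(M)}$, with $n^2 \asymp d$ the ambient dimension and $p\sigma_r(M)$ the spectral gap. Decoupling the dependence between $\hat V$ and the individual rows is the principal obstacle, and it is what forces the leave-one-out machinery rather than off-the-shelf perturbation theory.

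Finally I would assemble the pieces: combine the two terms into $\max_\ell \|\hM_\ell - (pM)_\ell\|_2 \lesssim (\tau^\star + \tfrac14)\log(nN)\big(\sqrt r + \tfrac{n^2(\tau^\star + \frac14)}{p\sigma_r(M)}\big)$, check that \eqref{eq:gap.condition} makes this strictly smaller than $\tfrac12 p\Gamma$, and fix $t_2$ in the resulting admissible window. The overall failure probability $\tfrac{4}{N^2} + \tfrac{4}{n^4}$ is then the union of the failure events for the spectral-norm bound and for the leave-one-out concentration inequalities taken across the $N$ rows and the $d$ coordinates.
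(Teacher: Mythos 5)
Your ingredients largely coincide with the paper's: the spectral bound $\|Y-pM\|_2\le\Delta$ (Lemma \ref{lem:noise_singular}), the observation that any $t_1\in(\Delta,\,p\sigma_r(M)-\Delta)$ makes $\hM$ exactly the top-$r$ projection of $Y$, and the leave-one-out treatment of the projected noise row $E_\ell\hat V\hat V^\top$ (this is precisely Lemmas \ref{lem:subg_projection} and \ref{lem:ell2_upper_leave_one_out}). The gap is in your opening reduction. By reducing clustering to the per-row statement $\max_\ell\|\hM^\ell-pM^\ell\|_2<\tfrac12 p\Gamma$ (it would in fact have to be $\tfrac14 p\Gamma$ for the largest intra-cluster distance to fall below the smallest inter-cluster distance), you force yourself to bound the signal-leakage term $pM^\ell(\hat V\hat V^\top-V^\star V^{\star\top})$ \emph{additively} for every row. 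You correctly note that operator-norm Davis--Kahan times $\|pM^\ell\|_2=O(pn)$ is too lossy, but the ``self-consistent $L_{2,\infty}$ subspace-perturbation estimate'' you invoke to produce the term $n^2(\tau^\star+\tfrac14)/(p\sigma_r(M))$ is never exhibited, and the hypotheses of Theorem \ref{thm:main} do not support one. Concretely, take $r=2$ with close centers $\mu^{(1,2)}=\mu\pm\tfrac12\delta$, $\|\delta\|_2=\Gamma\ll\|\mu\|_2\asymp n$ (say $\Gamma=n^{0.9}$, $\tau^\star\asymp\sqrt n$, $p=1$, $N=n^4$), and a minority cluster whose size is chosen just above the threshold forced by $p\sigma_r(M)>4\Delta$; here $\sigma_r(M)\asymp\sqrt{N_{\min}}\,\Gamma$. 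All hypotheses of the theorem hold, yet Davis--Kahan only guarantees the estimated subspace captures the $\delta$-direction up to a constant angle ($\sin\theta\le\tfrac13$), so your stated tool yields a leakage bound of order $pn/3\gg\tfrac14 p\Gamma$, and even the sharper trick of exploiting that each center is repeated $N_i$ times (leakage $\le 2\Delta/\sqrt{N_{\min}}$) lands at roughly $\tfrac12 p\Gamma$ --- still above the needed $\tfrac14 p\Gamma$ and polynomially above your claimed bound $\tau^\star\log(nN)\bigl(\sqrt r+n^2\tau^\star/(p\sigma_r(M))\bigr)\asymp\sqrt n\log n$. So the proof does not close: the per-row $L_{2,\infty}$ recovery you rely on is strictly stronger than the theorem and is not obtainable from the tools you cite.

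The paper is engineered to avoid ever bounding per-row leakage. Theorem \ref{thm:main_clustering_temp} works with differences: $\hM^{i_1}-\hM^{i_2}$ decomposes into two projected-noise rows plus $\varphi_{Y;t^*}^{\text{hard}}(pM^{i_1}-pM^{i_2})$. For intra-cluster pairs $M^{i_1}=M^{i_2}$, so whatever part of the common center the PCA subspace misses is identical for both rows and cancels exactly; for inter-cluster pairs one needs only \emph{multiplicative} preservation of the difference, $\|\varphi_{Y;t^*}^{\text{hard}}(p\mu^a-p\mu^b)\|_2\ge\sqrt{1-(1/3)^2}\,p\Gamma$, which follows from Pythagoras plus Davis--Kahan because $\mu^a-\mu^b$ lies in the row space of $M$; a constant-angle subspace error is therefore harmless. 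Relatedly, your bookkeeping of \eqref{eq:gap.condition} is off: in the paper the $n^2$ term does not come from signal leakage at all, but from your \emph{first} term, as the leave-one-out correction $\|E^\ell\|_2\,\|\cP_{\hat V}-\cP_{\hat V^{(\ell)}}\|_2\lesssim\|E^\ell\|_2^2/(p\sigma_r(M)-3\Delta)$ with $\|E^\ell\|_2^2\lesssim d\,(\tau^\star+K(p))^2\log(dN)$ and $d\asymp n^2$ (Lemma \ref{lem:ell2_upper_leave_one_out}, Step 3); the introduction's ``$L_{2,\infty}$ robustness'' refers to this row-wise control of the projected noise, not to full per-row recovery of $pM$. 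The repair is thus not a stronger perturbation theorem but a change of decomposition: keep your leave-one-out analysis exactly as proposed (it already produces both terms of \eqref{eq:gap.condition}) and replace the $L_{2,\infty}$ reduction with the cancellation/multiplicative argument on pairwise differences.
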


\begin{remark}
The algorithmic parameters $t_1$ and $t_2$ need to be chosen properly. The singular value threshold $t_1$ is proper if 
$\Delta < t_1 < \sigma_r(M) - \Delta$ and the distance threshold $t_2$ is proper if 
$\max_{\alpha^*(i_1) = \alpha^*(i_2)}\big\| \widehat{M}^{i_1} - \widehat{M}^{i_2} \big\|_2 $$<t_2<$
$\min_{\alpha^*(i_1) \neq \alpha^*(i_2)}\big\| \widehat{M}^{i_1} - \widehat{M}^{i_2} \big\|_2 $; see Theorem \ref{thm:main_clustering_temp} 
in Appendix for precise statement.
\end{remark}

\subsection{Simplification for Interpretation}

\paragraph{Balanced Mixture and the Eigengap.}
We want a lower bound on $\sigma_r(M)$. For the purpose, we assume the center of RUM components, $\mu^{(i)}, ~i \in [k]$, 
are nearly orthogonal to each other\footnote{We would say $\mu^{(i)}$, $i \in [k]$ are nearly orthogonal to each other if there is a constant 
$C > 0$ (e.g., $C = 8$) such that $ \frac{ \left| \langle \mu^{(i)}, \mu^{(i')} \rangle \right| }{ \| \mu^{(i)} \|_2 \| \mu^{(i')} \|_2 } \leq \frac{ C \sqrt{\log n} }{ n }$ 
for all $i, i' \in [k]$.}. The assumption is likely to hold in the high-dimensional setting because any pair of random vectors from 
an isotropic distribution--spherical Gaussian distribution, discrete uniform distribution over the hypercube, etc.--is nearly orthogonal 
with overwhelming probability, e.g., by Hoeffding's inequality.

Next, we shall impose an incoherence condition on $\mu^{(i)}$, namely there is a function\footnote{Note that the signal amplitude $s(n) = O(1)$ always. 
In some cases, we may let $s(n)$ scale down as $n$ increases, e.g., $1/\sqrt{n}$, which corresponds to a `hard' setup for inference.} $s(n)$ 
such that $\|\mu^{(i)} \|_\infty = \Theta \big( s(n) \big)$ and $\|\mu^{(i)} \|_2 = \Theta \big( n s(n) \big)$ for all $i \in [k]$. Intuitively, this condition 
implies that information (signal) about the common preference in each RUM component is well spread over pairwise comparisons and is 
easily satisfied under mild conditions. With this incoherence assumption, $ \| M \|_F = \Theta \big( s(n) \sqrt{Nn^2} \big)$.

%assume that where 
%$s(n) = O(1)$\footnote{Note that $\mu^{(i)}_{(a,b)} \in [-\frac{1}{2}, \frac{1}{2}]$ for all $1 \leq a < b \leq n$ and for all $i \in [k]$. In some cases, 
%$s(n)$ could be a quantity that scales down as $n$ increases, e.g., $1/\sqrt{n}$.}. %, e.g. $s(n)$ could be constant or quantity scaling down like $1/\sqrt{n}$.
%As we shall explain through various examples in   Section \ref{sec:examples}, this is a natural assumption in our setup. Consequently, the gap $\min_{a\neq b \in [r]} \|\mu^a - \mu^b\|_2$ will scale with $s(n)$. 
%To that end, the gap scale as $\Gamma_n = s(n) \Gamma$ where $\Gamma$ is the scale $s(n)$, independent
%aspect of the model and $\Gamma = O(n)$. 

When $k \ll n^2$, we expect $r = \rank(M) \approx k$ because there are $k$ different types of rows that are nearly orthogonal to each other. 
Moreover, $\mu^{(i)}$ are closely related to the right singular vectors of $M$ and the frequency of each type in $M$ is related to $\sigma_i(M)$. 
Specifically, letting $\hat{p}_i = \frac{1}{N} \sum_{\ell =1}^N \Ind{ \alpha^*(\ell) = i }$ denote the mixture weights ($\approx p_i$) for $i \in [k]$, 
we expect that
\begin{align}
%\sigma_1(M)\approx \dots \approx & \approx \sqrt{\frac{1}{r} \| M \|_F^2 }	\nonumber\\
	\sigma_r(M) &= \Theta\bigg( s(n) \sqrt{\big(\min_{i \in [k]} \hat{p}_i \big) Nn^2}\bigg).				\label{eq:singular.values}
\end{align} 

%On the other hand, when $k \gg n^2$, we expect $k \gg n^2 \geq r$. By the near-orthogonality assumption, 
%$\sigma_1(M)\approx \dots \approx \sigma_r(M) = \Theta\Big( s(n) \sqrt{ Nn^2 / r}\Big).$

We shall assume $\tau^* \geq 1$ %(by re-defining $\tau^\star = \max(1, \tau^\star)$) 
to avoid trivialities where $\tau^\star$ is too small. %(in which case, mixture learning is almost trivial).
%, e.g. the mixture of a few Dirac deltas that are well separated). 
Under the near-orthogonality and incoherence assumptions described above, we obtain the following Corollary of Theorem \ref{thm:main}. 

\begin{corollary}\label{cor:main}
Suppose that $n$ is sufficiently large and $k \ll n^2$. If $\hat{p}_i = \Theta(1/k)$ for all $i \in [k]$; $s(n) = \omega(\sqrt{r/n})$; $N = n^{4}$; and 
\begin{align}\label{eq:cor.main} 
	p & \geq \frac{C'' \tau^\star \sqrt{r} \log n}{\Gamma}
\end{align}
for a sufficiently large constant $C'' > 0$, then with probability at least $1 - O(1/n^2)$, Algorithm \ref{alg:main_algorithm} with proper $t_1, t_2$ 
successfully clusters $N$ rankings.
\end{corollary}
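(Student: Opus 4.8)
The plan is to obtain Corollary \ref{cor:main} as a specialization of Theorem \ref{thm:main}: under the stated simplifying assumptions I would (i) reduce every quantity appearing in the theorem to its order of magnitude, (ii) verify the side conditions $p > \tfrac{4\Delta}{\sigma_r(M)} \vee \tfrac{16\log(Nd)}{\min(N,d)}$ and $\log N = o(n)$, and (iii) show that the gap condition \eqref{eq:gap.condition} collapses to \eqref{eq:cor.main}. Throughout, $d = \binom{n}{2} = \Theta(n^2)$.

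First, the order-of-magnitude bookkeeping. With $N = n^4$ we have $\log N = 4\log n = o(n)$ and $\log(nN) = \Theta(\log n)$, while $\min(N,d) = \Theta(n^2)$ makes $\tfrac{16\log(Nd)}{\min(N,d)} = \Theta(\log n/n^2)$. Using \eqref{eq:singular.values} with $\hat p_i = \Theta(1/k)$ and $r \approx k$ (licensed by $k \ll n^2$) gives $\sigma_r(M) = \Theta(s(n)\,n^3/\sqrt r)$. Substituting $N = n^4$ into \eqref{eqn:Delta_upper} and using $\tau^\star \ge 1$, $p \le 1$, $K(p) \le \tfrac14$, I would bound $\Delta = O(\tau^\star n^2)$, whence
\[
	\frac{4\Delta}{\sigma_r(M)} = O\!\Big(\frac{\tau^\star\sqrt r}{s(n)\,n}\Big).
\]
The incoherence assumption $\|\mu^{(i)}\|_2 = \Theta(ns(n))$ yields $\Gamma \le 2\max_i\|\mu^{(i)}\|_2 = O(ns(n))$, so \eqref{eq:cor.main} forces $p = \Omega\big(\tau^\star\sqrt r\log n/(ns(n))\big)$, which exceeds $\tfrac{4\Delta}{\sigma_r(M)}$ by a $\log n$ factor and, since $s(n) = O(1)$ and $\tau^\star\sqrt r \ge 1$, also exceeds $\Theta(\log n/n^2)$; thus both side conditions hold. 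Combined with $p \le 1$ this gives $\sigma_r(M) > 4\Delta$, so a proper threshold $t_1 \in (\Delta,\, \sigma_r(M)-\Delta)$ exists, and here $s(n) = \omega(\sqrt{r/n})$ is what keeps the regime non-vacuous by holding the noise floor $\Delta$ well below the signal $\sigma_r(M)$.

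The crux is the reduction of \eqref{eq:gap.condition}. Substituting the estimates for $\sigma_r(M)$ and $\log(nN)$, the parenthetic factor becomes
\[
	\sqrt r + \frac{n^2(\tau^\star+\tfrac14)}{p\,\sigma_r(M)} = \sqrt r + \Theta\!\Big(\frac{(\tau^\star+\tfrac14)\sqrt r}{p\,s(n)\,n}\Big).
\]
I would then bound the second summand using \eqref{eq:cor.main} together with $\Gamma = O(ns(n))$:
\[
	\frac{\tau^\star}{p\,s(n)\,n} \le \frac{\tau^\star}{s(n)\,n}\cdot\frac{\Gamma}{C''\tau^\star\sqrt r\log n} = O\!\Big(\frac{1}{C''\sqrt r\log n}\Big) = o(1),
\]
so the parenthesis is $\sqrt r\,(1+o(1))$ and \eqref{eq:gap.condition} reduces to $\Gamma \gtrsim \tau^\star\sqrt r\log n/p$, which is exactly \eqref{eq:cor.main} after absorbing constants into $C''$. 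Finally the probability bound transfers: with $N = n^4$, $1 - 4/N^2 - 4/n^4 = 1 - O(1/n^2)$.

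The main obstacle I anticipate is precisely this last bootstrapping step, namely invoking the very lower bound \eqref{eq:cor.main} on $p$ (whose sufficiency is what we are establishing) in order to certify that the error-driven term inside the parenthesis of \eqref{eq:gap.condition} is negligible against the signal term $\sqrt r$. The delicacy lies in the coupled scaling of $p$, the upper bound $\Gamma = O(ns(n))$, and the incoherence amplitude $s(n)$; once these relations are pinned down the $s(n)$ factors cancel and the corollary follows by careful substitution rather than any new probabilistic analysis.
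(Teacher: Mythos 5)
Your proposal is correct and follows the same overall route as the paper's proof: specialize Theorem \ref{thm:main} by verifying $\log N = o(n)$, $p > \frac{4\Delta}{\sigma_r(M)} \vee \frac{16\log(Nd)}{\min(N,d)}$, and the gap condition \eqref{eq:gap.condition}, using the identical scalings $\sigma_r(M) = \Theta\big(s(n)n^3/\sqrt{r}\big)$, $\Gamma = O\big(ns(n)\big)$, $\log(nN) = \Theta(\log n)$, and the same substitution of the hypothesis \eqref{eq:cor.main} into \eqref{eq:gap.condition} (this ``bootstrapping'' is not circular: you are simply showing that the corollary's hypotheses imply the theorem's, exactly as the paper does). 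The one substantive local difference is in checking $p\,\sigma_r(M) > 4\Delta$: you bound $\Delta = O(\tau^\star n^2)$ uniformly over $p \le 1$ and win by the extra factor $\log n$ in $p\,\sigma_r(M) = \Omega\big(C''\tau^\star n^2 \log n\big)$, whereas the paper compares $\frac{1}{12}p\,\sigma_r(M)$ against the three terms of $\Delta$ separately, and in the comparison with $Cp\tau^\star\sqrt{N}$ the factor $p$ cancels, forcing the paper to invoke Proposition \ref{prop:subg_rum} ($\tau^\star = O(\sqrt{n})$) together with $s(n) = \omega(\sqrt{r/n})$ at that point. Your variant is marginally more elementary in that it never actually exercises those two facts (your aside attributing a role to $s(n)=\omega(\sqrt{r/n})$ is heuristic rather than load-bearing in your own argument); both treatments are valid, and the remaining steps---the reduction of the parenthetical factor in \eqref{eq:gap.condition} to $\sqrt{r}\,(1+o(1))$ and the probability accounting $1 - 4/N^2 - 4/n^4 \ge 1 - O(1/n^2)$ for $N = n^4$---match the paper.
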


\begin{remark}\label{rem:number_comparisons}
Given $n, p, N$, the expected number of pairwise comparisons in the data is ${n \choose 2} N p$. Corollary \ref{cor:main} implies that 
having $\Omega \big(\frac{ \sqrt{r} \tau^\star n^{6} \log n }{\Gamma} \big)$ number of (pairwise) comparisons is sufficient for 
successful clustering. 
\end{remark}

\begin{remark}\label{rem:samples_necessary}
We let $N = n^{4}$ in Corollary \ref{cor:main} for simplicity and this choice is not necessary. It is easy to see from \eqref{eq:singular.values} 
that $p \sigma_r(M) = \Omega \big( \tau^* n^2 / \sqrt{r}\big)$ if $N \geq n^2 {\tau^\star}^2 / p^2$. The proof of Corollary \ref{cor:main} 
remains valid as long as $N \geq n^2 {\tau^\star}^2 / p^2$. Therefore, we can conclude that having ${n \choose 2} Np = \frac{n^4 {\tau^\star}^2}{p}$ 
number of pairwise comparisons is sufficient for clustering, which can be as small as $\Omega ( n^4 )$ under a nice setup where 
$\tau^* = O(1)$ and $p = \Theta(1)$.
\end{remark}

\paragraph{Generic Upper Bound on $\tau^*$.}
As a first step to enable concrete evaluation of Theorem \ref{thm:main}, we establish a generic bound on sub-Gaussian norm for 
{\em any} instance of RUM, which may be of interest in its own right.
\begin{proposition}\label{prop:subg_rum}
For any given instance of Random Utility Model, there exists universal constant $C > 0$ so that associated sub-Gaussian
parameter $\tau \leq C \sqrt{n}$. Therefore, $\tau^\star \leq C \sqrt{n}$. 
\end{proposition}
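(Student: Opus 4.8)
The plan is to reduce the vector bound to a single scalar sub-Gaussian estimate, uniformly over all test directions, and then apply a bounded-differences (McDiarmid/Azuma) argument tailored to the embedding $\iota$. The naive route---using only that $X = \iota(\phi)$ is bounded, so $\|X - \bbE X\|_2 = O(n)$---would yield merely $\tau = O(n)$, which is off by a factor $\sqrt{n}$. The gain comes from exploiting that, although $X$ lives in $\Reals^{\binom{n}{2}}$, it is a deterministic function of the $n$ \emph{independent} noise variables $\eps_1, \dots, \eps_n$ of Definition~\ref{def:rum}, and that perturbing a single $\eps_a$ flips only the $n-1$ coordinates of $\iota(\phi)$ that involve item $a$.

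Fix a unit vector $u \in \Reals^{\binom{n}{2}}$ and view $f(\eps_1, \dots, \eps_n) \triangleq \langle u, \iota(\phi)\rangle$ as a function of the independent noises, where $\phi$ sorts $Z_a = u_a + \eps_a$. First I would establish the bounded-difference profile. Let $\eps$ and $\eps'$ agree in every coordinate except the $a$-th. Any coordinate $(c,d)$ with $c,d \neq a$ depends only on the relative order of $Z_c$ and $Z_d$, which is untouched, so only the $n-1$ coordinates involving $a$ can change, each by at most $1$ (from $+\tfrac12$ to $-\tfrac12$). Writing $u_{(a,\cdot)}$ for the restriction of $u$ to those coordinates, Cauchy--Schwarz gives
\[
	\big| f(\eps) - f(\eps') \big| \;\le\; \sum_{b \neq a} \big| u_{(a,b)} \big| \;=\; \big\| u_{(a,\cdot)} \big\|_1 \;\le\; \sqrt{n-1}\, \big\| u_{(a,\cdot)} \big\|_2 \;\triangleq\; c_a .
\]

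The crucial bookkeeping step is to sum the squared differences. Using $\| u_{(a,\cdot)}\|_1^2 \le (n-1)\|u_{(a,\cdot)}\|_2^2$ and the fact that $\sum_a \|u_{(a,\cdot)}\|_2^2$ counts each unordered pair twice,
\[
	\sum_{a=1}^{n} c_a^2 \;\le\; (n-1)\sum_{a=1}^{n} \big\| u_{(a,\cdot)} \big\|_2^2 \;=\; 2(n-1)\,\|u\|_2^2 \;=\; 2(n-1).
\]
This is the heart of the argument: the per-item Cauchy--Schwarz loss of $\sqrt{n-1}$ is exactly compensated by the normalization $\sum_a \|u_{(a,\cdot)}\|_2^2 = 2$, so the aggregate is $O(n)$ rather than $O(n^2)$. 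Applying the bounded-differences inequality in moment-generating form (Azuma--Hoeffding on the Doob martingale revealing $\eps_1, \dots, \eps_n$ one at a time) then yields, for every $\lambda \in \Reals$,
\[
	\Ex{ e^{\lambda ( f - \bbE f )} } \;\le\; \exp\Big( \tfrac{\lambda^2}{8} \sum_{a=1}^{n} c_a^2 \Big) \;\le\; \exp\Big( \tfrac{\lambda^2 (n-1)}{4} \Big).
\]
Since the right-hand side is independent of $u$, taking the supremum over unit vectors and matching with Definition~\ref{def:subG} gives $\tau \le \sqrt{(n-1)/2} \le C\sqrt{n}$, whence $\tau^\star = \max_{i} \tau^{(i)} \le C\sqrt{n}$.

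The only point demanding care is the bounded-difference claim itself: I must check that it is insensitive to ties in the $Z_a$ (handled by the fixed tie-breaking in the $\geq$ inside $\iota$) and that it holds for \emph{arbitrary} noise laws, including discrete ones with atoms, since the argument never invokes continuity---only that changing $\eps_a$ leaves all comparisons not involving $a$ intact. Everything else is routine. I expect the summation bound $\sum_a c_a^2 = O(n)$ to be the conceptual crux, as it is precisely where the improvement from $n$ to $\sqrt{n}$ is purchased.
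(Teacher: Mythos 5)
Your proof is correct and follows essentially the same route as the paper: a bounded-differences (McDiarmid/Azuma) argument exploiting that changing one $\eps_a$ flips only the $n-1$ coordinates involving item $a$, with Cauchy--Schwarz giving per-item increments $c_a \le \sqrt{n-1}\,\|u_{(a,\cdot)}\|_2$ and the normalization $\sum_a \|u_{(a,\cdot)}\|_2^2 = 2$ yielding $\sum_a c_a^2 = O(n)$, exactly as in the paper's Efron--Stein bound $\nu \le \tfrac{n-1}{2}$. The only cosmetic difference is that you conclude directly in moment-generating-function form (matching Definition~\ref{def:subG} without extra constants), whereas the paper passes through the tail bound and invokes the equivalence of sub-Gaussian characterizations.
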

Our proof of Proposition \ref{prop:subg_rum} is based on bounded-difference concentration inequality.
Some empirical discussions on the validity of this estimate can be found in Section \ref{sec:experiments}. 
In short, the upper bound seems sharp in the low signal-to-noise ratio (SNR) regime and quite loose in the 
high SNR regime for both MNL and Gaussian RUM.

%Using Proposition \ref{prop:subg_rum} with Theorem \ref{thm:main}, we are ready to state a generic result that establishes
%polynomial dependence of number of samples required on various model characteristics in order to learn the mixtures. To that end, 
%we shall consider the following setting.

%%% ======================================================================
%%%	Examples: MNL Model and RUM with Gaussian Noise
%%% ======================================================================

\section{EXAMPLES}\label{sec:examples}

In this section, we consider two concrete examples -- mixture MNL model and mixture Gaussian RUM. Our analysis 
in these examples relates mixture parameters such as $\Gamma$ to the model parameters of each RUM, 
e.g., SNR, thereby hinting sufficient conditions for successful clustering. We assume the deterministic utilities $u^{(i)}$ % = (u_1^{(i)}, \ldots, u_n^{(i)})$ 
for components $i \in [k]$ are randomly drawn.

%%% ======================================================================
%%%	MNL
%%% ======================================================================

\subsection{Multinomial Logit (MNL) Model}\label{sec:example_mnl}

\paragraph{Description of the Setup.}
We consider the MNL model with scaling parameter $\beta > 0$, cf. Example \ref{example:mnl}. 
Recall from \eqref{eqn:prob_mnl} that $\Prob{Z_a \geq Z_b } = \frac{w_a}{w_a + w_b}$ for any pair $a, b \in [n]$ where
$w_a \triangleq \exp \big( \frac{u_a}{\beta} \big)$ denotes the `weight' for $a \in [n]$.  We impose the following two assumptions:
%\begin{enumerate}
%	\item
(1) $u_1 > u_2 > \ldots, u_n$ and there exists $\rho > 0$ such that $u_a - u_{a+1} \geq \rho$ for all $a \in [n-1]$; 
(2) 
%	\item
for each $i \in [k]$, $u_a^{(i)} = u_{\phi_i(a)}$ for some $\phi_i \in S_n$ that is drawn i.i.d. from the discrete uniform distribution over $S_n$. 
%\end{enumerate}

\paragraph{Separation between Cluster Centers.} 
Assumption 1 implies $\frac{w_a}{w_{a+1}} \geq \exp( \frac{\rho}{\beta})$ for all $a \in [n-1]$. That is, the true deterministic utilities 
of the $n$ options are well separated from each other. It follows that  
\begin{align*}
	\Prob{Z_a \geq Z_b } &= \frac{w_a}{w_a + w_b} \geq  \frac{1}{ 1 - \exp \big( - \frac{\rho}{\beta} \big)},
		\qquad\text{and}%\\
\end{align*}
\begin{align*}
%		
%		\qquad
	\Prob{Z_a < Z_b } &= \frac{w_b}{w_a + w_b} \leq  \frac{\exp \big( - \frac{\rho}{\beta} \big)}{ 1 - \exp \big( - \frac{\rho}{\beta} \big)}
\end{align*}
for any $a, b \in [n]$ with $a < b$. %We remark here that $ \bbE X_{(i,j)} = \frac{1}{2} \Prob{Z_i \geq Z_j} - \frac{1}{2} \Prob{Z_i < Z_j}  
%\geq \frac{1}{2} \frac{1-\rho}{1+\rho}$ for all $1 \leq i < j \leq n$.
Based on this observation, we obtain a high-proability lower bound on $\Gamma$.

\begin{proposition}\label{prop:mixture_mnl}
With probability at least $1 - \frac{ r^2}{n^4}$,
\[	\Gamma \geq \frac{\sqrt{n(n-1)}}{2}  \frac{ 1 - \exp \big( - \frac{\rho}{\beta} \big)}{ 1 + \exp \big( - \frac{\rho}{\beta} \big)} - 4 \sqrt{n \log n}.	\]
\end{proposition}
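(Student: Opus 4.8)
The plan is to reduce the Euclidean separation $\|\mu^{(i)} - \mu^{(i')}\|_2$ to a count of \emph{discordant pairs} between the two latent permutations $\phi_i,\phi_{i'}$, and then control that count by concentration. First I would record the exact form of each coordinate of a center. By the definition of $\iota$ and of $\mu^{(i)}$, the $(a,b)$-th coordinate equals $\Prob{Z_a^{(i)} \geq Z_b^{(i)}} - \frac12$ (up to the fixed sign convention of $\iota$), which by \eqref{eqn:prob_mnl} is $\frac{w_{\phi_i(a)}}{w_{\phi_i(a)} + w_{\phi_i(b)}} - \frac12$, where $w_c = \exp(u_c/\beta)$. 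Since $\phi_i(a) \neq \phi_i(b)$ and Assumption~1 gives $w_c/w_{c+1} \geq \exp(\rho/\beta)$, the ratio of the smaller to the larger of the two weights is at most $q := \exp(-\rho/\beta)$; combined with the Separation bounds already derived, this yields $|\mu^{(i)}_{(a,b)}| \geq \frac12\cdot\frac{1-q}{1+q} =: \delta$ for \emph{every} pair $(a,b)$ and every component $i$, with the sign of $\mu^{(i)}_{(a,b)}$ determined solely by whether $\phi_i$ places $a$ above or below $b$.

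The second step converts this per-coordinate bound into a distance bound. Call a pair $(a,b)$ \emph{discordant} for $(i,i')$ if $\phi_i$ and $\phi_{i'}$ order $a$ and $b$ oppositely, and let $D_{ii'}$ be the number of such pairs (the Kendall tau distance between $\phi_i$ and $\phi_{i'}$). For a discordant pair the two coordinates $\mu^{(i)}_{(a,b)}$ and $\mu^{(i')}_{(a,b)}$ carry opposite signs and each has magnitude at least $\delta$, so their difference has absolute value at least $2\delta$; concordant pairs contribute a nonnegative amount. Summing over the ${n \choose 2}$ coordinates gives $\|\mu^{(i)} - \mu^{(i')}\|_2^2 \geq 4\delta^2 D_{ii'}$, hence $\|\mu^{(i)} - \mu^{(i')}\|_2 \geq 2\delta\sqrt{D_{ii'}}$.

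It remains to lower bound $D_{ii'}$ with high probability, which is the technical heart. Because $\phi_i,\phi_{i'}$ are i.i.d.\ uniform, $D_{ii'}$ is distributed as the number of inversions of a single uniform random permutation, with mean $\frac{n(n-1)}{4}$. Writing the inversion statistic through its Lehmer-code representation as a sum of independent bounded integer variables, Hoeffding's inequality gives $D_{ii'} \geq \frac{n(n-1)}{4} - t$ except on an event of probability at most $\exp(-\Omega(t^2/n^3))$; choosing $t = \Theta(n^{3/2}\sqrt{\log n})$ with an appropriate constant makes this at most $n^{-4}$. Substituting into $\|\mu^{(i)}-\mu^{(i')}\|_2 \geq 2\delta\sqrt{D_{ii'}}$ and using $\sqrt{A-B} \geq \sqrt{A} - B/\sqrt{A}$ together with $\delta \leq \frac12$ yields, on this event, $\|\mu^{(i)}-\mu^{(i')}\|_2 \geq \sqrt{n(n-1)}\,\delta - 4\sqrt{n\log n}$, matching the claimed leading term $\frac{\sqrt{n(n-1)}}{2}\frac{1-q}{1+q}$ and deviation. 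A union bound over the at most ${r \choose 2} \leq r^2/2$ distinct pairs of centers then gives the overall failure probability $r^2/n^4$. I expect the concentration step to be the main obstacle: producing the deviation \emph{exactly} in the form $4\sqrt{n\log n}$ while keeping the per-pair failure probability small enough for the union bound forces one to use the independent decomposition of the inversion count and to calibrate the constant in $t$; the final conversion from the squared-distance bound to the distance bound via $\sqrt{A-B} \geq \sqrt{A} - B/\sqrt{A}$ is precisely where the additive deviation term is generated.
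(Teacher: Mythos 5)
Your proof is correct, and its probabilistic core takes a genuinely different route from the paper's. Both arguments start from the same deterministic fact (the paper's ``separation between cluster centers'' bounds): a pair $(a,b)$ that $\phi_i$ and $\phi_{i'}$ order oppositely contributes at least $\big(\frac{1-q}{1+q}\big)^2$, where $q = e^{-\rho/\beta}$, to $\|\mu^{(i)}-\mu^{(i')}\|_2^2$. The paper uses this \emph{in expectation}: since the expected number of discordant pairs under two i.i.d.\ uniform permutations is $\frac12{n \choose 2}$, it gets $\bbE\,\|\mu^{(i)}-\mu^{(i')}\|_2^2 \geq \frac12{n \choose 2}\big(\frac{1-q}{1+q}\big)^2$, and then invokes concentration of the norm as a convex Lipschitz function of the permutation randomness (\citet[Theorem 7.12]{boucheron2013concentration}) with deviation $4\sqrt{n\log n}$. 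You use it \emph{pointwise}: $\|\mu^{(i)}-\mu^{(i')}\|_2^2 \geq 4\delta^2 D_{ii'}$ with $D_{ii'}$ the Kendall tau distance, and then you concentrate $D_{ii'}$ itself via its Lehmer-code representation as a sum of independent uniform variables on $\{0,\dots,j-1\}$, $j \in [n]$. Your calibration checks out: Hoeffding with $\sum_{j}(j-1)^2 \leq n^3/3$ and $t = n^{3/2}\sqrt{\log n}$ gives per-pair failure probability $n^{-6} \leq n^{-4}$, and the deviation is $\frac{4\delta t}{\sqrt{n(n-1)}} \leq 2\sqrt{2}\sqrt{n\log n} \leq 4\sqrt{n\log n}$ using $\delta \leq \frac12$; the leading terms coincide because $\bbE D_{ii'} = \frac12 {n \choose 2}$, so the union bound over ${r \choose 2} \leq r^2/2$ pairs delivers exactly the stated guarantee. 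Your route buys two things: it is more elementary (no Talagrand-type inequality), and it is logically tighter at one point where the paper is loose --- the paper must convert its lower bound on $\bbE\|\cdot\|_2^2$ into a lower bound on $\bbE\|\cdot\|_2$ before applying concentration, and the inequality it quotes for this, $\bbE\|\cdot\|_2^2 \geq (\bbE\|\cdot\|_2)^2$, runs in the wrong (Jensen) direction, so closing that step actually requires an additional variance bound extracted from the same concentration inequality. Your argument never meets this reverse-Jensen issue, since $\bbE D_{ii'}$ is computed exactly and the reduction to $D_{ii'}$ holds surely. What the paper's formulation buys in return is reusability: because it only needs the norm to be a convex Lipschitz function of independent randomness, the identical argument transfers verbatim to the Gaussian RUM case (Proposition \ref{prop:mixture_gaussian}), where the centers are randomized by i.i.d.\ hypercube utilities and no inversion-count identity is available, whereas your combinatorial step would need to be redone there.
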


%In short,
%\begin{align*}
%	\Gamma \geq C n \frac{1-\rho}{1+\rho}	\qquad\text{and}\qquad \tau \leq C' \sqrt{n}.
%\end{align*}

\paragraph{Implications.}
We interpret Corollary \ref{cor:main} in the mixture MNL setup. Suppose that $n$ is sufficiently large. 
To begin with, we observe that $s(n) \geq \frac{1}{2} \frac{ 1 - \exp ( - \rho / \beta )}{ 1 + \exp ( - \rho / \beta )}
= \omega( \sqrt{r/n} )$ if $\rho \gg - \beta \ln \big( 1 - \sqrt{r/n} \big) \approx \beta \sqrt{r/n}$.
It follows from Proposition \ref{prop:mixture_mnl} that $\Gamma \geq \frac{n}{4} \frac{ 1 - \exp ( - \rho / \beta )}{ 1 + \exp ( - \rho / \beta )}$ 
with high probability. We know that $\tau^{\star} \leq C \sqrt{n}$ for some $C > 0$ by Proposition \ref{prop:subg_rum} and hence
\eqref{eq:cor.main} is satisfied if 
\begin{equation}\label{eq:ex1.p.simplified}
	p \geq C'''  \frac{\beta}{\rho} \frac{\sqrt{r} \log n}{\sqrt{n}}.
\end{equation}
We approximated $ \frac{ 1 + \exp ( - \rho / \beta )}{ 1 - \exp ( - \rho / \beta ) } \approx \frac{\beta}{\rho}$ assuming $\rho \gg \beta$.
Note that the scaling parameter $\beta$ denotes the typical magnitude %\footnote{The variance of the Gumbel distribution is $\frac{\pi^2 \beta^2}{6}$.} 
of the Gumbel noise and therefore $\frac{\beta}{\rho}$ is the inverse signal-to-noise ratio in the MNL model. 

Therefore, we can conclude that as long as $\rho \gg \beta \sqrt{r/n}$, having 
$\Theta \big( \frac{\beta}{\rho} r^{0.5} n^{5.5} \log n \big)$ number of pairwise comparisons 
is sufficient for successful clustering; see Corollary \ref{cor:main} and Remark \ref{rem:number_comparisons}.

%\paragraph{Refined Analysis of Sample Complexity.} 
%The concrete form of mixture MNL model enables a more refined analysis for the sample complexity by directly interpreting Theorem \ref{thm:main} 
%rather than Corollary \ref{cor:main}. Recall from \eqref{eq:gap.condition} that we need $\Gamma \geq C' \frac{\tau^\star \sqrt{r} \log n}{p}$ for 
%a sufficiently large constant $C' > 0$ to apply Theorem \ref{thm:main}. Observe that when $\exp\big( -\frac{\rho}{\beta} \big) \leq 1 - C''' \frac{\tau^{\star} \sqrt{r} \log n}{p n}$, this condition is satisfied with high probability according to Proposition \ref{prop:mixture_mnl}. This sufficient condition 
%can be rewritten as $\rho \geq C'''' \beta \frac{\tau^{\star} \log n}{n p}$ or equivalently, 
%\begin{equation}\label{eq:ex1.snr}
%	p \geq C''''' \frac{\beta}{\rho}\frac{\tau^{\star} \sqrt{r}\log n}{n}.	
%\end{equation}
%Note that the sample complexity lower bound in \eqref{eq:ex1.snr} is much sharper than that in \eqref{eq:ex1.p.simplified}; specifically
%resulting in requiring $\Theta \big( \frac{\beta}{\rho} r^{0.5} n^{5} \log n \big)$ pairwise comparisons. 

%%% ======================================================================
%%%	Gaussian noise
%%% ======================================================================

\subsection{RUM with Gaussian Noise}\label{sec:example_gaussian}

\paragraph{Description of the Setup.}
Now we consider the random utility model with Gaussian noise with $\sigma_a = \sigma > 0$ for all $a \in [n]$; 
cf. Example \ref{example:gaussian}. That is, for $1 \leq a < b \leq n$ and for $\ell_1, \ell_2 \in [N]$,
\begin{align*}
	X_{(a,b)}^{\ell_1} - X_{(a,b)}^{\ell_2} %= \Ind{ Z_i^a \geq Z_j^a } - \Ind{ Z_i^b \geq Z_j^b }
		&= \Ind{ \eps_a^{(i)} - \eps_b^{(i)} \geq u_b^{(i)} - u_a^{(i)} } \\
		&\quad	- \Ind{ \eps_a^{(j)} - \eps_b^{(j)} \geq u_b^{(j)} - u_a^{(j)} }
\end{align*}
where $i = \alpha^*(\ell_1)$, $j = \alpha^*(\ell_2)$ are shorthands. Observe that $\eps_a^{(i)} - \eps_b^{(i)}, \eps_a^{(j)} - \eps_b^{(j)}  
\sim \cN(0, 2\sigma^2 )$, i.e., the normal distribution with mean $0$ and variance $2\sigma^2$. 

Suppose that for each $i \in [k]$, the deterministic utility $u^{(i)} \in \{ \pm \frac{1}{2} \}^n$ is chosen from the hypercube uniformly at random. 
Precisely, $u_a^{(i)}$ are i.i.d. random variables taking value$\frac{1}{2}$ or $-\frac{1}{2}$ with probability 
$\frac{1}{2}$, respectively.
%Since $\sigma_i = \sigma > 0$ for all $i \in [n]$, 

\paragraph{Separation between Cluster Centers.}
For $x \in \Reals$, let $\Phi(x) = \int_{-\infty}^{x} \frac{1}{\sqrt{2 \pi}} e^{-\frac{t^2}{2}} dt$ denote the cumulative distribution 
function of the standard normal distribution. Under the setup described above, we obtain a high-proability lower bound on $\Gamma$ as follows.

\begin{proposition}\label{prop:mixture_gaussian}
With probability at least $1 - \frac{r^2}{n^4}$,
\begin{equation}\label{eqn:gamma_lower_gaussian}
	\Gamma \geq  \frac{\sqrt{n(n-1)}}{\sqrt{2}}\bigg( \Phi \Big( \frac{1}{\sigma \sqrt{2}} \Big) - \frac{1}{2} \bigg)
		 - 4 \sqrt{n \log n}.
\end{equation}
\end{proposition}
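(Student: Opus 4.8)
The plan is to first evaluate each component center in closed form and then reduce the squared separation to a concentrated quadratic form. Fix a pair of distinct components $i \neq i'$. Since $\eps_a - \eps_b \sim \cN(0, 2\sigma^2)$, the center of the $i$-th component at coordinate $(a,b)$ is
\[
	\mu^{(i)}_{(a,b)} = \Prob{\eps_a - \eps_b \geq u_b^{(i)} - u_a^{(i)}} - \tfrac12 = \Phi\!\left(\frac{u_a^{(i)} - u_b^{(i)}}{\sigma\sqrt2}\right) - \tfrac12 .
\]
Because $u^{(i)} \in \{\pm\frac12\}^n$, we have $u_a^{(i)} - u_b^{(i)} \in \{-1,0,1\}$, so writing $\delta \triangleq \Phi(\frac{1}{\sigma\sqrt2}) - \frac12 \in (0, \frac12)$ the center collapses to the exact linear form $\mu^{(i)}_{(a,b)} = \delta\,(u_a^{(i)} - u_b^{(i)})$. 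Introducing the difference operator $B \in \Reals^{\binom n2 \times n}$ with $(Bx)_{(a,b)} = x_a - x_b$ and $v \triangleq u^{(i)} - u^{(i')}$, this gives $\mu^{(i)} - \mu^{(i')} = \delta\, Bv$ and hence
\[
	\big\| \mu^{(i)} - \mu^{(i')} \big\|_2^2 = \delta^2\, v^\top (B^\top B)\, v = \delta^2 \big( n \|v\|_2^2 - (\mathbf 1^\top v)^2 \big),
\]
since $B^\top B = nI - \mathbf 1 \mathbf 1^\top$ is the Laplacian of the complete graph $K_n$.

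Next I would analyze the scalar $D^2 \triangleq n\|v\|_2^2 - (\mathbf 1^\top v)^2 = nS - W^2$, where $S \triangleq \|v\|_2^2 = \#\{a : u_a^{(i)} \neq u_a^{(i')}\}$ and $W \triangleq \mathbf 1^\top v$. The key observation is that the coordinates $v_a = u_a^{(i)} - u_a^{(i')}$ are \emph{i.i.d.} across $a \in [n]$ with $\Prob{v_a = 0} = \frac12$ and $\Prob{v_a = \pm 1} = \frac14$; in particular $S \sim \mathrm{Bin}(n, \tfrac12)$ and $W$ is a centered sum of i.i.d. variables bounded in $[-1,1]$ with $\bbE[W^2] = n/2$. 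This yields $\bbE[D^2] = n\cdot\frac n2 - \frac n2 = \frac{n(n-1)}{2}$, matching $\binom n2$, so the leading term $\frac{\sqrt{n(n-1)}}{\sqrt2}\,\delta$ is exactly $\delta\sqrt{\bbE[D^2]}$.

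To convert this into a high-probability lower bound on $D$ itself I would concentrate $S$ and $W$ separately. A Chernoff/Hoeffding bound for the binomial gives $S \geq \frac n2 - c_1\sqrt{n\log n}$ and Hoeffding for the bounded sum gives $W^2 \leq c_2\, n\log n$, each with probability at least $1 - \frac{1}{2n^4}$. Plugging in, $D^2 \geq \frac{n(n-1)}{2} - O(n^{3/2}\sqrt{\log n})$, and then $\sqrt{A - B} \geq \sqrt A - B/\sqrt A$ with $A = \frac{n(n-1)}{2}$ turns the $O(n^{3/2}\sqrt{\log n})$ deficit in $D^2$ into an $O(\sqrt{n\log n})$ deficit in $D$, say $D \geq \frac{\sqrt{n(n-1)}}{\sqrt 2} - 8\sqrt{n\log n}$ (the constant being loose). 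Multiplying by $\delta$ and invoking $\delta \leq \frac12$ \emph{only on the deviation term} removes the $\delta$ factor there, giving $\|\mu^{(i)} - \mu^{(i')}\|_2 \geq \frac{\sqrt{n(n-1)}}{\sqrt2}\,\delta - 4\sqrt{n\log n}$ for this fixed pair. Finally I would union bound over the at most $\binom r2 \leq r^2$ pairs of distinct centers, so that the bound holds simultaneously with probability at least $1 - \frac{r^2}{n^4}$, and take the minimum to conclude.

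The main obstacle is the square-root passage in the last paragraph: naively bounding the fluctuation of $D = \|\mu^{(i)} - \mu^{(i')}\|_2/\delta$ directly (e.g., by McDiarmid applied to the Euclidean norm) produces a deviation that still carries a $\delta$ factor and is loose by a $\sqrt n$ factor. The decomposition $D^2 = nS - W^2$ is what makes the tails tractable, since it isolates the dominant $nS$ fluctuation of order $n^{3/2}\sqrt{\log n}$, which is precisely what shrinks to $O(\sqrt{n\log n})$ after taking the square root; and the final use of $\delta \leq \frac12$ on the deviation (but not on the main term) is what matches the stated $\delta$-free error $4\sqrt{n\log n}$.
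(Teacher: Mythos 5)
Your proof is correct, and it takes a genuinely different route from the paper's. The paper stays coordinate-wise and generic: it computes the three-valued distribution of $\big| \Phi\big( \tfrac{u_j^a - u_i^a}{\sigma\sqrt2} \big) - \Phi\big( \tfrac{u_j^b - u_i^b}{\sigma\sqrt2} \big) \big|$ (values $2\delta$, $\delta$, $0$ with probabilities $\tfrac18, \tfrac12, \tfrac38$, writing $\delta \triangleq \Phi(\tfrac{1}{\sigma\sqrt2}) - \tfrac12$), deduces $\bbE \| \mu^{(i)} - \mu^{(i')} \|_2^2 = {n \choose 2}\delta^2$ exactly, and then applies concentration for convex Lipschitz functions of bounded independent variables to the norm itself, centered at its mean, with tail $2\exp(-t^2/(4n))$. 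You instead exploit the exact linearization $\mu^{(i)} = \delta\, B u^{(i)}$ (valid only because $u_a^{(i)} - u_b^{(i)} \in \{-1,0,1\}$ and $\Phi(x)-\tfrac12$ is odd), the complete-graph Laplacian identity $B^\top B = nI - \mathbf{1}\mathbf{1}^\top$, and scalar Hoeffding bounds on $S$ and $W$, concentrating the \emph{squared} distance and passing to the distance deterministically via $\sqrt{A-B} \geq \sqrt{A} - B/\sqrt{A}$. Each approach buys something. Yours is more elementary, and it is logically tighter at one point: the paper's final step substitutes $\sqrt{\bbE\|\mu^{(i)}-\mu^{(i')}\|_2^2}$ for the centering quantity $\bbE\|\mu^{(i)}-\mu^{(i')}\|_2$, justified by $\bbE\|\cdot\|_2^2 \geq (\bbE\|\cdot\|_2)^2$ --- but that inequality shows the root of the second moment is an \emph{upper} bound on the mean, not a lower bound, so an additional argument (e.g., a variance bound on the norm, absorbable into the $4\sqrt{n\log n}$ slack) is implicitly required there. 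By concentrating $D^2 = nS - W^2$ directly and never centering at the first moment, your route sidesteps this issue entirely. What the paper's method buys is generality: it makes no use of the linear structure, which is special to hypercube utilities with Gaussian noise, and the identical template is reused verbatim for the MNL mixture (Proposition \ref{prop:mixture_mnl}), where no such linearization exists. Finally, your two closing devices --- isolating the dominant $nS$ fluctuation so that the square root shrinks the $O(n^{3/2}\sqrt{\log n})$ deficit in $D^2$ to $O(\sqrt{n\log n})$ in $D$, and spending $\delta \leq \tfrac12$ only on the deviation term --- are exactly what is needed to reproduce the $\delta$-free error term $4\sqrt{n\log n}$, so your conclusion matches the stated bound, including the union bound over at most ${r \choose 2} \le r^2/2$ pairs yielding the claimed probability.
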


%\begin{remark}\label{rem:mixture_gaussian}
%We have
%$\Phi \Big( \frac{1}{\sigma \sqrt{2}} \Big) - \frac{1}{2} \geq 
%\max \big\{ \frac{1}{ \sigma} \exp \left( - \frac{1}{4\sigma^2} \right),$ $\frac{1}{2} - \frac{ \sigma}{\sqrt{\pi}} \exp \left( - \frac{1}{4 \sigma^2} \right) \big\}$. 
%When $\sigma$ is large, the first term in $\max\{ \cdot \}$ gives a sharper lower bound, whereas the second term 
%provides a tighter lower bound when $\sigma$ is small.
%\end{remark}

\paragraph{Implications.}
Suppose that $n$ is sufficiently large and $\sigma$ is a constant that does not scale with $n$. 
%First of all, we apply Corollary \ref{cor:main} for this setup. Observe that $s(n) = \omega\big( \sqrt{r/n} \big)$ when $r = o(n)$. 
Assuming $\sigma = \Theta(1)$, $\Gamma \geq C \frac{n}{\sigma}$ with high probability by Proposition \ref{prop:mixture_gaussian}. 
%and Remark \ref{rem:mixture_gaussian}. 
Since $\tau^{\star} \leq C \sqrt{n}$ for some $C > 0$ by Proposition \ref{prop:subg_rum}, 
\eqref{eq:cor.main} is satisfied if 
\begin{equation*}\label{eq:ex1.p.simplified_gauss}
	p \geq C' \sigma \frac{\sqrt{r} \log n}{\sqrt{n}}.
\end{equation*}
This implies having $\Theta \big( \sigma r^{0.5} n^{5.5} \log n \big)$ number of pairwise comparisons is sufficient for successful 
clustering of rankings from a mixture of Gaussian RUMs; cf. Corollary \ref{cor:main} and Remark \ref{rem:number_comparisons}.
%We note here that there is a chance a tighter sample complexity lower bound like \eqref{eq:ex1.snr} is attainable.

%Again, we provide a more refined analysis on the sample complexity utilizing the concrete model. Recall from \eqref{eq:gap.condition} 
%that we need $\Gamma \geq C'' \frac{\tau^{\star} \sqrt{r} \log n}{p}$ for a sufficiently large constant $C'' > 0$ to apply Theorem \ref{thm:main}. 
%As in Section \ref{sec:example_mnl}, we want to find an upper bound on $\sigma$ (or equivalently, the lower bound on the SNR) and 
%the lower bound on $p$. 
%Note that this condition is satisfied when $\sigma = O \Big(\frac{n p}{ \tau^\star \sqrt{r} \log n} \Big)$.
%%This condition is satisfied when $  \frac{1}{\sigma} \exp \big( - \frac{1}{4\sigma^2} \big) \geq \frac{r \log^2 n}{n p^2}$. 
%
%Thus, our analysis shows that to apply Theorem \ref{thm:main}, the fraction of observations, $p$, must satisfy
%\begin{align}\label{eq:ex2.snr}
%	p \geq C''' \frac{\sigma \tau^\star \sqrt{r} \log n}{n}.	
%\end{align}

%%% ======================================================================
%%%	Experiments
%%% ======================================================================

\section{EXPERIMENTS}\label{sec:experiments}
We run experiments on synthetic datasets to show the efficacy of the proposed clustering algorithm. 

\noindent{\em Parameters.} The datasets are generated using the following parameters: 
$n$ is the number of options; $k$ is the number of RUM components in the mixture; 
$\lambda$ is the expected number of rankings per component; and 
$\beta / \sigma$ represents the noise level.
%\begin{align*}
%	&\bullet~ n: &&\text{number of options}\\
%	&\bullet~ k: &&\text{number of RUM components in the mixture}\\
%	&\bullet~ \lambda: &&\text{expected number of rankings per component}\\
%	&\bullet~ \beta / \sigma: &&\text{noise level}
%\end{align*}

\noindent{\em Data generation process.}
We generate synthetic datasets as follows: 
(1) For $i \in [k]$, draw $u^{(i)} \sim \cN(0, I_n)$ and $N_i \sim \text{Poisson}(\lambda)$, i.i.d.; 
and (2) for $i \in [k]$, instantiate $Z_1, \ldots, Z_n $ according to MNL model (or Gaussian RUM) with utility $u^{(i)}$ 
and noise parameter $\beta$ (or $\sigma$) i.i.d. $N_i$ times.   

%\subsection{The Proposed Algorithm Works}\label{sec:exp_works}

%\paragraph{Experiment 1: Algorithm \ref{alg:main_algorithm} Works.}
\noindent{\bf Experiment 1: Algorithm \ref{alg:main_algorithm} Works.}
First of all, we exhibit how the proposed clustering algorithm works at different noise levels. We generate synthetic datasets according 
to the Gaussian noise model with parameters $n = 30$, $k=100$, $\lambda = 50$ at three different noise levels $\sigma \in \{ 0.3, 0.5, 1.0\}$. 

\begin{figure}[h]
	\centering
	\includegraphics[width=0.75\linewidth]{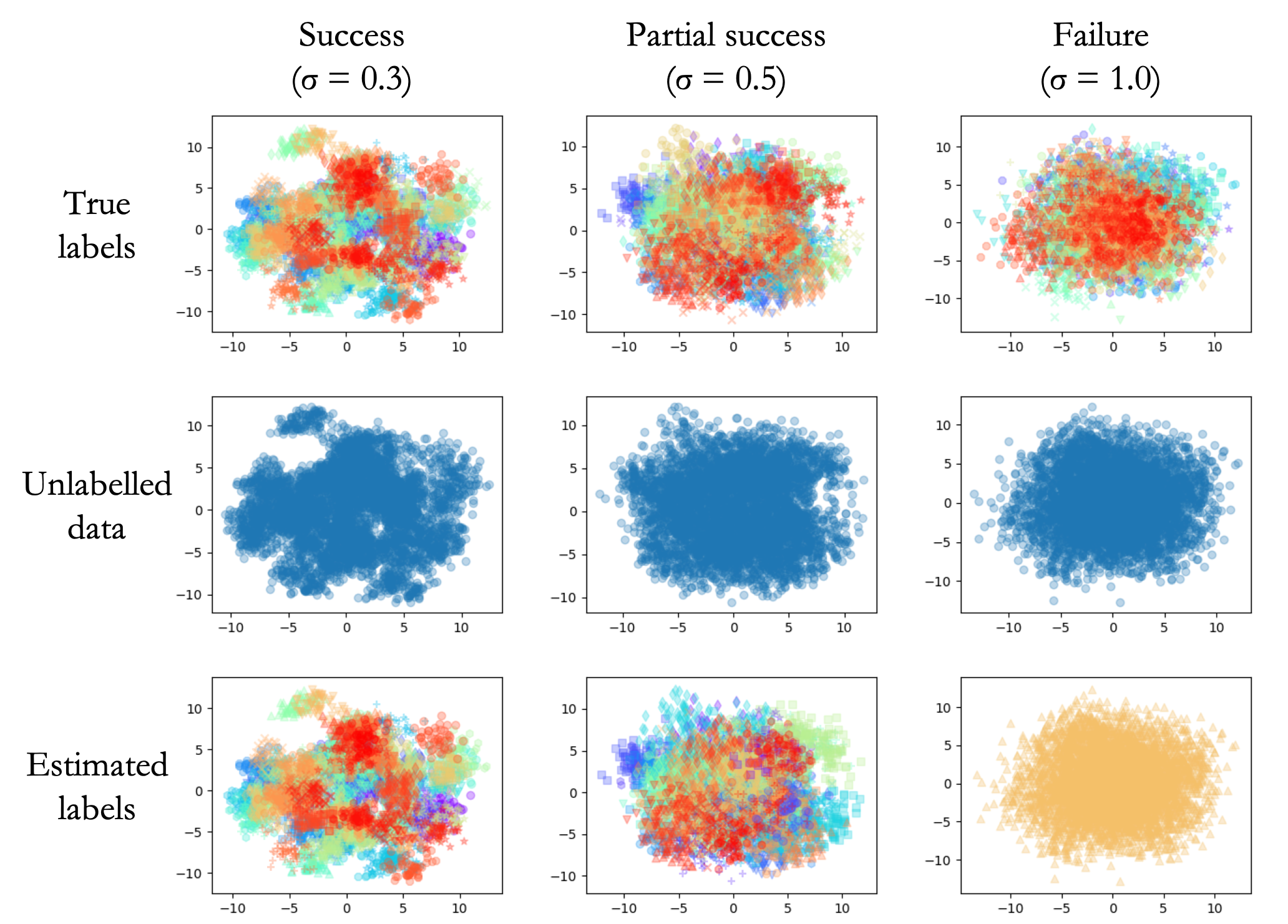}
	\caption{Illustration of how Algorithm \ref{alg:main_algorithm} works for Gaussian noise model at three noise levels. }
	\label{fig:algorithm_works}
\end{figure}

\begin{figure}[h]
	\centering
	\includegraphics[width=0.75\linewidth]{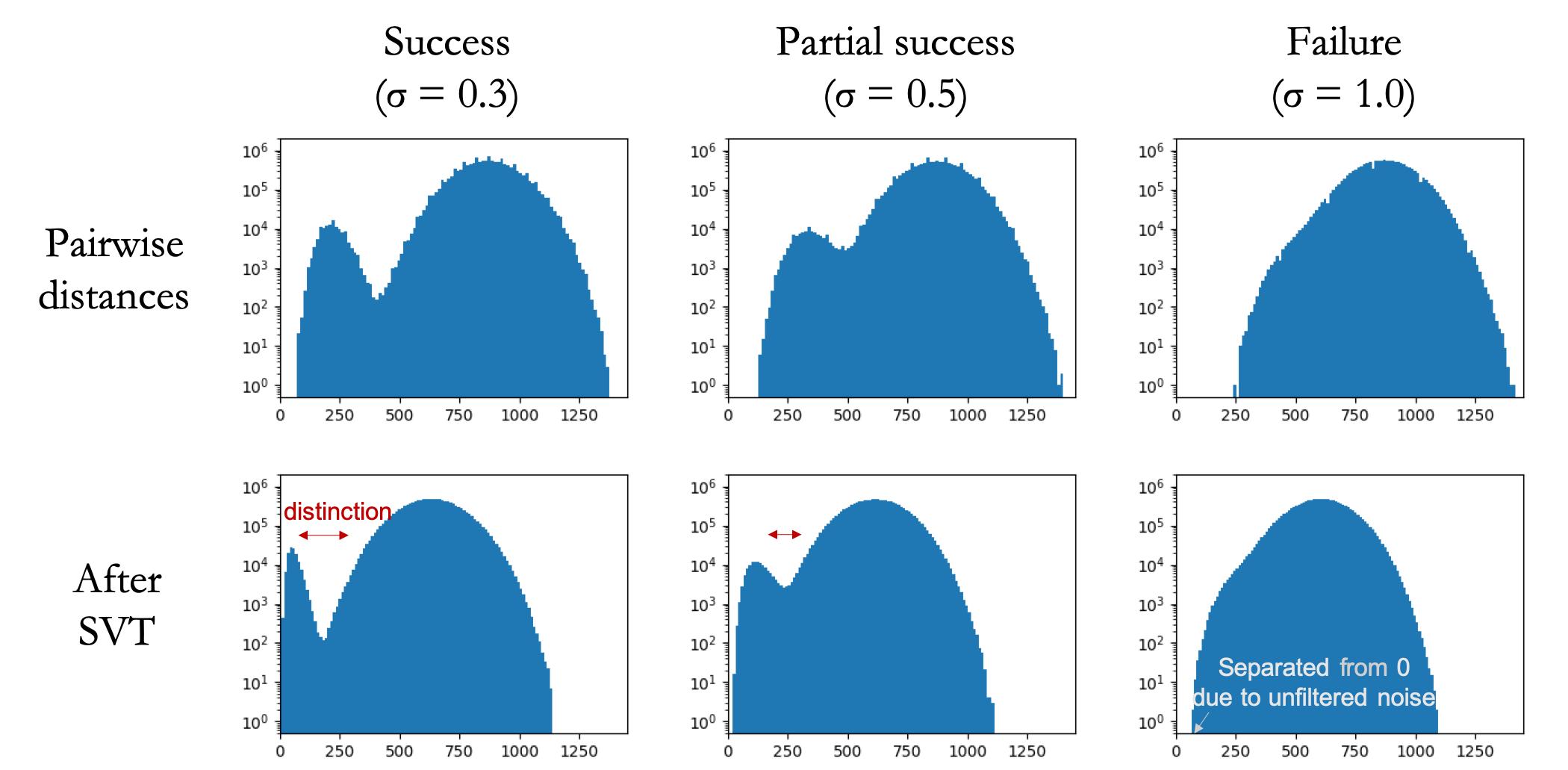}
	\caption{Distributions of pairwise distances in three scenarios: before SVT (top) and after SVT (bottom). }
	\label{fig:pairwise_dist}
\end{figure}

Figure \ref{fig:algorithm_works} illustrates the results for three possible scenarios, namely, success ($\sigma = 0.3$), 
partial success ($\sigma=0.5$), and failure ($\sigma = 1$). The first row in Figure \ref{fig:algorithm_works} displays 
the ranking profiles with color-coded labels (=index of the cluster) in the format of ${30 \choose 2}$-dimensional pairwise 
marginals projected onto the 2D plane of top-$2$ principal components. Our algorithm sees unlabelled datasets 
as depicted in the second row and outputs estimated labels as in the third row. 

Recall that Algorithm \ref{alg:main_algorithm} classifies rankings based on pairwise distance. Figure \ref{fig:pairwise_dist} 
displays the histogram of pairwise $\ell_2$ distances (squared) between the ranking profiles for each cases in the semi-log scale.
We observe that SVT shaves off the noise contribution in the pairwise distance, thereby shifting the peaks closer to $0$. 
In the success ($\sigma=0.3$) case, there is a significant separation between the distribution of intra-cluster distances (left peak) 
and the distribution of inter-cluster distances (right peak) so that the algorithm easily reconstructs the clusters. On the other hand, 
there is no distinction between the peaks in the failure ($\sigma=1.0$) case. Moreover, we expect the left end of the distance 
distribution to touch $0$ when $N$ is sufficiently large. The gap between the left end and $0$ in the bottom right plot suggests 
the contribution of residual noise, possibly due to imperfect identification of principal components caused by insufficient SNR.

%\paragraph{Experiment 2: Overcoming the Missing Values.}
\noindent{\bf Experiment 2: Overcoming the Missing Values.} 
Next, we show Algorithm \ref{alg:main_algorithm} can cluster the ranking profiles even when some pairwise comparisons are absent. 
We consider Gaussian RUM mixture instances with $n = 30$, $k=2$ and $\lambda = 500$. Our algorithm has access to 
$p \in [0,1]$ fraction of the data of pairwise marginals, cf. see \eqref{eq:YnX}. We measure the performance of the algorithm 
using the misclassification rate 
$\Risk(\halpha) = \min_{\sigma \in S_{2}} \frac{1}{N} \sum_{\ell=1}^N \Ind{ \halpha(\ell) \neq \alpha^*(\sigma(\ell)) }$. This
is plotted in Figure \ref{fig:tau_estimates} as we vary $p$. 
Ideally, we expect $\Risk(\halpha) = 0$ if the algorithm succeeds in perfect clustering and $\Risk(\halpha) \approx 0.5$ 
when the algorithm completely fails in extracting any information about the underlying RUM components. As seen
from Figure \ref{fig:tau_estimates}, the misclassification rate $\to 0.5$ as $p \to 0$, and the rate goes to $0$ as $p$
increases with exception of the setting with noise is very high $\sigma = 1$ or $n$ is small, i.e. $n = 20$. 

\begin{figure}[h]
\centering
	\begin{subfigure}{0.45\linewidth}
		\centering
		\includegraphics[width=\linewidth]{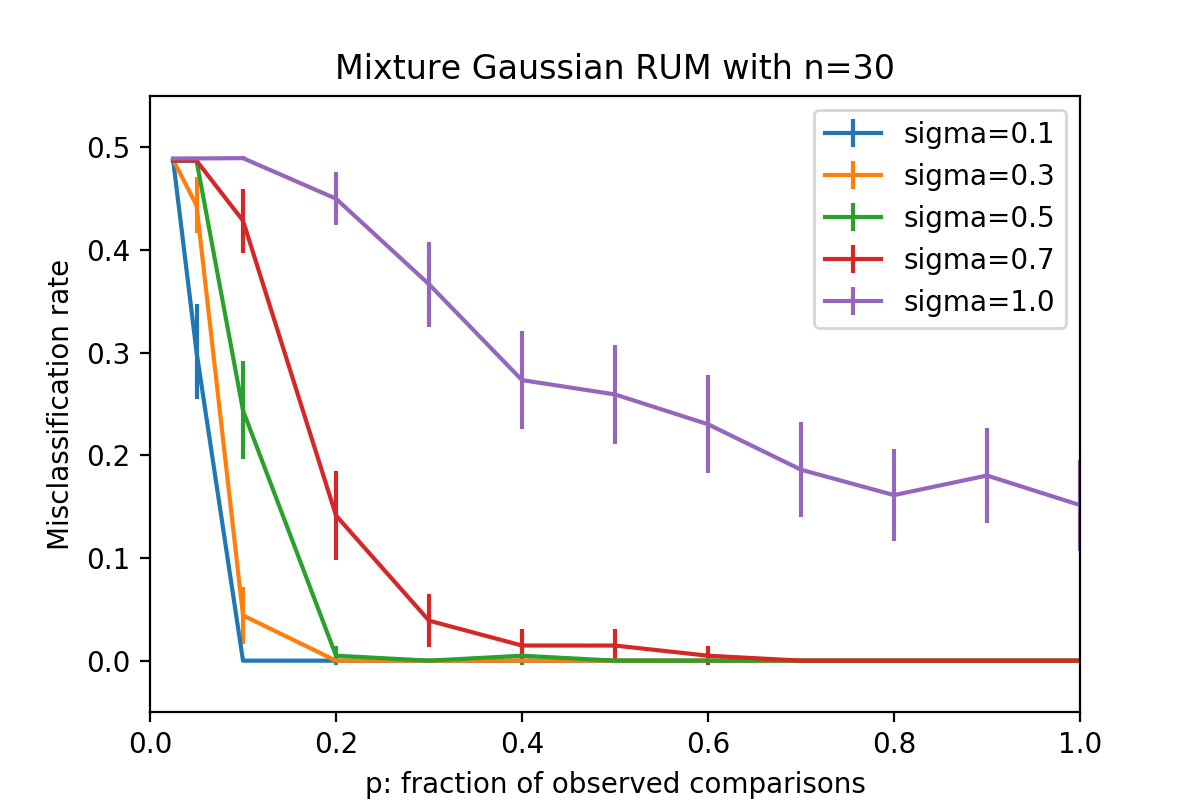}
%		\caption{Misclassification rate vs $p$ at various $\sigma$}
%		\label{fig2:mnl}
	\end{subfigure}
	\hfill
	\begin{subfigure}{0.45\linewidth}
		\centering
		\includegraphics[width=\linewidth]{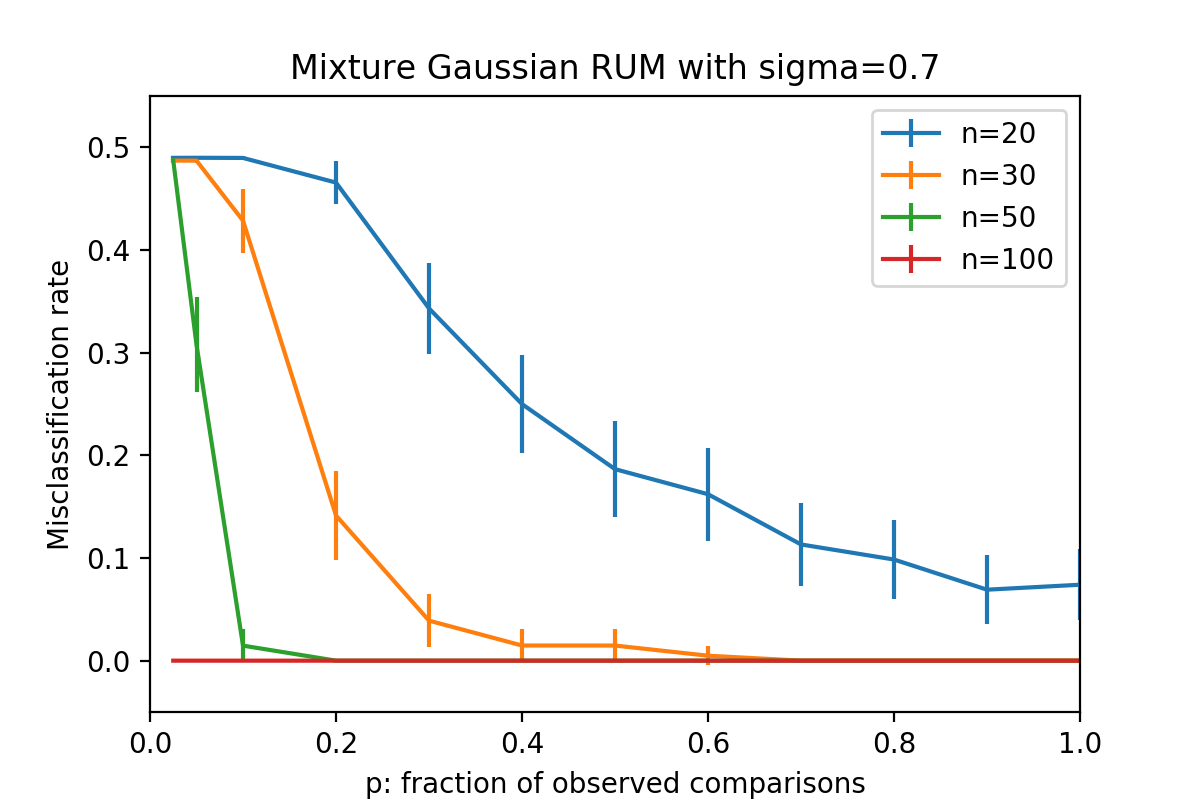}
%		\caption{Misclassification rate vs $p$ for various $n$}
%		\label{fig2:gaussian}
	\end{subfigure}
	\caption{Error rate of Algorithm \ref{alg:main_algorithm} on partially observed data from Gaussian RUM mixtures: 
		various $\sigma$ with $n=30$ (left); various $n$ with $\sigma=0.7$ (right).}
	\label{fig:tau_estimates}
\end{figure}

%\paragraph{Experiment 3: Estimates for $\tau^*$.}
\noindent{\bf Experiment 3: Estimates for $\tau^*$.}
Recall that our results (Theorem \ref{thm:main} and Corollary \ref{cor:main}) depend on the sub-Gaussian parameter $\tau$ (see Definition \ref{def:subG}). 
It is known that the sub-Gaussian parameter is equivalent to the Orlicz $\psi_2$ norm\footnote{see Definition \ref{defn:psi2} in Appendix 
\ref{sec:preliminaries_subG}}, i.e., $\tau = c \| W - \bbE W \|_{\psi_2}$ for an absolute constant $c > 0$. We estimate the $\psi_2$ norm 
by replacing the expectation in Definition \ref{defn:psi2} with the empirical mean from 1000 randomly generated samples.

This procedure is repeated $100$ times for each pair of $\beta$ (or $\sigma$) ranging from $0.05$ to $1.0$ and $n$ from $2$ to $200$. 
The results are summarized in Figure \ref{fig:tau_estimates}. For both MNL and Gaussian RUMs, the slope tends to $0.5$ as $n$ grows. 
This trend suggests that the growth rate of $\sqrt{n}$ is correct when the noise level is high compared to the difference between utilities, 
i.e., when $|\eps_a| \approx \min | u_a - u_b|$. However, the rapid drop on the left end also suggests $\tau^*$ can be significantly smaller than 
the generic upper bound when $|\eps_a| \ll \min | u_a - u_b|$.

\begin{figure}[h]
\centering
	\begin{subfigure}{0.45\linewidth}
		\centering
		\includegraphics[width=\linewidth]{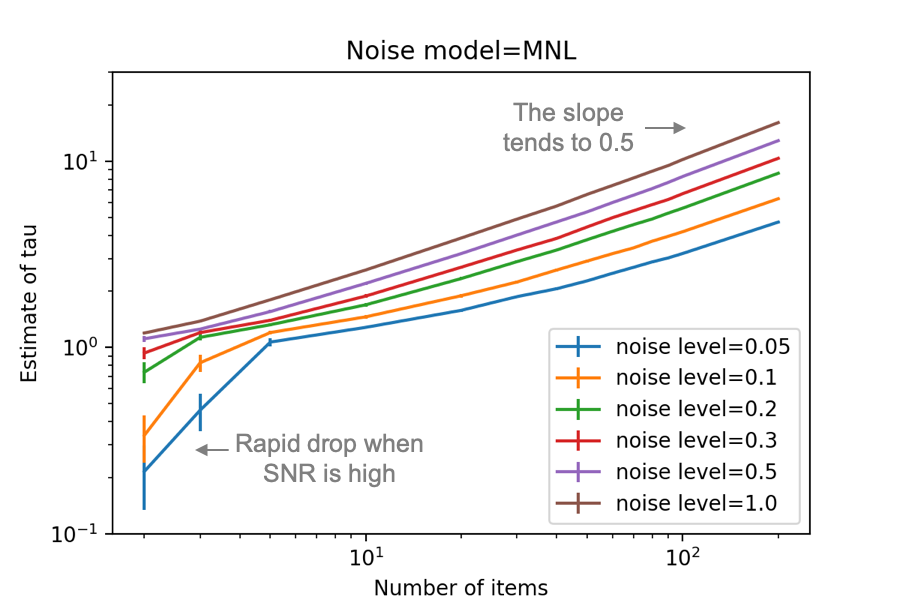}
%		\caption{Estimates of $\tau$ for MNL model}
%		\label{fig2:mnl}
	\end{subfigure}
	\hfill
	\begin{subfigure}{0.45\linewidth}
		\centering
		\includegraphics[width=\linewidth]{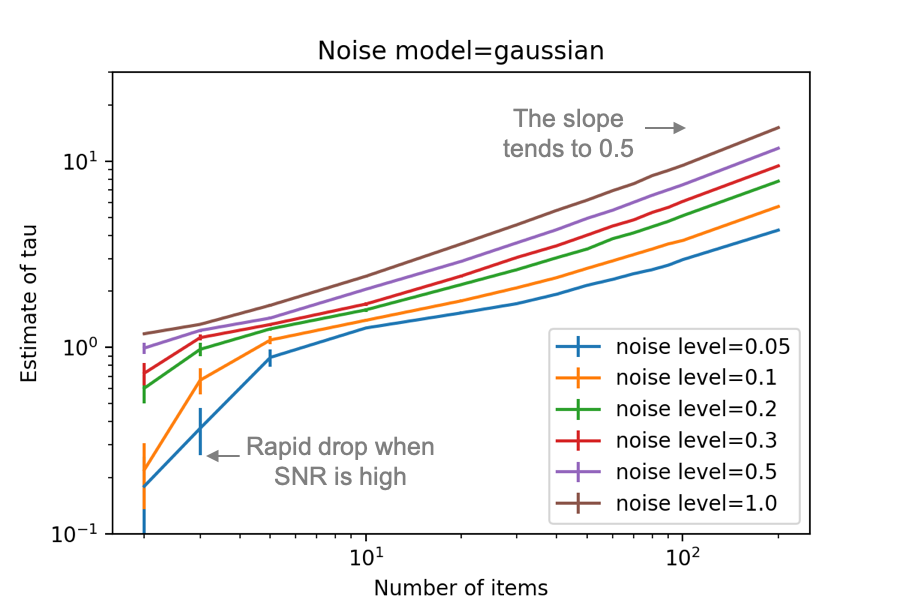}
%		\caption{Estimates of $\tau$ for Gaussian RUM}
%		\label{fig2:gaussian}
	\end{subfigure}
	\caption{Log-log plots of the estimated value of $\tau$ for MNL models (left) and Gaussian RUMs (right).}
	\label{fig:tau_estimates}
\end{figure}

%%% ======================================================================
%%%	Discussion
%%% ======================================================================

\section{DISCUSSION}\label{sec:discussion}
We proposed a procedure to reduce the challenging task of learning mixture RUMs to simpler problems of single RUM learning. 
The procedure is inspired by spectral clustering approach for mixture learning in vector space and we provide theoretical guarantees 
on the statistical and computational performance of an explicit instance described in Algorithm \ref{alg:main_algorithm}. For example, 
we identify a sufficient condition for successful clustering, cf. Theorem \ref{thm:main} and \eqref{eq:gap.condition}, which suggests 
the tradeoff between intrinsic separability of the model ($\Gamma$) and the required fraction of data ($p$). We conclude with a few remarks on possible next steps.

\paragraph{Lower Bound on $p$.}
Our results, e.g., \eqref{eq:ex1.p.simplified}, suggests $p = \tilde{\Omega}( \sqrt{r/n})$ is sufficient for clustering mixture RUMs. 
However, we do not know whether this is necessary, or $p$ can scale as small as the order of $r/n$. Confirming a lower bound can 
be an interesting direction for future work. 

\paragraph{Use of Other Matrix Estimation Methods.}
We deploy PCA (SVT) in this work to recover the hidden structure in the mean matrix $M$, but it is be possible to utilize 
other types of matrix estimation methods. Obtaining provable guarantees for such methods, e.g., low-rank matrix estimation 
with nuclear norm penalization or collaborative filtering, could be interesting. 

\paragraph{Beyond the Requirement of $\tilde{\Omega}(\sqrt{r})$ Separation.}
Note that $p$ cannot exceed $1$ and therefore Theorem \ref{thm:main} is meaningful only when $\Gamma = \tilde{\Omega}(\tau^\star \sqrt{r})$, 
cf. \eqref{eq:gap.condition}. This condition is consistent with the well-known $\sqrt{r}$ barrier in second-order methods for spectral clustering. 
In fact, if the objective is in parameter learning rather than clustering, one may break this barrier via ideas from robust statistics, e.g., SoS certificates. 
It would be interesting to adapt the techniques to learning a heterogeneous mixture of generic RUMs, which we consider to be a semiparametric model.

\bibliography{bibliography}
\newpage

\onecolumn
\appendix

%%% ======================================================================
%%%	Notation
%%% ======================================================================
\section{Notation}

\medskip
\noindent{\em Generic.} For a positive integer $n$, we let $[n] \triangleq \{ 1, \ldots, n \}$. Given $a, b \in \Reals$, we let 
$a \vee b = \max\{a, b\}$ and $a \wedge b = \min\{a, b \}$. $\| \cdot\|_{2}$ denotes 
the $\ell_2$ norm for a vector, and the spectral norm for a matrix. For a random variable and 
a random vector, $\|\cdot\|_{\psi_2}$ denotes the Orlicz $\psi_2$ norm. We let $C$ denote some constant 
and the exact value of $C$ can change from line to line.

\medskip
\noindent {\em Notational change for simplicity.}  For ease of notation, through out the technical sections, 
we shall use $d = {n \choose 2}$ to represent the 
dimensionality of permutation of $n$ items embedded in pair-wise comparison representation. That is, we may 
represent $Y$ (and $X$, $M$) as $N \times d$ dimensional matrix for simplicity rather than $N \times {n \choose 2}$
dimensional. Also, for notational convenience, we shall denote the $Y_{ij}$ for $i \in [N], j \in [d]$ to represent
the pair-wise ordering of a pair of items in $[n]$, indexed as $j$ amongst $d = {n \choose 2}$ possible different
pairs, in the $i$th permutation sampled as described in Section \ref{sec:model}. This is in contrast to
indexing elements of permutation by tuple $(a, b)$ for representing pair-wise comparison between $a, b \in [n]$. 

\medskip
\noindent {\em And some.}  For matrix  $X \in \Reals^{N \times d}$, we write the Singular Value Decomposition (SVD) 
of $X$ as $X = U \Sigma V^T$, where  $\Sigma = diag\big( \sigma_1(X), \ldots, \sigma_{\min\{N, d\}}(X) \big)$ 
with the singular values of $X$, $\sigma_1(X) \geq \sigma_2(X) \geq \ldots \geq \sigma_{\min\{N, d\}}(X)$ in 
the descending order.

For each $i \in [N]$, we let $X^i = X_{i*}$ denote the $i$-th row of the matrix $X$, viewed as a row-vector in 
$\Reals^{1 \times d}$. For each $j \in [d]$, we let $X_j = X_{*j}$ denote the 
$j$-th column of the matrix $X$, viewed as a column-vector in $\Reals^{N \times 1}$.
When $X$ is viewed as a random matrix, we reserve $M$ to denote its expectation, i.e., $M = \Exp{X}$. 
Then, $M^i$ is the $i$th row of $M$. We reserve $r$ to denote the rank of $M$.

%\input{contents/Appx02_key_lemma_and_proof}
%%% ======================================================================
%%%	Effect of Missing Values
%%% ======================================================================

\section{Technical Result: Effect of Missing Values}\label{sec:missing_values}

Recall definition of matrix $Y$ which is obtained by stacking samples from mixture of RUM
with missing values as described in Section \ref{sec:model}. The $X$ is random matrix where 
all entries are observed, i.e. $Y$ is randomly (Bernoulli($p$)) masked version of $X$, cf. \eqref{eq:YnX}. Recall
that matrix $M = \Ex{X}$. By definition, $\Ex{Y} = p \Ex{X} = pM$. Now $Y = pM + \big( Y - pM \big)$. 
We show that the perturbation matrix matrix $Y - pM$ has independent, mean zero, sub-gaussian rows.

%%% ======================================================================
%%%	Basic Statistics of  $Y^i - pM^i$
%%% ======================================================================

\subsection{Mean and Covariance of $Y - pM$}
The following two lemmas are consequences of the i.i.d. Bernoulli mask in our model.
\begin{lemma}\label{lem:mean_Y}
When $\star$ is replaced with $0$,
$\Exp{Y - pM} = 0.$
\end{lemma}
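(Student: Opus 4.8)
The plan is to exploit the product structure created by the independent Bernoulli mask together with the convention that $\star$ is replaced by $0$. First I would write the masking operation explicitly as a multiplicative indicator: for each $i \in [N]$ and $j \in [d]$, introduce $B_{ij} \sim \mathrm{Bernoulli}(p)$, mutually independent and independent of the data matrix $X$, and observe that the $0$-fill rule makes
\[
Y_{ij} = B_{ij}\, X_{ij},
\]
since $Y_{ij} = X_{ij}$ when the $(i,j)$ entry is revealed ($B_{ij}=1$) and $Y_{ij}=0$ otherwise ($B_{ij}=0$). This is precisely where the $0$-fill convention is used: had missing entries been replaced by any value other than $0$, this clean multiplicative form would fail and the centering would no longer be exactly $pM$.

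Next I would take expectations entrywise. Since $B_{ij}$ is independent of $X$ (and of the cluster assignment $\alpha^*$), linearity and independence give
\[
\Exp{Y_{ij}} = \Exp{B_{ij}\, X_{ij}} = \Exp{B_{ij}}\,\Exp{X_{ij}} = p\, \Exp{X_{ij}} = p\, M_{ij},
\]
where the last equality is the definition $M = \Exp{X}$. Collecting these identities over all $i,j$ yields $\Exp{Y} = pM$, and hence $\Exp{Y - pM} = 0$, as claimed.

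The only point requiring care — and it is genuinely minor — is to be explicit about which sources of randomness the expectation integrates over. The mask $\{B_{ij}\}$ is independent of everything else, so the factorization $\Exp{B_{ij} X_{ij}} = \Exp{B_{ij}}\,\Exp{X_{ij}}$ is valid whether or not one conditions on $\alpha^*$; in particular the conclusion holds both for the conditional mean $M = \Exp{X \mid \alpha^*}$ and for the unconditional mean. There is no real obstacle here: the lemma is an immediate consequence of the independence of the mask and the $0$-fill convention, and its role is simply to record that $pM$ is the correct centering for the perturbation analysis of $Y - pM$ carried out in the subsequent lemmas.
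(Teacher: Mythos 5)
Your proof is correct and matches the paper's own argument essentially verbatim: both write $Y_{ij}$ as the product of $X_{ij}$ with an independent Bernoulli indicator, factor the entrywise expectation using that independence, and conclude $\Exp{Y_{ij}} = pM_{ij}$. Your additional remark about the expectation being valid conditionally on $\alpha^*$ is a fair clarification of the paper's notation but does not change the argument.
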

\begin{proof}
	Recall that $Y_{ij} = X_{ij} \Ind{E_{ij}=1}$ for each $(i,j) \in [N] \times [d]$. 
	Now observe that for all $(i,j) \in [N] \times [d]$,
	\begin{align*}
		\Exp{ Y_{ij} - pM_{ij}} 
			&= \Exp{ X_{ij} \Ind{E_{ij} = 1} - pM_{ij} }\\
			&= \Exp{X_{ij} } \Exp{\Ind{E_{ij} = 1} } - p M_{ij}\\
			&= 0.
	\end{align*}
	We used the independence between $X_{ij}$ and $E_{ij}$ % (see generative model; Figure \ref{fig:data_generation}) 
	as well as the fact that $ \Exp{X_{ij} } = M_{ij}$ and $\Exp{\Ind{E_{ij} = 1} } = \Prob{E_{ij} = 1} = p$.
\end{proof}

\begin{lemma}\label{lem:cov_Y}
When $\star$ is replaced with $0$,
	\begin{align*}
		 \Exp{(Y - pM)^T (Y - pM)}	
		 	&= p(1-p) diag\big( \Exp{X^T X} \big) + p^2 \Exp{ \big(X - M \big)^T \big(X - M \big) }.
	\end{align*}
\end{lemma}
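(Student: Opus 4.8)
The plan is to compute the $(j, j')$ entry of the matrix $\Exp{(Y - pM)^T (Y - pM)}$ directly from the entrywise model $Y_{ij} = X_{ij} \Ind{E_{ij} = 1}$ and then recognize the resulting expression as the claimed sum of a diagonal term and a full covariance term. First I would write
\[
	\big[ (Y - pM)^T (Y - pM) \big]_{j j'} = \sum_{i=1}^N (Y_{ij} - pM_{ij})(Y_{ij'} - pM_{ij'}),
\]
so that after taking expectations it suffices to analyze a single summand $\Exp{(Y_{ij} - pM_{ij})(Y_{ij'} - pM_{ij'})}$ for fixed $i$ and a fixed pair of columns $j, j'$. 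The key structural fact I would exploit is that the Bernoulli masks $E_{ij}$ are mutually independent across all $(i,j)$ and independent of $X$, while within a single row the entries $X_{ij}, X_{ij'}$ may be \emph{dependent} (they come from the same permutation), which is precisely the distinction the paper emphasizes over prior work.

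The heart of the calculation is to split into the diagonal case $j = j'$ and the off-diagonal case $j \neq j'$. For $j \neq j'$ the two masks $E_{ij}$ and $E_{ij'}$ are independent, so
\[
	\Exp{X_{ij} X_{ij'} \Ind{E_{ij}=1} \Ind{E_{ij'}=1}} = p^2 \, \Exp{X_{ij} X_{ij'}},
\]
and expanding the product $(Y_{ij} - pM_{ij})(Y_{ij'} - pM_{ij'})$ and using $\Exp{Y_{ij}} = pM_{ij}$ (Lemma \ref{lem:mean_Y}) collapses the cross terms, leaving exactly $p^2\big(\Exp{X_{ij}X_{ij'}} - M_{ij}M_{ij'}\big) = p^2\,\Exp{(X_{ij}-M_{ij})(X_{ij'}-M_{ij'})}$. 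For $j = j'$ the mask satisfies $\Ind{E_{ij}=1}^2 = \Ind{E_{ij}=1}$ with $\Exp{\Ind{E_{ij}=1}} = p$, so $\Exp{X_{ij}^2 \Ind{E_{ij}=1}} = p\,\Exp{X_{ij}^2}$; carrying out the same expansion yields $p\,\Exp{X_{ij}^2} - p^2 M_{ij}^2$, which I would algebraically rearrange as $p(1-p)\Exp{X_{ij}^2} + p^2\big(\Exp{X_{ij}^2} - M_{ij}^2\big)$. The first piece is the diagonal contribution $p(1-p)\,\mathrm{diag}(\Exp{X^T X})_{jj}$ and the second piece matches the $p^2$ covariance term, so the two cases assemble into a single matrix identity.

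Summing over $i \in [N]$ (each summand being identically distributed, or at least having the same expectation given how the identity is stated at the population level) and collecting the diagonal and off-diagonal entries into matrix form gives the claim. The only mild obstacle I anticipate is bookkeeping care in the $j = j'$ case — making sure the idempotency $\Ind{E_{ij}=1}^2 = \Ind{E_{ij}=1}$ is used rather than naively factoring as $p^2$, since that single factor of $p$ versus $p^2$ is exactly what produces the extra $p(1-p)\,\mathrm{diag}(\Exp{X^T X})$ term that distinguishes the diagonal from the off-diagonal entries. No concentration or sub-Gaussian machinery is needed here; this is a purely second-moment computation, and the independence of the masks across columns is what makes the off-diagonal entries clean despite the within-row dependence of $X$.
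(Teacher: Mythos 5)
Your proposal is correct and follows essentially the same route as the paper's proof: an entrywise second-moment expansion of $(Y_{ij}-pM_{ij})(Y_{ij'}-pM_{ij'})$, a case split on $j=j'$ versus $j\neq j'$ driven by mask independence and the idempotency $\Ind{E_{ij}=1}^2=\Ind{E_{ij}=1}$, and the same algebraic rearrangement $p\,\Exp{X_{ij}^2}-p^2M_{ij}^2 = p(1-p)\Exp{X_{ij}^2}+p^2\Exp{(X_{ij}-M_{ij})^2}$ for the diagonal term. The only (immaterial) difference is bookkeeping order: the paper first decomposes row by row and then by entries, while you go directly to the $(j,j')$ entry and sum over rows at the end.
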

%We denote the covariance by $\Sigma_Y(p) \triangleq \Exp{ (Y^i - pM^i )^T (Y^i - pM^i ) }$.

\begin{proof}
	First of all, we note that 
	\begin{align*}
		\Exp{ (Y - pM )^T (Y - pM ) }
			&= \sum_{i=1}^N  \Exp{ (Y^i - pM^i )^T (Y^i - pM^i ) }.
	\end{align*}
For $(i, j_1, j_2) \in [N] \times [d] \times [d]$,
	\begin{align*}
		\Exp{ (Y^i - pM^i )^T (Y^i - pM^i ) }_{j_1 j_2}
			&= \Exp{ (Y_{ij_1} - pM_{ij_1} ) (Y_{ij_2} - pM_{ij_2} ) }\\
			&= \Exp{ \big(X_{ij_1} \Ind{E_{ij_1} = 1} - pM_{ij_1}\big)\big(X_{ij_2} \Ind{E_{ij_2} = 1} - pM_{ij_2}\big) }\\
			&= \Exp{ X_{ij_1}X_{ij_2}\Ind{E_{ij_1}=1} \Ind{E_{ij_2}=1} + p^2 M_{ij_1} M_{ij_2}}\\
				&\qquad	- \Exp{ p M_{ij_2} X_{ij_1} \Ind{E_{ij_1}=1} + pM_{ij_1} X_{ij_2} \Ind{E_{ij_2}=1}} \\
			&=
				\begin{cases}
					p \Exp{ X_{i j_1}^2 } - p^2 \Exp{ M_{ij_1}^2 }
						& \text{when }j_1 = j_2,\\
					p^2 \Exp{ \big(X_{i j_1} - M_{i j_1} \big) \big(X_{i j_2} - M_{i j_2} \big) } 
						& \text{when }j_1 \neq j_2.
				\end{cases}
	\end{align*}
	
	Recall the identity $ \Exp{ \big(X_{ij_1} - \Exp{X_{ij_1}} \big)^2 } =  \Exp{ X_{ij_1}^2 } -  \Exp{X_{ij_1}}^2 $; 
	we can rewrite
	\begin{align*}
		p \Exp{ X_{i j_1}^2 } - p^2 \Exp{ M_{ij_1}^2  }= p(1-p) \Exp{ X_{i j_1}^2 } 
				+ p^2 \Exp{ \big(X_{ij_1} - M_{ij_1} \big)^2 }.
	\end{align*}
	Therefore,
	\begin{align*}
		\Exp{ (Y^i - pM^i )^T (Y^i - pM^i ) }
			&=p(1-p) diag \big( \Exp{(X^i)^T X^i } \big) 
				+ p^2 \Exp{ \big(X^i - M^i \big)^T \big(X^i - M^i \big) }.
	\end{align*}
	Summing up over $i \in [N]$ concludes the proof.
\end{proof}

%\section{Some Properties of $Y - pM$: Deferred Proofs from Section \ref{sec:missing_values} }\label{sec:properties}

%%% ======================================================================
%%%	Sub-gaussianity of  $Y^i - pM^i$
%%% ======================================================================

\subsection{Sub-gaussianity of  $Y^i - pM^i$ }%\label{sec:proof_prop1}
In this section we show an upper bound on $\| Z^i \|_{\psi_2}$ ($i \in [N]$). For that purpose we recall some definitions 
and known results in Appendix \ref{sec:preliminaries_subG}. Then we state and prove Proposition \ref{prop:subG_Z} 
in \ref{sec:proof_prop1}.

\subsubsection{Preliminaries}\label{sec:preliminaries_subG}
It is common in literature to equip the space of sub-gaussian random variables with the Orlicz $\psi_2$ norm, also known 
as the sub-gaussian norm. Note that $\|X\|_{\psi_2} < \infty$ if and only if $X$ is sub-gaussian.
\begin{definition}[Sub-gaussian norm]\label{defn:psi2}
The sub-gaussian norm of a random variable $X$ is defined as
\[	\| X \|_{\psi_2} \triangleq \inf \bigg\{ t > 0 \text{ such that } \bbE \bigg[ \exp\big( \frac{X^2}{t^2} \Big) \bigg] \leq 2 \bigg\}.	\]
\end{definition}
Imitating the Orlicz $\psi_2$-norm, we define a different type of sub-gaussian norm $\| \cdot \|_{\phi_2} $, which turns out 
to be equivalent to $\| \cdot \|_{\psi_2}$. %For more details about $\| \cdot \|_{\phi_2}$, see Appendix \ref{sec:subG_properties}.

\begin{definition}\label{defn:phi2}
	\[
		\| X \|_{\phi_2} \triangleq \inf\left\{ \tau > 0	\text{ such that } 
			\Exp{e^{\lambda X}} \leq e^{\frac{1}{2}\lambda^2 \tau^2} \text{ for all }\lambda \in \Reals \right\}.
	\]
\end{definition}
We note here that $\| \cdot \|_{\phi_2}$ of a centered Bernoulli random variable is known.
\begin{theorem}[Theorem 2.1 from \citet{buldygin2013sub}]\label{thm:phi2_Bernoulli}
	Let $X(p)$ be a centered Bernoulli random variable with parameter $p \in [0,1]$, i.e., $X(p) = 1-p$ with 
	probability $p$ and $X(p) = -p$ with probability $1-p$. Then
	\begin{equation*}%\label{eqn:Kp}
		\big\| X(p) \big\|_{\phi_2} = \begin{cases}
			0,						& p \in \{ 0, 1 \};	\\
			\frac{1}{4},					& p = \frac{1}{2};	\\
			\frac{2p - 1}{2 \ln \frac{p}{1-p}},	& p \in (0,1) \setminus \{ \frac{1}{2} \}.
	\end{cases}
	\end{equation*}
\end{theorem}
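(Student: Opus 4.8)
The plan is to reduce the statement to a one–dimensional optimization of the (normalized) cumulant generating function and then to evaluate that optimum in closed form. Write $f(\lambda) = \ln \bbE[e^{\lambda X(p)}] = \ln\big( p\,e^{(1-p)\lambda} + (1-p)\,e^{-p\lambda}\big)$. By Definition~\ref{defn:phi2}, a value $\tau$ is admissible iff $f(\lambda) \le \tfrac12 \lambda^2 \tau^2$ for every $\lambda$, so the quantity to determine is $\sup_{\lambda \neq 0} g(\lambda)$, where $g(\lambda) := 2\lambda^{-2} f(\lambda)$. The degenerate cases $p\in\{0,1\}$ are immediate, since then $X(p)\equiv 0$, $f\equiv 0$, and the supremum is $0$. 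For the remaining cases I would first record the elementary facts $f(0)=0$, $f'(0)=\bbE[X(p)]=0$, and $f''(0)=\mathrm{Var}(X(p))=p(1-p)$, which give $g(\lambda)\to p(1-p)$ as $\lambda\to 0$; together with the linear growth $f(\lambda)=(1-p)\lambda+O(1)$ as $\lambda\to+\infty$ and $f(\lambda)=-p\lambda+O(1)$ as $\lambda\to-\infty$, this shows $g(\lambda)\to 0$ at $\pm\infty$, so the supremum is positive, finite, and attained at an interior stationary point. When $p=\tfrac12$, symmetry gives $f(\lambda)=\ln\cosh(\lambda/2)$, whose normalized value is maximized as $\lambda\to 0$, yielding $p(1-p)=\tfrac14$.

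For the main case $p\in(0,1)\setminus\{\tfrac12\}$ I would introduce the exponentially tilted probability $q(\lambda)=p\,e^{(1-p)\lambda}\big/\big(p\,e^{(1-p)\lambda}+(1-p)e^{-p\lambda}\big)$, which increases monotonically over $(0,1)$. A short computation yields the identities $\lambda=\ln\frac{q}{1-q}-\ln\frac{p}{1-p}$, $f'(\lambda)=q-p$, $f''(\lambda)=q(1-q)$, and $f(\lambda)=p\ln\frac{p}{q}+(1-p)\ln\frac{1-p}{1-q}$ (a Kullback--Leibler divergence). The stationarity condition $g'(\lambda)=0$ is equivalent to $\lambda f'(\lambda)=2f(\lambda)$, and at any stationary point $g(\lambda)=f'(\lambda)/\lambda$. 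I would then guess and verify that $q=1-p$, i.e. $\lambda^\star=2\ln\frac{1-p}{p}$, solves the stationarity equation (both sides equal $2(2p-1)\ln\frac{p}{1-p}$), and evaluate $g(\lambda^\star)=\frac{(1-p)-p}{\lambda^\star}=\frac{2p-1}{2\ln\frac{p}{1-p}}$, which is exactly the claimed value.

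It remains to certify that this stationary value is the global supremum. Setting $\sigma^2:=g(\lambda^\star)$ and $h(\lambda):=\tfrac12\sigma^2\lambda^2-f(\lambda)$, one has $h(0)=h'(0)=0$ and $h(\lambda^\star)=h'(\lambda^\star)=0$, so $h$ has a double zero at both $0$ and $\lambda^\star$. Since $h''(\lambda)=\sigma^2-q(1-q)$ with $q=q(\lambda)$, and since $q\mapsto q(1-q)$ is a parabola peaking at $\tfrac14$, the sign of $h''$ is governed by where $q(1-q)$ crosses $\sigma^2$. Using the fact that $p(1-p)<\sigma^2<\tfrac14$ (the first inequality being variance $<$ optimal variance proxy, a consequence of the logarithmic-mean inequality), $q(1-q)=\sigma^2$ has exactly two roots, so $h$ is convex--concave--convex; a sign chase of $h'$ across these three regions, anchored at the four vanishing conditions above, shows $h\ge 0$ on all of $\Reals$. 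Hence $g(\lambda)\le\sigma^2$ with equality at $\lambda^\star$, completing the evaluation, and the case $p<\tfrac12$ follows by the $p\leftrightarrow 1-p$, $\lambda\leftrightarrow-\lambda$ symmetry.

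I expect the globality argument of the last paragraph to be the main obstacle: because $h$ fails to be convex, the inequality $f(\lambda)\le\tfrac12\sigma^2\lambda^2$ (the Kearns--Saul inequality) cannot be obtained by a one-line tangent-line argument, and one must instead exploit the precise convex--concave--convex geometry together with the two tangency points $0$ and $\lambda^\star$. The clean guess $q=1-p$ is what makes the otherwise transcendental stationarity equation solvable in closed form and is the crux of extracting the explicit constant.
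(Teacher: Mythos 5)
The paper never proves this statement: Theorem \ref{thm:phi2_Bernoulli} is imported verbatim from \citet{buldygin2013sub} and used as a black box, so yours is the only argument on the table. Its mathematical core is correct, and I verified the key steps: the tilting identities $f'(\lambda) = q - p$, $f''(\lambda) = q(1-q)$, and $f(\lambda) = p \ln \frac{p}{q} + (1-p)\ln\frac{1-p}{1-q}$; the fact that $q = 1-p$ (i.e.\ $\lambda^\star = 2 \ln \frac{1-p}{p}$) solves the stationarity equation $\lambda f'(\lambda) = 2 f(\lambda)$, with value $g(\lambda^\star) = \frac{2p-1}{2 \ln \frac{p}{1-p}}$; the bounds $p(1-p) < g(\lambda^\star) < \frac{1}{4}$ from the logarithmic-mean inequality (geometric mean below, arithmetic mean above); and the global optimality argument. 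On that last point, the cleanest completion of your ``sign chase'' is: $h$ is convex on the two outer regions, each of which contains exactly one of the double zeros $0$ and $\lambda^\star$, so the tangent-line inequality gives $h \geq 0$ there; on the middle concave region, $h$ is at least the minimum of its two endpoint values, which are nonnegative by the previous step. This correctly certifies $\sup_{\lambda \neq 0} g(\lambda) = g(\lambda^\star)$, and the $p \in \{0,1\}$ and $p = \frac12$ cases are handled properly.

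The genuine issue is the final identification. By Definition \ref{defn:phi2}, $\tau$ is admissible iff $\tau^2 \geq g(\lambda)$ for all $\lambda \neq 0$, hence $\| X(p) \|_{\phi_2} = \big( \sup_{\lambda \neq 0} g(\lambda) \big)^{1/2}$: the quantity you compute is the \emph{square} of the $\phi_2$ norm, and your proof never takes the square root. What you have actually proved is $\| X(p) \|_{\phi_2}^2 = \frac{2p-1}{2 \ln \frac{p}{1-p}}$ (and $= \frac14$ at $p = \frac12$), not the displayed identity for the norm itself. As it happens, the squared version is the true statement, and it is what Theorem 2.1 of \citet{buldygin2013sub} asserts; the transcription in the paper drops the square and is inconsistent with its own Definition \ref{defn:phi2}. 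A one-line check: for $p = \frac12$, $\ln \bbE\, e^{\lambda X} = \ln \cosh(\lambda/2) \sim \lambda^2/8$ as $\lambda \to 0$ forces $\| X(\tfrac12) \|_{\phi_2} \geq \frac12$, so the table value $\frac14$ can only be the squared norm (equivalently, the claimed value $\frac{2p-1}{2\ln\frac{p}{1-p}}$ drops below $\sqrt{p(1-p)}$ for $p$ away from $\frac12$, which no sub-gaussian norm can do). So your mathematics is right and in fact proves the corrected theorem; but as a proof of the statement as literally written, it carries an unflagged norm-versus-norm-squared mismatch that you should state explicitly. Downstream, the paper uses only that $K(p)$ is bounded by an absolute constant and maximized at $p = \frac12$, so the discrepancy propagates into constants rather than into the structure of the results.
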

From now on, we reserve $K(p)$ to denote the $\phi_2$ norm of the centered Bernoulli random variable with parameter $p \in [0,1]$, i.e.,
\begin{equation}\label{eqn:Kp.0}
	K(p) \triangleq = \begin{cases}
			0,						& p \in \{ 0, 1 \};	\\
			\frac{1}{4},					& p = \frac{1}{2};	\\
			\frac{2p - 1}{2 \ln \frac{p}{1-p}},	& p \in (0,1) \setminus \{ \frac{1}{2} \}.
	\end{cases}
\end{equation}
Observe that $K(p)$ is concave and maximized at $p = \frac{1}{2}$ with bounded range within $[0, \frac{1}{4}]$ for all $p \in [0, 1]$.

The two norms, $\|\cdot\|_{\psi_2}$ and $\|\cdot\|_{\phi_2}$ are equivalent norms in the sense that there exist absolute constants 
 $c_1, c_2 > 0$ such that $\|X \|_{\psi_2} \leq c_1 \| X \|_{\phi_2}$ and $\|X \|_{\phi_2} \leq c_2 \| X \|_{\psi_2}$ hold for all 
 random variables. We define and utilize the following constants throughout:
\begin{align}
c_1 & \triangleq \inf\{ \gamma > 0: \| X \|_{\psi_2} \leq \gamma \| X \|_{\phi_2} ~\text{for all sub-gaussian }X \}, \label{eq:constant.c1} \\
c_2 & \triangleq \inf\{ \gamma > 0: \| X \|_{\phi_2} \leq \gamma \| X \|_{\psi_2} ~\text{for all sub-gaussian }X \}, \label{eq:constant.c2}
\end{align}
Lastly, we extend the notion of sub-gaussianity to random vectors.
\begin{definition}
	A $d$-dimensional real-valued random vector $X $ is said to be a sub-gaussian vector if $u^T X$ is sub-gaussian 
	for all $u \in \bbS^{d-1}$. Moreover, 
	\[	\| X \|_{\psi_2} \triangleq \sup_{u \in \bbS^{d-1}} \| u^T X \|_{\psi_2} \quad\text{and}\quad 
			\| X \|_{\phi_2} \triangleq \sup_{u \in \bbS^{d-1}} \| u^T X \|_{\phi_2}.	\]
\end{definition}

\subsubsection{An Upper Bound on $\big\| Y^i - pM^i \big\|_{\psi_2}$ }\label{sec:proof_prop1}
\begin{proposition}\label{prop:subG_Z}
Suppose that $\star$ is replaced with $0$ and 
$p \in (0,1]$ is given. For every $i \in [N]$,
\begin{align*}
	\big\| Y^i - pM^i \big\|_{\psi_2} &\leq \big\| X^i - M^i \big\|_{\psi_2} + c_1 (\max_{j \in [d]} | M_{ij}|) K(p) %\\
			%&
			\leq c_1 \big( \tau^\star + K(p) \big)
\end{align*}
where $K(\cdot)$ as defined in \eqref{eqn:Kp.0} and $c_1$ as defined in \eqref{eq:constant.c1}. 
\end{proposition}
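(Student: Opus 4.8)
The plan is to split the centered, masked row into two pieces whose randomness can be decoupled, bound each in the appropriate sub-gaussian norm, and recombine by the triangle inequality. Writing $\eta_{ij} := \Ind{E_{ij}=1} \sim \mathrm{Bernoulli}(p)$ for the i.i.d.\ mask (independent of $X$), so that $Y_{ij} = X_{ij}\eta_{ij}$, I would start from the algebraic identity
\[
	Y_{ij} - pM_{ij} = \eta_{ij}\,(X_{ij}-M_{ij}) + M_{ij}\,(\eta_{ij}-p),
\]
which one checks by expanding. Denote the two resulting rows by $A^i$ (with $A^i_j = \eta_{ij}(X_{ij}-M_{ij})$) and $B^i$ (with $B^i_j = M_{ij}(\eta_{ij}-p)$), so that $Y^i - pM^i = A^i + B^i$. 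Fixing a unit vector $u$ and applying the triangle inequality for $\|\cdot\|_{\psi_2}$ reduces the task to bounding $\|u^\top A^i\|_{\psi_2}$ and $\|u^\top B^i\|_{\psi_2}$ separately, after which I take the supremum over $u \in \bbS^{d-1}$.

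For the mask-noise term $B^i$, the summands $u_j M_{ij}(\eta_{ij}-p)$ are independent and mean zero across $j$, and each centered Bernoulli factor has $\phi_2$-norm $K(p)$ by Theorem \ref{thm:phi2_Bernoulli}. Since moment generating functions multiply over independent variables, the $\phi_2$-norm of the sum obeys $\|u^\top B^i\|_{\phi_2}^2 \le \sum_j u_j^2 M_{ij}^2 K(p)^2 \le (\max_j |M_{ij}|)^2 K(p)^2$, using $\|u\|_2=1$. Converting once via $\|\cdot\|_{\psi_2} \le c_1\|\cdot\|_{\phi_2}$ then yields $\|u^\top B^i\|_{\psi_2} \le c_1 (\max_j |M_{ij}|)K(p)$. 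Routing this term through $\phi_2$ is precisely what isolates the single constant $c_1$ appearing in the claimed bound; arguing directly in $\psi_2$ would incur an avoidable constant in the additivity-over-independence step.

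The heart of the argument, and what I expect to be the main obstacle, is the term $A^i$, where the mask and the RUM noise are entangled inside each coordinate. The key observation is that conditioning on the mask turns $u^\top A^i$ into a linear functional of the centered component vector with a \emph{shrunken} coefficient vector: given $\eta$, we have $u^\top A^i = (u')^\top (X^i - M^i)$ where $u'_j := u_j\eta_{ij}$ satisfies $\|u'\|_2^2 = \sum_j u_j^2 \eta_{ij} \le \|u\|_2^2 = 1$. Because the component's sub-gaussian norm is a supremum over unit directions (Definition \ref{def:subG}, which matches the vector $\phi_2/\psi_2$ norm) and the norm scales linearly under scalar multiples, the conditional norm satisfies $\big\|(u')^\top(X^i-M^i)\,\big|\,\eta\big\|_{\psi_2} \le \|u'\|_2\,\|X^i - M^i\|_{\psi_2} \le \|X^i-M^i\|_{\psi_2}$ for \emph{every} realization of $\eta$; here I use that $\eta$ and $X$ are independent, so conditioning on the mask does not alter the law of $X^i$.

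It remains to promote this uniform conditional bound to an unconditional one, which I would do at the level of the defining inequality of the $\psi_2$ norm rather than through the MGF. Setting $\kappa := \|X^i-M^i\|_{\psi_2}$, the conditional bound above gives $\bbE\big[\exp((u^\top A^i)^2/\kappa^2)\,\big|\,\eta\big] \le 2$ almost surely, so the tower property yields $\bbE\big[\exp((u^\top A^i)^2/\kappa^2)\big] \le 2$, i.e.\ $\|u^\top A^i\|_{\psi_2} \le \|X^i-M^i\|_{\psi_2}$ with no extra constant. Combining the two pieces, taking the supremum over $u$, and finally using $\|X^i-M^i\|_{\psi_2} \le c_1\|X^i-M^i\|_{\phi_2} \le c_1\tau^\star$ together with $\max_j|M_{ij}| \le \tfrac12 \le 1$ delivers both inequalities. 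The one delicate point to get exactly right is that the $\ell_2$-contraction of the coefficient vector under masking, paired with the supremum-over-directions definition of the norm, is what lets the first term contribute $\|X^i-M^i\|_{\psi_2}$ cleanly.
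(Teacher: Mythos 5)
Your proposal is correct and follows essentially the same route as the paper: the identical decomposition $Y_{ij}-pM_{ij}=(X_{ij}-M_{ij})\eta_{ij}+M_{ij}(\eta_{ij}-p)$, the same triangle-inequality split over a fixed unit direction, and the same $\phi_2$-based treatment of the centered-Bernoulli term via Theorem \ref{thm:phi2_Bernoulli} and additivity over independent coordinates. The only difference is one of rigor: where the paper handles the masked term with the one-line remark that the indicator lies in $[0,1]$, you supply the precise justification (conditioning on the mask, $\ell_2$-contraction of the coefficient vector, and the tower property applied to the defining inequality of the $\psi_2$ norm), which is indeed what that step requires.
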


\begin{proof}
	Recall that $M$ is a deterministic matrix, even though it is not directly observable. In our model, $Y_{ij} = 
	X_{ij} \Ind{E_{ij} = 1} + \star ~ \Ind{E_{ij} = 0}$ for each $(i, j) \in [N] \times d$; see \eqref{eq:YnX}.
	With $\star$ replaced by $0$, we may write $Y^i = X^i \circ E^i$ for all $i \in [N]$. Here, $E^i \in \Reals^d$ is 
	a random vector with i.i.d. Bernoulli random variables with parameter $p$ and $\circ$ denotes the Hadarmard 
	product (entrywise product). For each $(i,j) \in [N] \times [d]$, we decompose $Y_{ij} - pM_{ij}$ as follows:
	\begin{align*}
		Y_{ij} - pM_{ij} &= X_{ij} \Ind{ E_{ij} = 1} - pM^i\\
			&= \big(X_{ij} - M_{ij} \big) \Ind{E_{ij} = 1} + \big( \Ind{E_{ij} = 1} - p \big) M_{ij}.
	\end{align*}
By definition of $\psi_2$ norm,
	\begin{align*}
		\big\| Y^i - pM^i \big\|_{\psi_2} 
			&= \sup_{u \in \bbS^{d-1}} \big\| \big( Y^i - pM^i  \big) u\big\|_{\psi_2}
			=  \sup_{u \in \bbS^{d-1}} \bigg\| \sum_{j=1}^d \big( Y_{ij} - pM_{ij}  \big) u_j \bigg\|_{\psi_2}.
	\end{align*}
	Fix $u \in \bbS^{d-1}$. By triangle inequality,
	\begin{align}
		\bigg\| \sum_{j=1}^d \big( Y_{ij} - pM_{ij}  \big) u_j \bigg\|_{\psi_2}
			&\leq		\bigg\| \sum_{j=1}^d \big(X_{ij} - M_{ij} \big) \Ind{E_{ij} = 1} u_j \bigg\|_{\psi_2}	\label{eqn:subgZ_term.1}\\
				&\qquad+ \bigg\| \sum_{j=1}^d  \big( \Ind{E_{ij} = 1} - p \big) M_{ij} u_j \bigg\|_{\psi_2}.	\label{eqn:subgZ_term.2}
	\end{align}
	In the rest of the proof, we establish upper bounds on \eqref{eqn:subgZ_term.1} and \eqref{eqn:subgZ_term.2} separately.
	Since the indicator function is nonnegative and upper bounded by $1$, the first term in \eqref{eqn:subgZ_term.1} has 
	the following upper bound: 
	\begin{align*}
		\bigg\| \sum_{j=1}^d \big(X_{ij} - M_{ij} \big) \Ind{E_{ij} = 1} u_j \bigg\|_{\psi_2}
			\leq \bigg\| \sum_{j=1}^d \big(X_{ij} - M_{ij} \big) u_j \bigg\|_{\psi_2}
			\leq \big\| X^i - M^i \big\|_{\psi_2},
	\end{align*}
	which is again bounded above by $c_1\tau^\star$ due to \eqref{eq:subG}.
	The second term in \eqref{eqn:subgZ_term.2} has a trivial upper bound as follows:
	\begin{align*}
		\bigg\| \sum_{j=1}^d  \big( \Ind{E_{ij} = 1} - p \big) M_{ij} u_j \bigg\|_{\psi_2}
			&\leq c_1 \bigg\| \sum_{j=1}^d  \big( \Ind{E_{ij} = 1} - p \big) M_{ij} u_j \bigg\|_{\phi_2}\\
			&\leq c_1 \left\{\sum_{j=1}^d \big\| ~\Ind{E_{ij} = 1} - p ~\big \|_{\phi_2}^2 \big( M_{ij} u_j \big)^2   \right\}^{1/2}\\
			&\leq c_1 \max_{j \in [d]} |M_{ij}| \big\| ~\Ind{E_{ij} = 1} - p ~\big \|_{\phi_2}.
	\end{align*}
	The first inequality is due to the equivalence between $\| \cdot \|_{\psi_2}$ and $\| \cdot \|_{\phi_2}$. The second 
	inequality is based on the property of the sum of independent sub-gaussian random variables, cf. \citet{vershynin2018high}. The last inequality is trivial because $\sum_{j=1}^d M_{ij}^2 u_j^2 = 
	\max_{j \in [d]}|M_{ij}|^2 \|u\|_2^2$ and $\max_{j \in [d]}|M_{ij}| \leq 1$.
	Lastly, we note that $\Ind{E_{ij}} - p$ is a centered Bernoulli random variable with parameter $p$.
	By Theorem \ref{thm:phi2_Bernoulli},
	\[	\big\| \Ind{E_{ij}} - p \big\|_{\phi_2} = K(p)	\]
\end{proof}

\subsection{Bounding $\|Y - pM\|_2$}

\begin{lemma} \label{lem:noise_singular}
With probability at least $1-2/N^4$, 
\begin{align}
	\|Y - pM\|_2 &\leq \sqrt{ \frac{1-p}{4} + p (\tau^\star)^2} \sqrt{Np} \nonumber \\
				&\qquad + c_1 (\tau^\star + K(p))
					\Big\{\sqrt{c_3} \big[ (Nd)^{1/4} + (2N \log N)^{1/4} \big] \vee c_3 \big( \sqrt{d} + \sqrt{2 \log N} \big) \Big\}\label{eq:noise_singular}
\end{align}
Here, $c_3 = c_3(c_0) > 0$ is an absolute constant, which depends  only on another absolute constant $c$\footnote{Though 
we do not specify what the value of $c$ is, it is an absolute constant that does not depend on other parameters. It is the same constant $c$ 
that appears in the exponent of Bernstein's inequality.}%; see Lemma \ref{lem:Bernstein}.}.
\end{lemma}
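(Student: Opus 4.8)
The plan is to control the squared operator norm $\|Y-pM\|_2^2 = \|Z^T Z\|_2$, where $Z \triangleq Y - pM$ has independent rows that are mean-zero by Lemma~\ref{lem:mean_Y} and sub-gaussian by Proposition~\ref{prop:subG_Z}. Writing $Z^T Z = \bbE[Z^T Z] + \big(Z^T Z - \bbE[Z^T Z]\big)$ and combining the triangle inequality for the spectral norm with the sub-additivity $\sqrt{a+b}\le \sqrt{a}+\sqrt{b}$ (valid since $Z^T Z \succeq 0$, so $\|Z\|_2 = \sqrt{\|Z^T Z\|_2}$), I would obtain
\[
	\|Z\|_2 \;\le\; \sqrt{\big\| \bbE[Z^T Z] \big\|_2} \;+\; \sqrt{\big\| Z^T Z - \bbE[Z^T Z] \big\|_2}.
\]
The first summand will yield the deterministic (variance) term of \eqref{eq:noise_singular} and the second the stochastic (fluctuation) term.

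For the deterministic term, I would plug in Lemma~\ref{lem:cov_Y}, which gives $\bbE[Z^T Z] = p(1-p)\,\mathrm{diag}\big(\bbE[X^T X]\big) + p^2\,\bbE[(X-M)^T(X-M)]$. Since $X_{ij}\in\{\pm\tfrac12\}$ forces $X_{ij}^2=\tfrac14$, we have $\mathrm{diag}(\bbE[X^T X]) = \tfrac{N}{4} I_d$, of spectral norm $N/4$. For the second matrix, $\bbE[(X-M)^T(X-M)] = \sum_{i=1}^N \mathrm{Cov}(X^i)$, and Definition~\ref{def:subG} forces $\mathrm{Var}(\langle u, X^i\rangle) \le (\tau^{(\alpha^*(i))})^2 \le (\tau^\star)^2$ for every unit $u$, so $\big\|\sum_i \mathrm{Cov}(X^i)\big\|_2 \le N(\tau^\star)^2$ by the triangle inequality. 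Hence $\|\bbE[Z^T Z]\|_2 \le Np\big(\tfrac{1-p}{4} + p(\tau^\star)^2\big)$, whose square root is exactly the first term of \eqref{eq:noise_singular}.

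The fluctuation term is the crux. I would write $Z^T Z - \bbE[Z^T Z] = \sum_{i=1}^N W_i$ with $W_i \triangleq (Z^i)^T Z^i - \bbE[(Z^i)^T Z^i]$, a sum of independent, mean-zero, symmetric $d\times d$ matrices, and apply a matrix-Bernstein-type inequality. Proposition~\ref{prop:subG_Z} bounds $\|Z^i\|_{\psi_2} \le c_1(\tau^\star + K(p)) =: K$, so $\|Z^i\|_2^2$ is sub-exponential and each $W_i$ is a sub-exponential ($\psi_1$) random matrix; the matrix variance proxy $\big\|\sum_i \bbE[W_i^2]\big\|_2$ and the effective per-summand spectral bound are both governed by $K^2$ times dimensional factors. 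The Bernstein bound then splits into a sub-gaussian regime (scaling like $K^2\sqrt{Nd}$ with deviation $K^2\sqrt{N\log N}$) and a sub-exponential regime (scaling like $K^2 d$ with deviation $K^2\log N$); taking the maximum of the two and then the square root produces precisely the second term of \eqref{eq:noise_singular}, with $c_3 = c_3(c_0)$ absorbing the $\psi_2$-to-$\psi_1$ conversion and the Bernstein constant $c$. Calibrating the deviation to $\log(N^4) \asymp \log N$ is what delivers the $\sqrt{2\log N}$ and $(2N\log N)^{1/4}$ contributions and the claimed failure probability $2/N^4$.

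I expect the \textbf{main obstacle} to be this stochastic step. Because the rows are only sub-gaussian, $(Z^i)^T Z^i$ is unbounded, so the textbook bounded matrix Bernstein inequality does not apply directly; one must instead invoke a version valid for sub-exponential summands (via truncation plus a union bound, or via a covariance-estimation theorem for sub-gaussian vectors). The delicate parts are bounding $\big\|\sum_i \bbE[W_i^2]\big\|_2$ and the effective spectral bound on each $W_i$ sharply enough that the two regimes separate cleanly into the $\vee$ of \eqref{eq:noise_singular}, and tracking the constants $c_0$, $c_3$, and $c$ carefully enough to land on the stated form at probability $2/N^4$.
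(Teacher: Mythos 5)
Your overall architecture coincides with the paper's proof: the paper also sets $Z = Y - pM$, controls $\|Z\|_2^2 = \|Z^T Z\|_2$ by the sum of $\sigma_1\big(\bbE[Z^T Z]\big)$ and a fluctuation term, bounds $\sigma_1\big(\bbE[Z^T Z]\big) \leq Np\big[\tfrac{1-p}{4} + p(\tau^\star)^2\big]$ exactly as you do via Lemma \ref{lem:cov_Y} and the boundedness $|X_{ij}|\leq \tfrac12$, invokes Proposition \ref{prop:subG_Z} for the row-wise $\psi_2$ bound $c_1(\tau^\star + K(p))$, and chooses the deviation parameter $t=\sqrt{2\log N}$ to obtain the failure probability $2/N^4$. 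The one place where you diverge is the fluctuation term, and the paper does not prove a concentration inequality there at all: it cites, in a single step, a known bound for matrices with independent mean-zero sub-gaussian rows (called the ``Matrix Chernoff bound'' in the paper), namely $\sigma_1(Z)^2 \leq \sigma_1\big(\bbE[Z^T Z]\big) + \max_{i}\|Z^i\|_{\psi_2}^2\big[c_3\sqrt{N}(\sqrt{d}+t)\vee c_3^2(\sqrt{d}+t)^2\big]$ with probability $1-2\cdot 9^{-t^2}$. That theorem is proved by a $\tfrac14$-net over $\bbS^{d-1}$ together with \emph{scalar} Bernstein applied to the sub-exponential variables $\langle Z^i,u\rangle^2 - \bbE\langle Z^i,u\rangle^2$ for each fixed direction $u$, which is why no dimensional logarithm appears in the bound.

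This matters because your primary route --- a matrix-Bernstein-type inequality applied to the matrix summands $W_i = (Z^i)^T Z^i - \bbE[(Z^i)^T Z^i]$ --- cannot land on \eqref{eq:noise_singular} as stated. Every matrix-Bernstein tail bound carries the dimensional prefactor $d$ (or an effective-rank surrogate), so the resulting estimate has the form $\sqrt{v\log d} + B\log d$ rather than $\sqrt{v} + B$; concretely you would obtain $K^2\sqrt{Nd\log d}$ in place of $K^2\sqrt{Nd}$, and the parasitic $\sqrt{\log d} = \Theta(\sqrt{\log n})$ factor is not an absolute constant, so it cannot be hidden inside $c_3$. Your fallback option --- the covariance-estimation theorem for independent sub-gaussian rows --- is the correct tool and is precisely what the paper uses; to make your argument complete you should commit to that branch (net argument plus one-dimensional Bernstein, as above), after which your covariance bound, the elementary inequality $\sqrt{a+b}\leq\sqrt{a}+\sqrt{b}$, and the calibration $t=\sqrt{2\log N}$ reproduce \eqref{eq:noise_singular} verbatim.
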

\begin{proof}
Let $Z = Y - pM$. Observe that $Z$ is an $N \times d$ matrix whose rows $Z^i$ are independent, mean zero, 
sub-gaussian random vectors in $\Reals^d$. 
The Matrix Chernoff bound %(cf. Corollary \ref{cor:op_upper}) 
implies that for every $t \geq 0$,
\begin{align*}
	\sigma_{1}(Y - pM) &\leq \Big\{ \sigma_{1}\big( \Exp{(Y - pM)^T (Y - pM)} \big) \\
		&\qquad+ \max_{i \in [N]}\big\| Y^i - pM^i \big\|_{\psi_2}^2
		\Big[ c_3 \sqrt{N}(\sqrt{d} + t ) \vee c_3^2 (\sqrt{d} + t )^2 \Big] \Big\}^{1/2}
\end{align*}
with probability at least $1 - 2 \cdot 9^{-t^2}$.
By Lemma \ref{lem:cov_Y},
\begin{align*}
	 \Exp{(Y - pM)^T (Y - pM)}	
	 	&= p(1-p) diag\big( \Exp{ M^T M} \big)	+ p^2 \Exp{ \big(X - M \big)^T \big(X - M \big) }.
\end{align*}
Therefore,
\begin{align*}
	&\sigma_{1}\big( \Exp{(Y - pM)^T (Y - pM) } \big)\\
	&\qquad\leq p(1-p) \sigma_1 \Big( diag\big( \Exp{M^T M} \big)\Big)	+p^2 \sigma_1 \Big( \Exp{ \big(X - M \big)^T \big(X - M \big) } \Big)\\
	&\qquad\leq p(1-p) \max_{j \in [d]} \bigg( \sum_{i=1}^N \Exp{X_{ij }^2} \bigg) 	+ p^2 \sum_{i=1}^N \sigma_1 \Big( \Exp{ \big(X^i - M^i \big)^T \big(X^i - M^i \big) } \Big)  \\
	&\qquad\leq Np \Big[ (1-p) \Big( \max_{i,j} \Exp{X_{ij}^2} \Big) + p \max_{i \in [N]} \| X^i \|_{\phi_2}^2 \Big]. 
\end{align*}
The last inequality follows as $\Big\| \Exp{ \big(X^i - M^i \big)^T \big(X^i - M^i \Big) } \Big\|_2 \leq \| X^i \|_{\phi_2}^2$. 
Recall that $|X_{ij}| \leq \frac{1}{2}$ for all $i, j$; $ \| X^i \|_{\phi_2}^2 \leq \tau^\star$ and 
$\big\| Y^1 - pM^1 \big\|_{\psi_2} \leq c_1 (\tau^* + K(p))$ by Proposition \ref{prop:subG_Z}. Using these and 
choosing $t = \sqrt{2 \log N}$ completes the proof as $9^{-t^2} = \big( e^{2 \ln 3}\big)^{-2 \log N} 
= \big(\frac{1}{N^4}\big)^{\ln 3} \leq \frac{1}{N^4}$ for all $N \geq 1$.
\end{proof}
%
%
%\begin{corollary}\label{cor:Z_spectral}
%	Suppose that Conditions \ref{cond:bounded} and \ref{cond:subgaussian} hold. 
%	Then for given $p \in [0,1]$,
%	\begin{align*}
%		\sigma_{1}(Y - pM) 
%			&\leq \sqrt{ (1-p) \Lambda^2 + p \tau^2} \sqrt{Np}\\
%				&\qquad + c_1 (\Lambda + \tau K(p))
%					\Big\{\sqrt{c_3} \big[ (Nd)^{1/4} + (2N \log N)^{1/4} \big] \vee c_3 \big( \sqrt{d} + \sqrt{2 \log N} \big) \Big\}
%	\end{align*}
%	with probability at least $1 - \frac{2}{N^4}$. Here, $c_3 = c_3(c_0) > 0$ is the same constant as in 
%	Lemma \ref{lem:noise_singular}.
%\end{corollary}
%
%\begin{proof}
%First of all, we utilize a simple inequality: for $a, b > 0$, $\sqrt{a + b} \leq \sqrt{a} + \sqrt{b}$. 
%When Conditions \ref{cond:bounded} holds, $\max_{i,j} \big| M_{ij} \big| \leq \Lambda$ and 
%$\big\| Y^1 - pM^1 \big\|_{\psi_2} \leq c_1 (\Lambda + \tau K(p))$ by Proposition \ref{prop:subG_Z}.
%\end{proof}
%
%\begin{remark}\label{rem:E1}
%Suppose that Conditions \ref{cond:bounded} and \ref{cond:subgaussian} hold. Then
%\[	\Prob{\cE_1^c} \leq \frac{2}{N^4}	\]
%follows as an immediate consequence of Corollary \ref{cor:Z_spectral}.
%\end{remark}
%

%%% ======================================================================
%%%	Proof for Mixture Clustering Theorem
%%% ======================================================================
\section{Proof of Theorem \ref{thm:main}}\label{sec:proof_mixture_clustering}

The proof of Theorem \ref{thm:main} is presented in this section. It is divided into four parts. Section \ref{ssec:aux}
presents useful auxiliary properties that will be utilized in establishing result. Section \ref{ssec:high.prob} is about
identifying an event that holds with high-probability and under which the result will naturally follow. 
Section \ref{ssec:full.result} states and establishes a ``pre-cursor'' of Theorem \ref{thm:main} stated
as Theorem \ref{thm:main_clustering_temp}. In Section \ref{ssec:conclude} we argue Theorem \ref{thm:main} as a consequence of
Theorem \ref{thm:main_clustering_temp}.

\subsection{Useful auxiliary properties}\label{ssec:aux}

Given a matrix $A = \sum_{j=1}^{\min\{N, d\}} \sigma_j(A) u_j v_j^T$ 
and a threshold $t \geq 0$, we define operator $\varphi_{A;t}^{\text{hard}}: 
\Reals^d \to \Reals^d$ as 
\begin{equation} \label{eq:hardsvt.1} 
	\varphi^{\text{hard}}_{A; t} : w \mapsto \sum_{j=1}^d \Ind{\sigma_j(A) \geq t}  \cP_{v_j} (w),
\end{equation}
where $\cP_{v_j}$ denotes the projection onto $\text{span}(v_j)$; specifically, when $w \in \Reals^d$ 
as a row vector representation, $\cP_{v_j}(w) = w v_j v_j^T$. If we take the column vector 
representation, it would be natural to consider $\cP^{v_j}(w) = v_j v_j^T w$
\footnote{These are just two different representations of the same projection operation, 
and we shall switch between the row vector representation (left multiplication) and the column vector representation  
(right multiplication) for convenience of explanation; however it will be clear from the context whether we are using row or
column representation.}.

\begin{lemma}\label{lem:contraction}
	For any matrix $A \in \Reals^{N \times d}$ and $t \geq 0$, $\varphi^{\text{hard}}_{A;t}$ is a contraction, i.e.,
	\[	\big\| \varphi^{\text{hard}}_{A;t} (w) \big\|_2 \leq \| w \|_2,	\qquad \forall w \in \Reals^d.	\]
\end{lemma}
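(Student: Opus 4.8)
The plan is to recognize $\varphi^{\text{hard}}_{A;t}$ as nothing more than an \emph{orthogonal projection} and then invoke the elementary fact that orthogonal projections are $1$-Lipschitz. The crucial structural input is that the right singular vectors $\{v_j\}_{j=1}^{d}$ coming from a fixed SVD of $A$ form an orthonormal basis of $\Reals^d$ (extending, if necessary, beyond $\rank(A)$ into the kernel of $A^T A$). This orthonormality is exactly what makes the rank-one pieces $\cP_{v_j}$ mutually orthogonal, so that their indicator-weighted sum behaves like a projection onto a coordinate subspace rather than an arbitrary superposition of projections.

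Concretely, I would fix such an orthonormal basis and expand an arbitrary $w \in \Reals^d$ (taken as a row vector) as $w = \sum_{j=1}^{d} \langle w, v_j \rangle\, v_j^T$, so that $\cP_{v_j}(w) = \langle w, v_j \rangle\, v_j^T$ and hence
\[
	\varphi^{\text{hard}}_{A;t}(w) = \sum_{j=1}^{d} \Ind{\sigma_j(A) \geq t}\, \langle w, v_j \rangle\, v_j^T.
\]
Next I would compute the squared Euclidean norm. By orthonormality of $\{v_j\}$ (Parseval), all cross terms vanish, and using $\Ind{\cdot}^2 = \Ind{\cdot}$ followed by the trivial bound $\Ind{\cdot} \leq 1$ together with a second application of Parseval to $w$ itself,
\[
	\big\| \varphi^{\text{hard}}_{A;t}(w) \big\|_2^2 = \sum_{j=1}^{d} \Ind{\sigma_j(A) \geq t}\, \langle w, v_j \rangle^2 \leq \sum_{j=1}^{d} \langle w, v_j \rangle^2 = \| w \|_2^2.
\]
Taking square roots yields the claim. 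The computation is representation-agnostic, so the identical argument goes through verbatim in the column-vector convention with $\cP^{v_j}(w) = v_j v_j^T w$.

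I do not expect any serious obstacle here; the entire content is the observation that a sum of orthogonal rank-one projectors onto an orthonormal family is itself an orthogonal projector. The only points requiring a little care are bookkeeping: one must include the singular vectors indexed beyond $\rank(A)$ so that $\{v_j\}_{j=1}^d$ is a genuine orthonormal \emph{basis} of $\Reals^d$, and one should note the benign edge case $t = 0$ (where all indicators may equal $1$ and $\varphi^{\text{hard}}_{A;t}$ reduces to the identity, still a contraction with constant exactly $1$). Finally, under repeated singular values the family $\{v_j\}$ is not unique, but fixing any valid orthonormal SVD removes the ambiguity and the bound is manifestly insensitive to this choice.
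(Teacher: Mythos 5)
Your proof is correct and follows essentially the same route as the paper's: expand in the orthonormal system of right singular vectors, kill the cross terms by orthogonality, bound each indicator by $1$, and conclude via Parseval. If anything, your bookkeeping is slightly more careful than the paper's, since you extend $\{v_j\}$ to a full orthonormal basis of $\Reals^d$ (or, equivalently, one could invoke Bessel's inequality), whereas the paper sums only to $\min\{N,d\}$ yet asserts equality with $\|w\|_2^2$ in the last step.
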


\begin{proof}
	Write $A = \sum_{j=1}^{\min\{N, d\}} \sigma_j (A) u_j v_j^T$. Recall that the right singular vectors of $A$ 
	forms an orthonormal basis of $\Reals^{\min\{N, d\}}$. For any $w \in \Reals^d$,
	\begin{align*}
		\big\| \varphi^{\text{hard}}_{A;t} (w) \big\|_2^2
			&= \left(\sum_{j=1}^{\min\{N, d\}} \Ind{\sigma_j(A) \geq t}  \cP_{v_j} (w) \right)^2\\
			&= \sum_{j=1}^{\min\{N, d\}} \Ind{\sigma_j(A) \geq t}  \cP_{v_j} (w)^2\\
			&\leq  \sum_{j=1}^{\min\{N, d\}} \cP_{v_j} (w)^2\\
			&= \| w \|_2^2.
	\end{align*}
\end{proof}
Given a matrix $A = \sum_{j=1}^{\min\{N, d\}} \sigma_j(A) u_j v_j^T$, define 
\begin{align}\label{eq:def.hsvt}
\HSVT_t(A) & = \sum_{j=1}^{\min\{N, d\}} \sigma_j(A) \Ind(\sigma_j(A) \geq t) u_j v_j^T.
\end{align}
We relate $\varphi^{\text{hard}}_{A; t}$ and $\HSVT_t(A)$ next.
\begin{lemma}\label{lem:hsvt_induced}
Given $A \in \Reals^{N \times d}$ and $t \geq 0$,  
	\[	\HSVT_t(A)^i = \varphi^{\text{hard}}_{A; t} \big( A^i \big), \qquad \forall i \in [N].	\]
\end{lemma}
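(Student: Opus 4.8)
The plan is to verify the identity by a direct computation using the singular value decomposition of $A$ together with the orthonormality of its right singular vectors. First I would write out the $i$-th row of $\HSVT_t(A)$ from the definition \eqref{eq:def.hsvt}: extracting the $i$-th row of $\sum_{j} \sigma_j(A) \Ind{\sigma_j(A) \geq t} u_j v_j^T$ simply selects the $i$-th coordinate $(u_j)_i$ of each left singular vector, so that $\HSVT_t(A)^i = \sum_{j} \sigma_j(A) \Ind{\sigma_j(A) \geq t} (u_j)_i v_j^T$.

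Next I would compute $\varphi^{\text{hard}}_{A;t}(A^i)$ directly. Starting from the row-vector representation $A^i = \sum_{j} \sigma_j(A) (u_j)_i v_j^T$ and applying the operator from \eqref{eq:hardsvt.1}, I obtain $\varphi^{\text{hard}}_{A;t}(A^i) = \sum_{j'} \Ind{\sigma_{j'}(A) \geq t} \cP_{v_{j'}}(A^i) = \sum_{j'} \Ind{\sigma_{j'}(A) \geq t}\, A^i v_{j'} v_{j'}^T$. The one substantive step is to evaluate the scalar $A^i v_{j'}$: invoking the orthonormality $v_j^T v_{j'} = \delta_{jj'}$ of the right singular vectors, every cross term in $A^i v_{j'} = \sum_{j} \sigma_j(A)(u_j)_i (v_j^T v_{j'})$ vanishes except $j = j'$, leaving $A^i v_{j'} = \sigma_{j'}(A)(u_{j'})_i$. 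Substituting this back gives $\varphi^{\text{hard}}_{A;t}(A^i) = \sum_{j'} \sigma_{j'}(A)\Ind{\sigma_{j'}(A) \geq t}(u_{j'})_i v_{j'}^T$, which is precisely the expression for $\HSVT_t(A)^i$ derived above; hence the two coincide for every $i \in [N]$.

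There is no genuine obstacle in this lemma, as it is essentially a bookkeeping identity. The only points requiring mild care are the indexing conventions: one must ensure that $\varphi^{\text{hard}}_{A;t}$ is built from the right singular vectors $v_{j'}$ of the same matrix $A$, so that projecting a row $A^i$ onto $\mathrm{span}(v_{j'})$ recovers exactly the associated rank-one component, and that any spurious terms with $j' > \mathrm{rank}(A)$ are harmless because $A^i v_{j'} = 0$ whenever $\sigma_{j'}(A) = 0$. With these conventions fixed, the computation closes immediately.
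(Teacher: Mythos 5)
Your proof is correct and follows essentially the same route as the paper: both expand $A^i$ via the SVD of $A$, apply the projection operator, and use the orthonormality $v_j^T v_{j'} = \delta_{jj'}$ of the right singular vectors to collapse the cross terms (the paper writes $e_i^T u_b$ where you write $(u_b)_i$, which is the same quantity). No gaps; your closing remark about vanishing terms beyond $\mathrm{rank}(A)$ is a harmless extra observation.
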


\begin{proof}  
Let $A = \sum_{j = 1}^{\min\{N, d\}} \sigma_j(A) u_j v_j^T$ be the SVD of $A$. By the orthonormality of the left singular vectors, 
\begin{align*}
	 \varphi^{\text{hard}}_{A; t} \big( A^i \big)
	 	&= \sum_{a=1}^{\min\{N, d \}} \Ind{\sigma_i (A) \geq t} \cP_{v_i} \big(A^a \big)
		= \sum_{a=1}^{\min\{N, d \}} \Ind{\sigma_a (A) \geq t}A^i v_a v_a^T\\
		&= \sum_{a, b=1}^{\min\{N, d \}} \Ind{\sigma_a (A) \geq t} e_i^T \sigma_{b}(A) u_{b} v_{b}^T  v_a v_a^T\\
		&= e_i^T \sum_{a, b=1}^{\min\{N, d \}} \Ind{\sigma_a (A) \geq t} \sigma_{b}(A) u_{b} \delta_{a b} v_a^T\\
		&= e_i^T \HSVT_t(A)
		= \HSVT_t(A)^i.
\end{align*}
This completes the proof. 
\end{proof}

\begin{lemma}\label{lem:subg_projection}
	Let $X \in \Reals^d$ be a sub-gaussian random vector with $\| X \|_{\phi_2} < \infty$. 
	Let $V \in \Reals^{d \times r}$ be a matrix with orthogonal columns. $V$ is allowed to be random as long as it is independent of $X$. 
	Then for any $s > 0$,
	\[	\Prob{ \| VV^T X \|_2^2 > s }	\leq 2r \exp \left( - \frac{s}{2r \|  X \|_{\phi_2}^2 } \right).	\]
\end{lemma}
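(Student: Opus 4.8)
The plan is to exploit the independence of $V$ and $X$ to condition on $V$, reduce the squared projection norm to a sum of $r$ scalar sub-gaussian squares, and then control that sum by a pigeonhole-plus-union-bound argument that sidesteps any dependence among the coordinates.

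First I would condition on a fixed realization of $V$ with orthonormal columns $v_1, \dots, v_r$ (so $V^T V = I_r$; this is the natural reading of ``orthogonal columns'' in our setting, since $V$ will arise from right singular vectors). Because $V$ is independent of $X$, the conditional bound will be uniform over all such $V$ and hence will hold unconditionally. Under this conditioning, $VV^T$ is the orthogonal projection onto $\text{span}(v_1, \dots, v_r)$, so that
\[
	\big\| VV^T X \big\|_2^2 = X^T V V^T V V^T X = X^T V V^T X = \big\| V^T X \big\|_2^2 = \sum_{j=1}^r \big( v_j^T X \big)^2.
\]
Writing $Y_j := v_j^T X$, each $Y_j$ is a centered scalar sub-gaussian variable, and since $\| v_j \|_2 = 1$ the definition of the vector $\phi_2$ norm gives $\| Y_j \|_{\phi_2} \leq \| X \|_{\phi_2}$.

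The key step is to control $\sum_{j=1}^r Y_j^2$ \emph{without} assuming the $Y_j$ are independent. I would invoke the elementary observation that if $\sum_{j=1}^r Y_j^2 > s$, then necessarily $Y_j^2 > s/r$ for at least one index $j$; the union bound then gives
\[
	\Prob{ \sum_{j=1}^r Y_j^2 > s } \leq \sum_{j=1}^r \Prob{ |Y_j| > \sqrt{s/r} }.
\]
Each summand is handled by the standard scalar sub-gaussian tail estimate: from $\| Y_j \|_{\phi_2} \leq \| X \|_{\phi_2}$ we have $\bbE[ e^{\lambda Y_j} ] \leq \exp\big( \frac{1}{2} \lambda^2 \| X \|_{\phi_2}^2 \big)$ for all $\lambda \in \Reals$, and Chernoff's bound applied to both tails yields $\Prob{ |Y_j| > t } \leq 2 \exp\big( - t^2 / (2 \| X \|_{\phi_2}^2) \big)$. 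Substituting $t = \sqrt{s/r}$ and summing over the $r$ terms produces exactly $2r \exp\big( - s / (2 r \| X \|_{\phi_2}^2) \big)$, as claimed.

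The only genuine subtlety is the dependence among the $Y_j$, which precludes a sharper chi-squared-type concentration; the pigeonhole reduction is precisely what lets me avoid this, at the modest cost of the factor $r$ both in front of and inside the exponent. Incorporating the randomness of $V$ is painless, since independence makes the conditional estimate uniform over orthonormal $V$, and integrating over $V$ preserves the bound.
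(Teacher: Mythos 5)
Your proposal is correct and follows essentially the same route as the paper: the same decomposition $\|VV^TX\|_2^2 = \sum_{j=1}^r (v_j^TX)^2$, the same pigeonhole-plus-union-bound reduction to scalar tails, and the same Chernoff bound via the $\phi_2$ norm definition, yielding the identical constant $2r\exp\left(-\frac{s}{2r\|X\|_{\phi_2}^2}\right)$. The only cosmetic difference is that you condition on $V$ up front and note the bound is uniform, whereas the paper inserts the conditioning on $V$ inside the Chernoff step via the tower property; these are equivalent.
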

\begin{proof}
	Observe that $ \| VV^T X \|_2^2 = X^T VV^T X = \sum_{j=1}^r X^T V_j V_j^T X$ where $V_j$ denotes the $j$-th column of $V$.
	Now observe that for any $s > 0$, 
	\begin{align*}
		X^T VV^T X > s		\quad\text{implies that}\quad
				&\exists j \in [r]	\quad\text{s.t.}\quad		X^T V_j V_j^T X > \frac{s}{r}.
	\end{align*}
	This observation leads to the following inequality by applying the union bound:
	\begin{align*}
		\Prob{ X^T VV^T X > s	}	&\leq \sum_{j=1}^r \Prob{X^T V_j V_j^T X > \frac{s}{r}}.
	\end{align*}

	It remains to find an upper bound for
	\begin{align*}
		\Prob{X^T V_j V_j^T X > \frac{s}{r}}
			&= 	\Prob{ V_j^T X > \sqrt{\frac{s}{r}} } + \Prob{ V_j^T X < - \sqrt{\frac{s}{r}} }.
	\end{align*}
	By Markov's inequality (Chernoff bound) and the definition of $\| \cdot \|_{\phi_2}$ (see Definition \ref{defn:phi2}), 
	for any $\lambda > 0$,
	\begin{align*}
		\Prob{ V_j^T X > \sqrt{\frac{s}{r}} }
			&\leq		\Exp{ \exp\left( \lambda   V_j^T X   \right) } \exp\left(- \lambda \sqrt{\frac{s}{r}} \right)\\
			&\leq		\Exp{\Exp{ \exp\left( \lambda   V_j^T X   \right) ~ \big| ~ V }} \exp\left(- \lambda \sqrt{\frac{s}{r}} \right)\\
			&\leq 	\Exp{ \exp\left( \frac{\lambda^2 \|  V_j^T X  \|_{\phi_2}^2}{2} \right) } \exp\left(- \lambda \sqrt{\frac{s}{r}} \right)\\
			&\leq 	\exp\left( \frac{\lambda^2 \|  X  \|_{\phi_2}^2}{2} \right) \exp\left(- \lambda \sqrt{\frac{s}{r}} \right).
	\end{align*}
	Optimizing over $\lambda > 0$ (observe that $-\lambda a + \frac{\lambda^2}{2b}$ is minimized to $- \frac{a^2 b}{2}$ with $\lambda = ab$),
	\begin{align*}
		\Prob{  V_j^T X   > \sqrt{\frac{s}{r}}  }
			&\leq \exp \left( - \frac{s}{2r \|  X \|_{\phi_2}^2 } \right).
	\end{align*}
	We obtain the same upper bound on $\Prob{  V_j^T X  < - \sqrt{\frac{s}{r}}  }$.
	
	All in all,
	\[	\Prob{ X^T VV^T X > s	}	\leq 2r \exp \left( - \frac{s}{2r \| X \|_{\phi_2}^2 } \right).	\]
\end{proof}

%%%%

\subsection{Leave-One-Out Analysis: A High-probability Event }\label{ssec:high.prob}

Recall that $\hM = \HSVT_{\sigma_r(Y)}(Y)$ as per \eqref{eq:mhat} and \eqref{eq:def.hsvt}. In what follows, we shall  
denote threshold $t^* = \sigma_r(Y)$, i.e. $\hM = \HSVT_{t^*}(Y)$. We define an event $\cE$
\begin{align}
	 \cE &:= \Bigg\{ \forall i \in [N],~~ \left\| \varphi_{Y;t^*}^{\text{hard}} \big(  Y^i - p M^i \big) \right\|_2^2 
					\leq  4 \big\| Y^i - p M^i \big\|_{\phi_2}^2  r \log(2dN^5)	\nonumber\\
				&\qquad\qquad\qquad\qquad\qquad%\qquad\qquad\qquad
					\times\Bigg[ 1 + \frac{2 d^2 \log(2dN^5)}{r}\left(  \frac{  \big\| Y^i - p M^i \big\|_{\phi_2} }{p \sigma_r(M) - 3 \| Y - p M \|_2} \right)^2 \Bigg]  \Bigg\}.
					\label{eqn:cE_3p}
\end{align}
In the following lemma, we show this event also has high probability.

\begin{lemma}\label{lem:ell2_upper_leave_one_out}
Suppose that $p > \frac{ 3 \| Y - pM \|_2}{\sigma_r(M)}$. Then % and $\| Y - pM \|_2 < t^* < \sigma_r(M) - \| Y - pM \|_2$. Then 
\[	\Prob{\cE} \leq \frac{2}{N^4}.		\]
\end{lemma}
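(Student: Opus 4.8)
The plan is to bound, for each fixed $i \in [N]$, the retained projection of the $i$-th noise row and then take a union bound over $i$. Write $Z := Y - pM$, let $Z^i := Y^i - pM^i$ denote its $i$-th row, and let $P_Y := \sum_{j :\, \sigma_j(Y) \geq t^*} v_j v_j^T$ be the projection onto the right singular directions retained at threshold $t^* = \sigma_r(Y)$, so that $\varphi_{Y;t^*}^{\text{hard}}(Z^i) = Z^i P_Y$ in the row-vector representation. Under the hypothesis $p\,\sigma_r(M) > 3\|Z\|_2$, Weyl's inequality gives $\sigma_r(Y) \geq p\,\sigma_r(M) - \|Z\|_2 > 2\|Z\|_2 \geq 2\,\sigma_{r+1}(Y)$, so $P_Y$ retains exactly the top $r$ right singular vectors and has rank $r$. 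The central difficulty is that $P_Y$ depends on all of $Y$, and in particular on the very row $Z^i$ being projected, which blocks a direct application of Lemma~\ref{lem:subg_projection}. To \emph{decouple} this dependence I introduce the leave-one-out matrix $Y^{(i)}$, obtained from $Y$ by replacing its $i$-th row with the noiseless $pM^i$, and let $P_{Y^{(i)}}$ be the top-$r$ right projection of $Y^{(i)}$. Because the rows of $Z$ are independent and row $i$ of $Y^{(i)}$ is deterministic, $P_{Y^{(i)}}$ is independent of $Z^i$.

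I then split, by the triangle inequality,
\[
\big\| \varphi_{Y;t^*}^{\text{hard}}(Z^i) \big\|_2 = \|Z^i P_Y\|_2 \leq \|Z^i P_{Y^{(i)}}\|_2 + \|Z^i (P_Y - P_{Y^{(i)}})\|_2,
\]
and treat the two terms separately. For the first, \emph{stochastic} term I condition on the rows $\{Z^j : j \neq i\}$, which fix $P_{Y^{(i)}}$; taking $V$ to be an orthonormal basis of the top-$r$ right singular subspace of $Y^{(i)}$ and $X = (Z^i)^T$ in Lemma~\ref{lem:subg_projection} yields, for every $s > 0$,
\[
\Prob{\, \|Z^i P_{Y^{(i)}}\|_2^2 > s \mid \{Z^j\}_{j \neq i} \,} \leq 2r \exp\Big( - \tfrac{s}{2 r \|Z^i\|_{\phi_2}^2} \Big).
\]
Choosing $s = 2 r \|Z^i\|_{\phi_2}^2 \log(2 d N^5)$ makes the right-hand side at most $\frac{r}{d N^5} \leq \frac{1}{N^5}$ (using $r \leq d$). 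This both supplies the leading term $\|Z^i P_{Y^{(i)}}\|_2^2 \leq 2 r \|Z^i\|_{\phi_2}^2 \log(2 d N^5)$ (the ``$1$'' inside the bracket defining $\cE$) and, after a union bound over $i \in [N]$, accounts for the full $2/N^4$ failure probability; recall $\|Z^i\|_{\phi_2} \leq c_1(\tau^\star + K(p))$ by Proposition~\ref{prop:subG_Z}.

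For the second, \emph{perturbation} term I argue deterministically. Using $\|Z^i (P_Y - P_{Y^{(i)}})\|_2 \leq \|Z^i\|_2\,\|P_Y - P_{Y^{(i)}}\|_2$ and Wedin's $\sin\Theta$ theorem, $\|P_Y - P_{Y^{(i)}}\|_2 \leq \|Y - Y^{(i)}\|_2 / \delta$, where $\delta$ is the relevant spectral gap. Since $Y - Y^{(i)} = e_i (Z^i)$ is rank one, $\|Y - Y^{(i)}\|_2 = \|Z^i\|_2$; and Weyl's inequality gives $\delta \geq \sigma_r(Y) - \sigma_{r+1}(Y^{(i)}) \geq (p\,\sigma_r(M) - \|Z\|_2) - (\|Z\|_2 + \|Z^i\|_2) \geq p\,\sigma_r(M) - 3\|Z\|_2 > 0$, using $\|Z^i\|_2 \leq \|Z\|_2$. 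Hence $\|Z^i (P_Y - P_{Y^{(i)}})\|_2 \leq \|Z^i\|_2^2 / (p\,\sigma_r(M) - 3\|Z\|_2)$. The crude deterministic entrywise bound $|Z^i_j| = |Y^i_j - pM^i_j| \leq 1$ yields $\|Z^i\|_2^2 \leq d$, so this term squared is at most $d^2 / (p\,\sigma_r(M) - 3\|Z\|_2)^2$, which is dominated by $\frac{2 d^2 \log(2 d N^5)}{r} \big( \frac{\|Z^i\|_{\phi_2}}{p\,\sigma_r(M) - 3\|Z\|_2} \big)^2$ times the prefactor $4 r \|Z^i\|_{\phi_2}^2 \log(2 d N^5)$, once $\|Z^i\|_{\phi_2}$ is bounded below by a constant and $\log(2 d N^5) \geq 1$.

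Combining the two bounds through $(a+b)^2 \leq 2a^2 + 2b^2$ reproduces exactly the inequality defining $\cE$ for each $i$, and the union bound over $i$ shows the failure event $\cE^c$ has probability at most $2/N^4$ (equivalently, $\cE$ holds with probability at least $1 - 2/N^4$, as the surrounding text asserts). The main obstacle is the decoupling step: one must certify that the perturbation term is genuinely lower order, which rests on a quantitative eigengap --- guaranteed here by the hypothesis $p > 3\|Y-pM\|_2/\sigma_r(M)$ together with the spectral-norm control of Lemma~\ref{lem:noise_singular} --- and on checking that the otherwise crude bound $\|Z^i\|_2 \leq \sqrt{d}$ is still swallowed by the target right-hand side.
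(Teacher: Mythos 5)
Your overall architecture is the same as the paper's: a leave-one-out matrix to decouple row $i$ from the retained singular subspace, Lemma~\ref{lem:subg_projection} for the decoupled projection term, and a Davis--Kahan/Wedin bound with spectral gap $p\sigma_r(M) - 3\|Y - pM\|_2$ for the perturbation term. (Replacing row $i$ by $pM^i$ rather than zeroing it, as the paper does, is a harmless and arguably cleaner variant.) However, there is a genuine gap in how you handle the perturbation term. The event $\cE$ is normalized by the row's \emph{own} sub-gaussian norm: expanded, its second term reads
\[
\frac{8\, d^2\, \big\| Y^i - pM^i \big\|_{\phi_2}^4 \, \log^2(2dN^5)}{\big( p\sigma_r(M) - 3\|Y - pM\|_2 \big)^2}.
\]
You bound the perturbation term via the deterministic estimate $\|Y^i - pM^i\|_2^2 \leq d$, which after squaring contributes $2d^2/\big(p\sigma_r(M)-3\|Y-pM\|_2\big)^2$; for this to be dominated by the display above you need $\|Y^i - pM^i\|_{\phi_2}^2 \log(2dN^5) \geq 1/2$, which you assume (``once $\|Z^i\|_{\phi_2}$ is bounded below by a constant'') but never prove --- and it is false in general. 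Proposition~\ref{prop:subG_Z} only bounds $\|Y^i - pM^i\|_{\phi_2}$ from \emph{above}; from below it can be arbitrarily small: for a mixture component whose rankings are (nearly) deterministic one has $X^i = M^i$ almost surely, so $\|Y^i - pM^i\|_{\phi_2} = \Theta(K(p))$, which by \eqref{eqn:Kp.0} decays like $1/\log(1/p)$ as $p \to 0$. Nothing in the hypothesis $p > 3\|Y - pM\|_2/\sigma_r(M)$ excludes that regime, so your domination step can reverse, and the argument then fails to place the sample inside $\cE$ --- precisely because the whole point of stating $\cE$ in terms of $\|Y^i - pM^i\|_{\phi_2}$ is that the bound must shrink with the row's noise level, while your bound $d$ does not.

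The fix is the paper's second application of sub-gaussian concentration: instead of $\|Y^i - pM^i\|_2^2 \leq d$, apply Lemma~\ref{lem:subg_projection} with $V$ an orthonormal basis of all of $\Reals^d$ (so $VV^T = I_d$) and $s_2 = 2d\,\|Y^i - pM^i\|_{\phi_2}^2 \log(2dN^5)$, giving $\|Y^i - pM^i\|_2^2 \leq 2d\,\|Y^i - pM^i\|_{\phi_2}^2 \log(2dN^5)$ with probability at least $1 - 1/N^5$. This probabilistic bound carries the factor $\|Y^i - pM^i\|_{\phi_2}^2$ and is therefore swallowed by the event's second term with no side condition. It costs a second failure mode of $1/N^5$ per row, so the union bound over $i \in [N]$ yields exactly the stated $2/N^4$; the fact that your accounting produced only $1/N^4$ from a single failure mode per row is itself a sign that a probabilistic ingredient is missing. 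The remainder of your proof (independence of $P_{Y^{(i)}}$ from $Z^i$, the Weyl estimates $\sigma_r(Y) \geq p\sigma_r(M) - \|Z\|_2$ and $\sigma_{r+1}(Y^{(i)}) \leq 2\|Z\|_2$, and the resulting gap) is correct and matches the paper.
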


\begin{proof} We establish the proof in four steps. 
\paragraph{Step 1: Decomposition of $Y$ by `Leave-one-row-out'.}
Fix $i \in [N]$. We define two matrices $Y^{\circ}, Y^{\times} \in \Reals^{N \times d}$ so that $Y = Y^{\circ} + Y^{\times}$:
\[
	\big(Y^{\circ}\big)^j = 
		\begin{cases}
			Y^j	&	\text{if } j = i,\\
			0	&	\text{if } j \neq i.
		\end{cases}
	\qquad\text{and}\qquad
	\big(Y^{\times}\big)^j = 
		\begin{cases}
			0	&	\text{if } j = i,\\
			Y^j	&	\text{if } j \neq i.
		\end{cases}
\]
That, $Y^\circ$ contains only $i$th row of $Y$ and rest $0$s while $Y^\times$ contains all but $i$th row of $Y$ with $i$th row being $0$s.

Let $v_1^Y, \ldots, v_r^Y$ and $v_1^{\times}, \ldots, v_r^{\times}$ denote the top $r$ right singular vectors of $Y$ and $Y^{\times}$, respectively. 
Also, we note that $\varphi_{Y;t^*}^{\text{hard}} \big(  Y^i - p M^i \big) = \cP_{\vspan{v_1^Y, \ldots, v_r^Y}} \big(  Y^i - p M^i \big)$ and rewrite it as 
\begin{align*}
	\varphi_{Y;t^*}^{\text{hard}} \big(  Y^i - p M^i \big)
		&=  \cP_{\vspan{v_1^Y, \ldots, v_r^Y}} \big(  Y^i - p M^i \big)\\
		& =  \cP_{\vspan{v_1^{\times}, \ldots, v_r^{\times}}} \big(  Y^i - p M^i \big) \\
			&\quad+ \Big[ \cP_{\vspan{v_1^Y, \ldots, v_r^Y}} \big(  Y^i - p M^i \big) - \cP_{\vspan{v_1^{\times}, \ldots, v_r^{\times}}} \big(  Y^i - p M^i \big) \Big].
\end{align*}
Then Young's inequality for products yields
\begin{align}
	\left\| \varphi_{Y;t^*}^{\text{hard}} \big(  Y^i - p M^i \big) \right\|_2^2 
		&\leq 2\left\|  \cP_{\vspan{v_1^{\times}, \ldots, v_r^{\times}}} \big(  Y^i - p M^i \big) \right\|_2^2	\nonumber\\
			&\quad+ 2 \left\| \cP_{\vspan{v_1^Y, \ldots, v_r^Y}} \big(  Y^i - p M^i \big) - \cP_{\vspan{v_1^{\times}, \ldots, v_r^{\times}}} \big(  Y^i - p M^i \big) \right\|_2^2	\nonumber\\
		&\leq 2\left\| \cP_{\vspan{v_1^{\times}, \ldots, v_r^{\times}}} \big(  Y^i - p M^i \big)\right\|_2^2		\label{eqn:leave_one_out.1}\\
			&\quad+  2\left\|  \cP_{\vspan{v_1^Y, \ldots, v_r^Y}} - \cP_{\vspan{v_1^{\times}, \ldots, v_r^{\times}}} \right\|_2^2 \left\|  Y^i - p M^i \right\|_2^2.	\label{eqn:leave_one_out.2}
\end{align}

\paragraph{Step 2: Finding an upper bound on \eqref{eqn:leave_one_out.1}.}
Recall that the rows in $Y$ are independent. Therefore, the row vector of interest, $Y^i - pM^i$, is independent of the matrix $Y^{\times}$, 
and hence, is independent of the subspace spanned by $\{ v_1^{\times}, \ldots, v_r^{\times} \}$. By applying Lemma \ref{lem:subg_projection}, 
we observe that for any $s_1 > 0$,
\begin{align*}
	&\Prob{\left\| \mathcal{P}_{\vspan{v_1^{\times}, \ldots, v_r^{\times}} } \big(  Y^i - p M^i \big) \right\|_2^2 > s_1}
		\leq 2 r \exp \left( - \frac{s_1}{2r \left\|  Y^i - p M^i  \right\|_{\phi_2}^2} \right).
\end{align*}
Choosing $s_1 = 2r \left\|  Y^i - p M^i  \right\|_{\phi_2}^2 \log ( 2r N^5)$ leads to
\begin{equation}\label{eqn:leave_one_out.3}
	\Prob{\left\| \mathcal{P}_{\vspan{v_1^{\times}, \ldots, v_r^{\times}} } \big(  Y^i - p M^i \big) \right\|_2^2 
		> 2r \left\|  Y^i - p M^i  \right\|_{\phi_2}^2 \log ( 2r N^5)}
		\leq \frac{1}{N^5}.
\end{equation}

\paragraph{Step 3: Finding an upper bound on \eqref{eqn:leave_one_out.2}.}
Next, we apply Davis-Kahan $\sin \Theta$ theorem  to find an upper bound on 
$\left\|  \cP_{\vspan{v_1^Y, \ldots, v_r^Y}} - \cP_{\vspan{v_1^{\times}, \ldots, v_r^{\times}}} \right\|_2$.
Recall that $Y = pM + (Y - pM)$. Since $\rank(M) = r$, $\sigma_r(M) > \sigma_{r+1}(M) = 0$. By Weyl's theorem, 
\begin{align*}
	\sigma_r(Y) &\geq \sigma_r(M) - \| Y - pM \|_2.
\end{align*}
Similarly, since $Y = Y^{\times} + Y^{\circ}$,
\begin{align*}
	\sigma_{r+1}(Y^{\times}) &\leq \sigma_{r+1}(Y) + \sigma_1(Y^{\circ}) \leq \sigma_{r+1}(M) + \| Y - pM \|_2 + \| Y^{\circ} \|_2\\
		&= \| Y - pM \|_2 + \| Y^i - p M^i \|_2\\
		&\leq 2 \| Y - pM \|_2,
\end{align*}
Therefore, if $p > \frac{3 \| Y - pM \|_2}{\sigma_r(M)}$, then $\sigma_r(Y) > \sigma_{r+1}( Y^{\times} )$. Then by Davis-Kahan theorem,
\begin{align*}
	\left\|  \cP_{\vspan{v_1^Y, \ldots, v_r^Y}} - \cP_{\vspan{v_1^{\times}, \ldots, v_r^{\times}}} \right\|_2
		&\leq  \frac{\| Y^{\circ} \|_2}{ \sigma_r(Y) - \sigma_{r+1}( Y^{\times} ) }\\
		&\leq \frac{\| Y^i - pM^i \|_2}{ p \sigma_r(M) - 3 \| Y - pM \|_2 }.
\end{align*}
Again by applying Lemma \ref{lem:subg_projection}, for any $s_2 > 0$,
\begin{align*}
	&\Prob{\left\|  Y^i - p M^i \right\|_2^2 > s_2}
		\leq 2 d \exp \bigg( - \frac{s_2}{2d \left\|  Y^i - p M^i  \right\|_{\phi_2}^2} \bigg).
\end{align*}
Choosing $s_2 = 2d \left\|  Y^i - p M^i  \right\|_{\phi_2}^2 \log ( 2d N^5)$ leads to
\begin{equation}\label{eqn:leave_one_out.4}
	\Prob{\left\|  Y^i - p M^i \right\|_2^2 > 2d \left\|  Y^i - p M^i  \right\|_{\phi_2}^2 \log ( 2d N^5)}
		\leq \frac{1}{N^5}.
\end{equation}

\paragraph{Step 4: Concluding the proof.}
Putting \eqref{eqn:leave_one_out.1}, \eqref{eqn:leave_one_out.2}, \eqref{eqn:leave_one_out.3}, \eqref{eqn:leave_one_out.4} together, 
we obtain the following: for each $i \in [N]$,
\begin{align*}
	\left\| \varphi_{Y;t^*}^{\text{hard}} \big(  Y^i - p M^i \big) \right\|_2^2
		&\leq 4 \big\| Y^i - p M^i \big\|_{\phi_2}^2 \bigg[ r \log(2r N^5) + \frac{2 \| Y^i - pM^i \|_{\phi_2}^2 d^2 \log^2(2dN^5)}{( p \sigma_r(M) - 3 \| Y - pM \|_2 )^2} \bigg]
\end{align*}
with probability at least $1 - \frac{2}{N^5}$. The conclusion follows by taking union bound over $i \in [N]$.
Lastly, observe that $\| Y^i - pM^i \|_{\phi_2} \leq c_1 c_2 \big(\tau^\star +  K(p) \big)$ by Proposition \ref{prop:subG_Z}.
\end{proof}

\subsection{A Pre-cursor of \ref{thm:main} and Its Proof}\label{ssec:full.result}

Here we present a more generic version of Theorem \ref{thm:main} from which the statement of Theorem \ref{thm:main}
will follow immediately. 

\begin{theorem}[Full Version of Theorem \ref{thm:main}]\label{thm:main_clustering_temp}
Let $p > \frac{3  \|Y - pM \|_2}{\sigma_r(M)}$. Then with probability at least $1 - \frac{2}{N^4}$, 
the following inequalities hold for all $i_1, i_2 \in [N]$:
\begin{itemize}
\item when $\alpha(i_1) = \alpha(i_2)$:
\begin{align*}
	\big\| \widehat{M}^{i_1} - \widehat{M}^{i_2} \big\|_2 
	&  \leq 2c_1 c_2  ( \tau^\star+ K(p))  \sqrt{ r \log(2dN^5)}	
			\left[ 1 + \frac{d \sqrt{2\log(2dN^5)}}{\sqrt{r}}  \frac{  c_1 c_2 \big( \tau^\star +  K(p) \big) }{p \sigma_r(M) - 3  \|Y - pM \|_2}  \right] .
\end{align*}
\item when $\alpha(i_1) \neq \alpha(i_2)$:
\begin{align*}
	\big\| \widehat{M}^{i_1} - \widehat{M}^{i_2} \big\|_2 
		&\geq \left( 1 - \bigg( \frac{ \| Y - pM \|_2 }{p \sigma_r(M) - \| Y - pM \|_2 }\bigg)^2\right)^{\frac12} \big\| p M^{i_1} - pM^{i_2} \big\|_2\\
		&\quad - 4c_1 c_2  ( \tau^\star+ K(p))  \sqrt{ r \log(2dN^5)}	
			\left[ 1 + \frac{d \sqrt{2\log(2dN^5)}}{\sqrt{r}}  \frac{  c_1 c_2 \big( \tau^\star +  K(p) \big) }{p \sigma_r(M) - 3  \|Y - pM \|_2}  \right].
\end{align*}
\end{itemize}
\end{theorem}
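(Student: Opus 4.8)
The plan is to exploit that, under the stated gap hypothesis, $\widehat{M} = \HSVT_{t^*}(Y)$ with $t^* = \sigma_r(Y)$ retains exactly the top $r$ singular components of $Y$: indeed $\rank(M)=r$ and $p > 3\|Y-pM\|_2/\sigma_r(M)$ force $\sigma_r(Y) \ge p\sigma_r(M) - \|Y-pM\|_2 > 0 = \sigma_{r+1}(pM)$ by Weyl, so the thresholding selects a well-defined $r$-dimensional subspace. Write $V_Y := \vspan{v_1^Y,\ldots,v_r^Y}$ and $V_M := \vspan{v_1^M,\ldots,v_r^M}$ for the top-$r$ right singular subspaces of $Y$ and of $M$. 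By Lemma \ref{lem:hsvt_induced}, each row obeys $\widehat{M}^i = \varphi^{\text{hard}}_{Y;t^*}(Y^i) = \cP_{V_Y}(Y^i)$, a \emph{linear} orthogonal projection. Splitting $Y^i = pM^i + (Y^i - pM^i)$ and using linearity, I would decompose
\[
\widehat{M}^{i_1} - \widehat{M}^{i_2} = \cP_{V_Y}\big(pM^{i_1} - pM^{i_2}\big) + \Big[\cP_{V_Y}(Y^{i_1} - pM^{i_1}) - \cP_{V_Y}(Y^{i_2} - pM^{i_2})\Big],
\]
isolating a deterministic signal term from a stochastic noise term. Working on the event $\cE$ of Lemma \ref{lem:ell2_upper_leave_one_out} (probability $\ge 1 - 2/N^4$), taking square roots in the definition of $\cE$, applying $\sqrt{1+x}\le 1+\sqrt{x}$, and invoking $\|Y^i-pM^i\|_{\phi_2}\le c_1c_2(\tau^\star+K(p))$ from Proposition \ref{prop:subG_Z}, I obtain the uniform per-row bound
\[
\big\|\cP_{V_Y}(Y^i - pM^i)\big\|_2 \le 2c_1c_2(\tau^\star+K(p))\sqrt{r\log(2dN^5)}\left[1 + \frac{d\sqrt{2\log(2dN^5)}}{\sqrt{r}}\,\frac{c_1c_2(\tau^\star+K(p))}{p\sigma_r(M)-3\|Y-pM\|_2}\right]
\]
for every $i\in[N]$, which is exactly the bracketed quantity appearing in both parts of the claim.

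For the intra-cluster case $\alpha(i_1)=\alpha(i_2)$, the corresponding rows of the mean matrix coincide, $M^{i_1}=M^{i_2}$, so the signal term vanishes identically and only the noise term survives. A triangle inequality over the two projected noise rows, each controlled by the display above, then yields the stated upper bound (up to the numerical constant, which is immaterial for the clustering guarantee since it is absorbed into the choice of $t_2$).

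For the inter-cluster case $\alpha(i_1)\neq\alpha(i_2)$, I would apply the \emph{reverse} triangle inequality to peel off the noise, bounding it exactly as above, and then lower bound the projected signal geometrically. The crucial observation is that $pM^{i_1}-pM^{i_2}$ lies in the row space of $pM$, i.e.\ $w := pM^{i_1}-pM^{i_2}\in V_M$. Hence $\cP_{V_M}(w)=w$ and
\[
\big\|\cP_{V_Y}(w)\big\|_2^2 = \|w\|_2^2 - \big\|\cP_{V_Y^\perp}(w)\big\|_2^2 \ge \Big(1 - \big\|\cP_{V_Y^\perp}\cP_{V_M}\big\|_2^2\Big)\|w\|_2^2 = \Big(1 - \big\|\cP_{V_Y}-\cP_{V_M}\big\|_2^2\Big)\|w\|_2^2,
\]
using that $V_Y$ and $V_M$ are equidimensional. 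The subspace perturbation is then handled by the Davis--Kahan $\sin\Theta$ theorem applied to $Y = pM + (Y-pM)$: since $\sigma_{r+1}(pM)=0$, Weyl gives $\sigma_{r+1}(Y)\le\|Y-pM\|_2$ and $\sigma_r(pM)=p\sigma_r(M)$, whence $\|\cP_{V_Y}-\cP_{V_M}\|_2 \le \|Y-pM\|_2/(p\sigma_r(M)-\|Y-pM\|_2)$. Substituting reproduces the $\big(1-(\cdot)^2\big)^{1/2}\|pM^{i_1}-pM^{i_2}\|_2$ factor, and subtracting the two noise contributions gives the lower bound.

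The step I expect to be the genuine obstacle is the \emph{row-wise} ($\ell_2$, i.e.\ $L_{2,\infty}$) control of the projected noise $\cP_{V_Y}(Y^i-pM^i)$, rather than an aggregate Frobenius bound: the difficulty is that the projection subspace $V_Y$ depends on $Y^i$ itself, breaking independence. This is precisely what the leave-one-out argument behind event $\cE$ resolves, by comparing $V_Y$ against the subspace $V^\times$ of the matrix with row $i$ removed (which \emph{is} independent of $Y^i$) and absorbing the swap via Davis--Kahan. Granting $\cE$, the remainder is bookkeeping: the signal/noise split via (reverse) triangle inequality, the elementary projection identity above, and one further invocation of Davis--Kahan for the signal term. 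The only points needing care are verifying that the hypothesis $p>3\|Y-pM\|_2/\sigma_r(M)$ keeps every denominator strictly positive and that no singular value ties at the threshold $t^*=\sigma_r(Y)$, both of which follow from Weyl's inequality under the stated assumption.
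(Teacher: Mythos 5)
Your proposal is correct and follows essentially the same route as the paper's own proof: the identical signal/noise decomposition $\widehat{M}^{i_1}-\widehat{M}^{i_2} = \varphi^{\text{hard}}_{Y;t^*}(pM^{i_1}-pM^{i_2}) + \varphi^{\text{hard}}_{Y;t^*}(Y^{i_1}-pM^{i_1}) - \varphi^{\text{hard}}_{Y;t^*}(Y^{i_2}-pM^{i_2})$, the same event $\cE$ from the leave-one-out Lemma \ref{lem:ell2_upper_leave_one_out} to control the projected noise rows, the triangle (resp.\ reverse triangle) inequality for the intra- (resp.\ inter-) cluster case, and the same Pythagorean-plus-Davis--Kahan lower bound for the projected signal using $\cP_{V_M}(pM^{i_1}-pM^{i_2})=pM^{i_1}-pM^{i_2}$. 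Your handling of constants (noting they are absorbed into $t_2$) is consistent with the paper's own bookkeeping, so there is no gap to report.
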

\begin{proof}
We establish the Theorem statement under condition $\cE$. As argued in Lemma \ref{lem:ell2_upper_leave_one_out}, $\Prob{\cE^c} \leq 2/N^4$. Therefore, the statement of theorem holds with probability at least $1-2/N^4$. To that end, we shall assume $\cE$ holds. 
Now, for any $i_1, i_2 \in [N]$
\begin{align}
	\widehat{M}^{i_1} - \widehat{M}^{i_2}
		&= \varphi_{Y;t^*}^{\text{hard}} \big(Y^{i_1}\big) - \frac{1}{\hat{p}} \varphi_{Y;t^*}^{\text{hard}} \big(Y^{i_2} \big)	\nonumber\\
		&=  \varphi_{Y;t^*}^{\text{hard}} \big(Y^{i_1} - pM^{i_1} \big) -  \varphi_{Y;t^*}^{\text{hard}} \big(Y^{i_2} - pM^{i_2} \big)
			+ \varphi_{Y;t^*}^{\text{hard}} \big(pM^{i_1} - pM^{i_2} \big).  	\label{eqn:clustering.term.0}
\end{align}
\paragraph{For $i_1, i_2$ with $\alpha(i_1) = \alpha(i_2)$:}
$M^{i_1} = M^{i_2}$. By triangle inequality, \eqref{eqn:clustering.term.0} yields
\begin{equation}\label{eqn:clustering.term.1}
	\big\| \widehat{M}^{i_1} - \widehat{M}^{i_2} \big\|_2
		\leq  \big\| \varphi_{Y;t^*}^{\text{hard}} \big(Y^{i_1} - pM^{i_1} \big) \big\|_2 
			+ \big\| \varphi_{Y;t^*}^{\text{hard}} \big(Y^{i_2} - pM^{i_2} \big) \big\|_2
\end{equation}
When conditioned $\cE$, for each $i \in [N]$,
\begin{align}
	\left\| \varphi_{Y;t^*}^{\text{hard}} \big(  Y^i - p M^i \big) \right\|_2 
		&\leq  \big\| Y^i - p M^i \big\|_{\phi_2}  \sqrt{ r}  \sqrt{\log(2dN^5)}
		\nonumber\\
			&\qquad\times\left[ 1 + \frac{2 d^2 \log(2dN^5)}{r}\left(  \frac{  \big\| Y^i - p M^i \big\|_{\phi_2} }{p \sigma_r(M) - 3 \|Y - pM \|_2} \right)^2 \right]^{1/2}
			\nonumber\\
		&\leq  c_1 c_2  ( \tau^\star+ K(p))  \sqrt{ r \log(2dN^5)}	\nonumber\\
			&\qquad\times\left[ 1 + \frac{2 d^2 \log(2dN^5)}{r}\left(  \frac{  c_1 c_2 \big( \tau^\star +  K(p) \big) }{p \sigma_r(M) - 3  \|Y - pM \|_2} \right)^2 \right]^{1/2}.		\label{eqn:clustering.term.2}
\end{align}
The first inequality follows from $\cE$ and second inequality follows from Proposition \ref{prop:subG_Z} and \eqref{eq:constant.c2}.
Amalgamating \eqref{eqn:clustering.term.1} with \eqref{eqn:clustering.term.2} and further simplifying by the fact that 
$\sqrt{a+b} \leq \sqrt{a} + \sqrt{b}$ for $a, b > 0$ yields the first conclusion.

\paragraph{For $i_1, i_2$ with $\alpha(i_1) \neq\alpha(i_2)$:}
Observe that $M^i = \mu^{\alpha(i)}$. Applying triangle inequality to \eqref{eqn:clustering.term.0}, we have
\begin{align}
	\big\| \widehat{M}^{i_1} - \widehat{M}^{i_2} \big\|_2
		&\geq \bigg\| \varphi_{Y;t^*}^{\text{hard}} \big(pM^{i_1} - pM^{i_2} \big) \bigg\|_2	\nonumber\\
			&\quad-  \big\| \varphi_{Y;t^*}^{\text{hard}} \big(Y^{i_1} - pM^{i_1} \big) \big\|_2 
			-  \big\| \varphi_{Y;t^*}^{\text{hard}} \big(Y^{i_2} - pM^{i_2} \big) \big\|_2.	\label{eqn:clustering.term.3}
\end{align}
The (twice and negative of) upper bound in \eqref{eqn:clustering.term.2} will provide lower bound last two terms of \eqref{eqn:clustering.term.3}. For
the first term, $\bigg\| \varphi_{Y;t^*}^{\text{hard}} \big(M^{i_1} - M^{i_2} \big) \bigg\|_2$, let $\alpha(i_1) = a \neq \alpha(i_2) = b$ with $a, b \in [r]$; and
hence $M^{i_1} = \mu^a, ~M^{i_2} = \mu_b$.  Let $v_1^M, \ldots, v_r^M$ denote the top-$r$ right singular vectors of $M$. Then we observe that 
$\mu^a - \mu^b \in \text{span}\{ v_1^M, \ldots, v_r^M\}$. Therefore, 
\[	\cP_{\text{span}\{v_1^M, \ldots, v_r^M\}} \big(  \mu^a - \mu^b \big) =  \mu^a - \mu^b.	\]
Since $t^* = \sigma_r(Y)$, $\varphi_{Y;t^*}^{\text{hard}} \big(  \mu^a - \mu^b \big) = \cP_{\text{span}\{v_1^Y, \ldots, v_r^Y\}}\big(  \mu^a - \mu^b \big)$, 
and by Pythagorean theorem,
\begin{align}
	\left\| \varphi_{Y;t^*}^{\text{hard}} \big(  \mu^a - \mu^b \big) \right\|_2^2
		&= \big\|  \mu^a - \mu^b \big\|_2^2 - \left\| \cP_{\text{span}\{v_1^Y, \ldots, v_r^Y\}}\big(  \mu^a - \mu^b \big) 
			- \big(  \mu^a - \mu^b \big) \right\|_2^2		\nonumber\\
		&= \big\|  \mu^a - \mu^b \big\|_2^2 - \left\| \Big( \cP_{\text{span}\{v_1^Y, \ldots, v_r^Y\}}\big(  \mu^a - \mu^b \big) 
			- \cP_{\text{span}\{v_1^M, \ldots, v_r^M\}} \Big) \big(  \mu^a - \mu^b \big) \right\|_2^2		\nonumber\\
		&\geq \left( 1 - \bigg( \frac{ \| Y - pM \|_2 }{p \sigma_r(M) - \| Y - pM \|_2 }\bigg)^2\right)
			\big\|  \mu^a - \mu^b \big\|_2^2.	\label{eqn:mixture_subg.term.4}
\end{align}
In above, the last inequality follows again using Davis-Kahan Sin $\Theta$ Theorem. Combining \eqref{eqn:mixture_subg.term.4}
and \eqref{eqn:clustering.term.2} in \eqref{eqn:clustering.term.3} concludes the proof.
\end{proof}

\subsection{Concluding Proof of Theorem \ref{thm:main}}\label{ssec:conclude}

\begin{lemma}\label{lem:penultimate.1}
Let $p \geq \frac{4\| Y - pM \|_2 }{\sigma_r(M)}$ and 
\begin{align}\label{eq:separation}
p \Gamma  & \geq 9 c_1 c_2  ( \tau^\star+ K(p))  \sqrt{ r \log(2dN^5)}	
			\left[ 1 + \frac{d \sqrt{2\log(2dN^5)}}{\sqrt{r}}  \frac{  4c_1 c_2 \big( \tau^\star +  K(p) \big) }{p \sigma_r(M)}  \right].
\end{align}
Then all points are clustered correctly with probability at least $1-2/N^4$ using distance threshold 
\begin{align}\label{eq:dist.threshold}
\threshold & = 2c_1 c_2  ( \tau^\star+ K(p))  \sqrt{ r \log(2dN^5)}	
			\left[ 1 + \frac{d \sqrt{2\log(2dN^5)}}{\sqrt{r}}  \frac{ 4 c_1 c_2 \big( \tau^\star +  K(p) \big) }{p \sigma_r(M)}  \right]
\end{align}
using the algorithm described in Section \ref{sec:algorithm}.
\end{lemma}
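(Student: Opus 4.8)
The plan is to read off Lemma~\ref{lem:penultimate.1} as an immediate consequence of the pre-cursor Theorem~\ref{thm:main_clustering_temp}, once the bounds there are simplified using the slightly stronger spectral condition $p\ge 4\|Y-pM\|_2/\sigma_r(M)$ assumed here. Recall that single-linkage clustering with threshold $\threshold$ recovers the true partition exactly provided the threshold graph (connect $i_1,i_2$ iff $\|\widehat M^{i_1}-\widehat M^{i_2}\|_2\le \threshold$) has the true clusters as its connected components; this holds as soon as every intra-cluster distance is at most $\threshold$ (so each true cluster is a clique, hence connected) and every inter-cluster distance strictly exceeds $\threshold$ (so no cross edges appear). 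Thus the whole task reduces to verifying these two inequalities on the high-probability event $\cE$ of Lemma~\ref{lem:ell2_upper_leave_one_out}, which carries probability at least $1-2/N^4$; the claimed probability is then inherited directly, and note that the hypothesis $p\ge 4\|Y-pM\|_2/\sigma_r(M)$ is stronger than the condition $p>3\|Y-pM\|_2/\sigma_r(M)$ needed to invoke Theorem~\ref{thm:main_clustering_temp}.

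First I would translate the assumption $\|Y-pM\|_2\le p\sigma_r(M)/4$ into two elementary bounds that convert the denominators appearing in Theorem~\ref{thm:main_clustering_temp} into the cleaner $p\sigma_r(M)$ form used in \eqref{eq:separation}--\eqref{eq:dist.threshold}: namely $p\sigma_r(M)-3\|Y-pM\|_2\ge p\sigma_r(M)/4$, so that $\big(p\sigma_r(M)-3\|Y-pM\|_2\big)^{-1}\le 4/(p\sigma_r(M))$; and $\|Y-pM\|_2/\big(p\sigma_r(M)-\|Y-pM\|_2\big)\le 1/3$. Writing the common shorthand $R:=\sqrt{r\log(2dN^5)}\,\big[1+\tfrac{d\sqrt{2\log(2dN^5)}}{\sqrt r}\cdot\tfrac{4c_1c_2(\tau^\star+K(p))}{p\sigma_r(M)}\big]$, the first bound turns the same-cluster estimate of Theorem~\ref{thm:main_clustering_temp} into $\|\widehat M^{i_1}-\widehat M^{i_2}\|_2\le 2c_1c_2(\tau^\star+K(p))R=\threshold$ whenever $\alpha(i_1)=\alpha(i_2)$, which is exactly the intra-cluster half of the clustering criterion.

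For the inter-cluster half I would start from the lower bound in Theorem~\ref{thm:main_clustering_temp} for $\alpha(i_1)\ne\alpha(i_2)$, use $\|pM^{i_1}-pM^{i_2}\|_2=p\|\mu^{\alpha(i_1)}-\mu^{\alpha(i_2)}\|_2\ge p\Gamma$ by the definition of $\Gamma$, and apply the second elementary bound above to replace the leading Davis--Kahan factor $\big(1-(\|Y-pM\|_2/(p\sigma_r(M)-\|Y-pM\|_2))^2\big)^{1/2}$ by its lower estimate $(1-\tfrac19)^{1/2}=\tfrac{2\sqrt2}{3}$; together with the first bound on the subtracted term this gives $\|\widehat M^{i_1}-\widehat M^{i_2}\|_2\ge \tfrac{2\sqrt2}{3}p\Gamma-4c_1c_2(\tau^\star+K(p))R$. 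Invoking the separation hypothesis $p\Gamma\ge 9c_1c_2(\tau^\star+K(p))R$ from \eqref{eq:separation}, the right-hand side is at least $(6\sqrt2-4)c_1c_2(\tau^\star+K(p))R$, and since $6\sqrt2-4>2$ this strictly exceeds $\threshold=2c_1c_2(\tau^\star+K(p))R$, completing the inter-cluster half.

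The argument is thus mostly constant bookkeeping rather than new probabilistic content, so the main obstacle is not any single hard step but making sure the numerical constants line up: specifically that the slack in the separation constant ($9$), the factor $4$ absorbed into the denominators, and the Davis--Kahan coefficient $\tfrac{2\sqrt2}{3}$ combine to leave a strictly positive gap $(6\sqrt2-4)-2>0$ between the inter- and intra-cluster bounds, so that the stated threshold $\threshold$ indeed sits strictly between them. A secondary point worth stating carefully is the equivalence between ``all intra-cluster distances below and all inter-cluster distances above $\threshold$'' and ``single-linkage returns the correct partition,'' which is where the choice of $\threshold$ as the intra-cluster upper bound must be reconciled with the connect-iff-$\le\threshold$ convention.
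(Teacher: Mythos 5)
Your proposal is correct and follows essentially the same route as the paper: invoke Theorem \ref{thm:main_clustering_temp}, use $p\sigma_r(M) \geq 4\|Y-pM\|_2$ to bound the Davis--Kahan factor below by $\sqrt{8/9}$ and to replace the denominator $p\sigma_r(M)-3\|Y-pM\|_2$ by $p\sigma_r(M)/4$, then check that the constant $9$ in \eqref{eq:separation} leaves the inter-cluster lower bound strictly above the threshold \eqref{eq:dist.threshold}. Your bookkeeping is in fact slightly cleaner than the paper's (keeping $2\sqrt{2}/3$ rather than rounding to $2/3$ yields the strict gap $6\sqrt{2}-4>2$ directly, and you make explicit the single-linkage correctness criterion that the paper leaves implicit).
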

\begin{proof}
Let $p \geq \frac{4\| Y - pM \|_2 }{\sigma_r(M)}$. That is, $p \sigma_r(M) \geq 4 \| Y - pM \|_2$. Then, 
\begin{align}
\left( 1 - \bigg( \frac{ \| Y - pM \|_2 }{p \sigma_r(M) - \| Y - pM \|_2 }\bigg)^2\right)^{\frac12}  & \geq \left(1-\bigg(\frac{1}{3}\bigg)^2\right)^{\frac12} %\nonumber \\
%& 
= \sqrt{\frac{8}{9}} ~>~\frac{2}{3}.
\end{align} 
Using this inequality in the statement of Theorem \ref{thm:main_clustering_temp}, we obtain that if 
\begin{align}\label{eq:desire.1}
\big\| p M^{i_1} - pM^{i_2} \big\|_2 & \geq 9 c_1 c_2  ( \tau^\star+ K(p))  \sqrt{ r \log(2dN^5)}	
			\left[ 1 + \frac{d \sqrt{2\log(2dN^5)}}{\sqrt{r}}  \frac{  c_1 c_2 \big( \tau^\star +  K(p) \big) }{p \sigma_r(M) - 3  \|Y - pM \|_2}  \right] 
\end{align}
then clustering of all $N$ points happens successfully by selecting distance threshold as defined in \eqref{eq:dist.threshold} in
the distance clustering algorithm in Section \ref{sec:algorithm}. We further note that since $p\sigma_r(M) \geq 4 \|Y-pM\|_2$, $p\sigma_r(M) - 3 \|Y-pM\|_2 \geq p\sigma_r(M)/4$. Using this inequality along with the notation 
$\Gamma = \min_{a, b \in [k]} \|\mu^a - \mu^b \|_2$, condition  \eqref{eq:desire.1} implies \eqref{eq:separation}. This completes the proof. 
\end{proof}
Next, we quantify a high-probability bound on $\|Y - pM\|_2$ to conclude proof of Theorem \ref{thm:main} using Lemma \ref{lem:penultimate.1}.

Now we evaluate bound on $\| Y - pM\|_2$ using Lemma \ref{lem:noise_singular}. Recall from bound from 
\eqref{eq:noise_singular}. In our setting, $d = n^2$ and $\log N = o(n) = o(\sqrt{d})$. Therefore, 
\begin{align}
\big[ (Nd)^{1/4} + (2N \log N)^{1/4} \big] \vee c_3 \big( \sqrt{d} + \sqrt{2 \log N} \big) & = C_1 (n + \sqrt{n}N^{\frac14}).
\end{align}
For the other term, notice that 
\begin{align}
 \sqrt{ (1-p) + p (\tau^\star)^2} \sqrt{p} & \leq \sqrt{p + p^2 (\tau^\star)^2} ~\leq \sqrt{p} + p \tau^\star,
 \end{align}
 where we have used $\sqrt{a + b} \leq \sqrt{a} + \sqrt{b}$ for any $a, b \geq 0$.
 Therefore, by defining $\Delta$ as in \eqref{eqn:Delta_upper} with large enough absolute constants, we obtain that 
 $\|Y - pM\|_2 \leq \Delta$.

Now replacing $d = n^2$ and hence noticing that $\log(2dN^5) = O(\log nN)$, multiplying $\sqrt{r}$
inside the big term in right hand side of \eqref{eq:separation}, bringing out $\sqrt{\log (2dN^5)}$ term
and bounding it by $O(\sqrt{\log nN})$ and defining appropriately large enough constant $C'$, 
\eqref{eq:separation} is implied by 
\begin{align}
p \Gamma  & \geq C' ( \tau^\star+ K(p))  \log(nN) \Big( \sqrt{r} + \frac{n^2 \big( \tau^\star +  K(p) \big) }{p \sigma_r(M)}  \Big).
\end{align}
This completes the proof of Theorem \ref{thm:main}.

\section{Proof of Corollary \ref{cor:main}}
\begin{proof}
To complete the proof, it suffices to verify that the conditions of Theorem \ref{thm:main} are satisfied. 
We show that if $p_k = 1/r$ for all $k \in [r]$; $s(n) = \omega(\sqrt{r/n})$; $N = n^{4}$; and 
$p \geq \frac{C \tau^\star \sqrt{r} \log n}{\Gamma} $ for a sufficiently large constant $C > 0$, then
\begin{enumerate}
	\item
	$\log N = o(n)$,
	\item
	$p > \frac{4 \Delta}{\sigma_r(M)}  \vee \frac{16 \log(Nd)}{\min(N, d)}$, and 
	\item
	the inequality in \eqref{eq:gap.condition} is satisfied.
\end{enumerate}

Notice that the choice of $p, N$ per \eqref{eq:cor.main} already satisfies basic conditions such as $\log N = o(n)$, and 
$p \gg \frac{\log Nn}{\min(N, n^2)}$ since $\Gamma = O(n)$, $\tau^\star \geq 1$ and $\log Nn = \log \big(n^5\big) = \Theta(\log n)$. 

Given this, the key conditions for us to verify are 
\begin{enumerate}
	\item
	$p > \frac{4 \Delta}{\sigma_r(M)}$.
	\item
	The gap condition in \eqref{eq:gap.condition}:
	\begin{align}\label{eq:gap.condition.1}
	\Gamma  & \geq C' \frac{( \tau^\star+ K(p))  \log(nN)}{p} \bigg( \sqrt{r} + \frac{n^2 \big( \tau^\star +  K(p) \big) }{p \sigma_r(M)}  \bigg).
	\end{align}
\end{enumerate}
We verify these conditions one by one.

\medskip
\noindent{\em 1. Verifying $p > \frac{4 \Delta}{\sigma_r(M)}$.} %Note that it suffices to argue that $p\sigma_r(M) > 4 \Delta$; see 
Recall \eqref{eqn:Delta_upper} for the definition of $\Delta$. From \eqref{eq:singular.values},  $\sigma_r(M) 
= s(n) \sqrt{Nn^2/r}$. Since $N = n^4$, we have $\sigma_r(M) = \Theta(s(n) n^3 / \sqrt{r})$. It follows from the choice 
$p  \geq \frac{C \tau^\star \sqrt{r} \log n}{\Gamma} $ that
\begin{align}\label{eq:bd.1}
p \sigma_r(M) & \geq \frac{\tilde{C} \tau^\star s(n) n^3 \log n}{\Gamma} ~\geq~\tilde{C}' \tau^\star n^2 \log n
\end{align}
because $\Gamma = O\big(n  s(n) \big)$. Here, $\tilde{C}, \tilde{C}'$ are constants. 

Next, we simplify the terms in $\Delta$ to show $p \sigma_r(M) > 4 \Delta$.
First, we observe from \eqref{eqn:Kp.0} that $K(p) \leq 1$. Also, we note that $N = n^4 \geq n^2$ and $\tau^\star \geq 1$. 
Therefore, $\tau^\star + K(p) \leq 2 \tau^\star$ and $n + \sqrt{n} N^{\frac14} \leq 2 \sqrt{n} N^{\frac14}$. By definition of $\Delta$ (cf. see \eqref{eqn:Delta_upper}), we have
\begin{align*}
	\Delta & = C \big((\sqrt{p} + p \tau^\star)\sqrt{N} + (\tau^\star + K(p)) ( n + \sqrt{n} N^{\frac14})\Big)
		\leq C \sqrt{p N} + C p \tau^* \sqrt{N} + 4\tau^\star \sqrt{n}N^{\frac14}.
\end{align*}

Now we argue that $p \sigma_r(M)/12 \geq \max \big\{ C \sqrt{p N}, ~ C p \tau^* \sqrt{N}, ~ 4\tau^\star \sqrt{n}N^{\frac14} \big\}$ 
to conclude the desired result. 

First, $\frac{1}{12} p\sigma_r(M)$ vs $C \sqrt{pN}$. Recall that $N = n^4$ and that $p \leq 1$, hence, $\sqrt{p} \geq p$.
\begin{align*}
	\frac{\frac{1}{12}p\sigma_r(M)}{C \sqrt{pN}} = \frac{\sqrt{p} \sigma_r(M) }{12C\sqrt{N}}
	\stackrel{(a)}{\geq}\frac{p \sigma_r(M)}{12C\sqrt{N}}  \stackrel{(b)}{\geq} \frac{\tilde{C}}{12C\sqrt{N}} \tau^\star n^2 \log n 
	\stackrel{(c)}{\geq} \frac{\tilde{C}}{12C\sqrt{N}} n^2 \log n \geq 1
\end{align*}
when $n$ is sufficiently large. Note that (a) follows from $p \leq 1$, (b) follows from \eqref{eq:bd.1}, and (c) follows from $\tau^\star \geq 1$. 

Next, $\frac{1}{12} p\sigma_r(M)$ vs $C p \tau^\star\sqrt{N}$. By Proposition \ref{prop:subg_rum}, we know that $\tau^\star = O(\sqrt{n})$. 
Also, recall from \eqref{eq:singular.values} that $ \sigma_r(M)  = \Theta \big( s(n) n^3 /\sqrt{r} \big)$. From the assumption that 
$s(n) = \omega(\sqrt{r/n})$, we achieve
\begin{align*}
	\frac{\frac{1}{12}p\sigma_r(M)}{C p \tau^\star\sqrt{N}}\geq \frac{ \sigma_r(M) }{12 C' \sqrt{n N } }
	= \frac{ s(n) n^3}{ 12 C'' \sqrt{r n^5}}
	= \omega \big( 1 \big)
\end{align*}
and therefore, $\frac{1}{12} p\sigma_r(M) \geq C p \tau^\star\sqrt{N}$ when $n$ is sufficiently large.

Finally, $\frac{1}{12} p\sigma_r(M)$ vs $4 \tau^\star \sqrt{n}N^{\frac14}$. Since $p  \sigma_r(M)  \geq \tilde{C}' \tau^\star n^2 \log n$ 
by \eqref{eq:bd.1}, it follows that
\begin{align*}
	\frac{\frac{1}{12}p\sigma_r(M)}{4 \tau^\star \sqrt{n}N^{\frac14}}
	= \frac{p \sigma_r(M)}{48\tau^\star n^{1.5}} 
	& \geq \frac{  \tilde{C}' \tau^\star n^2 \log n }{48 \tau^\star n^{1.5}}
	\geq C'' \sqrt{n} \log n, 
\end{align*}
where we have used that $\Gamma = O \big( s(n) n \big)$ in the last inequality. This implies that $\frac{1}{12}p\sigma_r(M) 
\geq 4 \tau^\star \sqrt{n}N^{\frac14}$ when $n$ is sufficiently large. 

Collectively this completes the verification of $p > \frac{4 \Delta}{\sigma_r(M)}$.

\medskip
\noindent{\em 2. Verifying the inequality in \eqref{eq:gap.condition.1}.} Note that $K(p) \leq 1$, $\tau^\star \geq 1$. 
For the choice of $N = n^4$ and $p = \frac{C \tau^\star \sqrt{r} \log n}{\Gamma}$, the right hand side of \eqref{eq:gap.condition.1}

\begin{align*}
	C' \frac{( \tau^\star+ K(p))  \log(nN)}{p} \bigg( \sqrt{r} + \frac{n^2 \big( \tau^\star +  K(p) \big) }{p \sigma_r(M)}  \bigg)
		&\leq C' \frac{ 2\tau^{\star} \log(n^5)}{p} \bigg( \sqrt{r} + \frac{2 n^2 \tau^\star}{p \sigma_r(M)} \bigg)\\
		&\leq \frac{10 C' \Gamma}{C \sqrt{r}} \Bigg( \sqrt{r} + \frac{2 n^2 \tau^\star}{ \frac{C \tau^\star \sqrt{r} \log n}{\Gamma} \frac{C'' s(n) n^3}{\sqrt{r}}} \Bigg)\\
		&= \frac{10 C' \Gamma}{C \sqrt{r}} \Bigg( \sqrt{r} + \frac{2 \Gamma }{ C C'' s(n) n \log n } \Bigg).
\end{align*}
Since $\Gamma = O \big( n s(n) \big)$, $ \frac{2 \Gamma }{ C C'' s(n) n \log n } = O \big( \frac{1}{\log n} \big)$. Therefore, when $n$ is 
sufficiently large, $\frac{2 \Gamma }{ C C'' s(n) n \log n } \leq \sqrt{r}$. Lastly, we observe that if $C > 20 C''$, then 
\[	\frac{10 C' \Gamma}{C \sqrt{r}} \Bigg( \sqrt{r} + \frac{2 \Gamma }{ C C'' s(n) n \log n } \Bigg) \leq \frac{20 C' }{C} \Gamma < \Gamma.  \]
This completes verification of \eqref{eq:gap.condition}.
\end{proof}

%%% ======================================================================
%%%	Proof in Ranking Section
%%% ======================================================================

\section{Proof of Proposition \ref{prop:subg_rum}}\label{sec:proof_ranking}

\begin{proof} Here we wish to establish that if $\perm$ is generated as per a random utility model, 
then the distribution of $\iota(\perm)$, the embedding into pair-wise comparisons, is sub-Gaussian 
with parameter $\sqrt{n}$. Precisely, we shall establish that 
	\[	\big\| \iota(\perm) - \bbE \iota(\perm) \big\|_{\psi_2}\leq C \sqrt{n-1}.	\]
To that end, observe that $\perm = \perm(\eps_1, \ldots, \eps_n; u_1, \ldots, u_n)$ is a random permutation, 
which depends on $n$ independent sources of randomness, $\eps_1, \ldots, \eps_n$ and $n$ deterministic 
(but potentially hidden) parameters, $u_1, \ldots, u_n$. So is $\iota(\perm)$, which is an 
${n \choose 2}$-dimensional binary random vector.
For any $v \in \bbS^{{n \choose 2} - 1}$, we define a random function $f_v$ as
\begin{align*}
	f_v( \eps_1, \ldots, \eps_n)
		\triangleq v^T \iota\big(\perm(\eps_1, \ldots, \eps_n) \big).
\end{align*}
By definition \ref{def:rum}, $\perm$ is generated from $n$ independent random variables 
$\eps_1, \ldots, \eps_n$ and $n$ deterministic parameters $u_1, \ldots, u_n$. The change in 
the value of $\eps_i$, results in the change in $Z_i$, and possibly affects $\perm$. However, the change in $\eps_i$ 
can affect at most $(n-1)$ pairwise comparisons out of ${n \choose 2}$ when all the other random variables in 
$\{ \eps_j \}_{j \neq i}$ remain unchanged.
 
Since $\iota(\perm) \in \big\{ \pm \frac{1}{2} \big\}^{n \choose 2}$, we can observe that for any $i \in [n]$,
\begin{align*}
	\sup_{\eps_1, \ldots \eps_n \atop \eps_i'} 
		\Big| f_v(\eps_1, \ldots, \eps_n) - f_v (\eps_1, \ldots, \eps_{i-1}, \eps_i', \eps_{i+1}, \ldots, \eps_n) \Big|
	\leq \sum_{j \neq i} \big| v_{ij} \big|.&
\end{align*}
The bounded difference concentration inequality yields
\begin{equation}\label{eqn:bounded_conc}
	\Prob{ f_v(\perm) - \bbE_{\perm} f_v(\perm) > t } \leq e^{-t^2/2\nu},	
\end{equation}
where the the variance is bounded from above by the Efron-Stein upper bound as
\begin{align}\label{eqn:efron_stein}
	\nu	&= \frac{1}{4} \sum_{i=1}^n \left( \sum_{j \neq i} \big| v_{ij} \big| \right)^2
		\leq  \frac{1}{4} \sum_{i=1}^n \left( (n-1) \sum_{j \neq i} \big| v_{ij} \big|^2 \right)
		= \frac{n-1}{2} \sum_{1 \leq i < j \leq n} v_{ij}^2
		= \frac{n-1}{2}.	
\end{align}
Inserting \eqref{eqn:efron_stein} to \eqref{eqn:bounded_conc}, we obtain the following inequality: for any $t > 0$,
\[	\Prob{ f_v(\perm) - \bbE_{\perm} f_v(\perm) > t } \leq e^{-\frac{t^2}{n-1}},	
		\quad\text{for all }v \in \bbS^{{n \choose 2} - 1}.	\]
The conclusion follows from the equivalence between the tail probability characterization and the Orlicz function 
characterization.% (see 1 and 4 in Proposition \ref{prop:subG_characterization}).
%$\| \iota(\perm) - \bbE\iota(\perm) \|_{\psi_2} \leq C \sqrt{n-1}$ for some absolute constant $C > 0$.
\end{proof}

\section{Proof of Proposition \ref{prop:mixture_mnl}}

\begin{proof}
Condition 2 (random ordering) implies that for $a \neq b \in [k]$, 
\begin{align*}
	\| \bbE X^a - \bbE X^b \|_2^2 
		&= \sum_{1 \leq i < j \leq n} \big( \Prob{ \sigma_a^{-1}(i) \geq \sigma_a^{-1}(j) } - \Prob{ \sigma_b^{-1}(i) \geq \sigma_b^{-1}(j) } \big)^2.
\end{align*}
First, we observe that
\begin{align*}
	&\bbE_{\sigma_a, \sigma_b} \| \bbE X^a - \bbE X^b \|_2^2 \\
		&\qquad\geq \bbE_{\sigma_a, \sigma_b} \Bigg[ \sum_{1 \leq i < j \leq n} 
			\Ind{ \big( \sigma_a^{-1}(i) - \sigma_a^{-1}(j) \big) \big( \sigma_b^{-1}(i) - \sigma_b^{-1}(j) \big) < 0 }
			\bigg( \frac{ 1 }{1 + \exp \big( - \frac{\rho}{\beta} \big)} -  \frac{ \exp \big( -\frac{\rho}{\beta} \big) }{1 + \exp \big( - \frac{\rho}{\beta} \big)} \bigg)^2	\Bigg]\\
		&\qquad=\frac{1}{2} {n \choose 2} \bigg( \frac{ 1 - \exp \big( - \frac{\rho}{\beta} \big)}{ 1 + \exp \big( - \frac{\rho}{\beta} \big)}\bigg)^2.
\end{align*}
Then by the usual concentration argument (e.g., convex Lipschitz function; cf. \citet[Theorem 7.12]{boucheron2013concentration}), we can show that for any $t \geq 0$,
\begin{align*}
	\Prob{  \| \bbE X^a - \bbE X^b \|_2  - \bbE_{\sigma_a, \sigma_b} \| \bbE X^a - \bbE X^b \|_2  > t } \leq 2 \exp \left( - \frac{t^2}{4n} \right).
\end{align*}
Recall that $\bbE_{\sigma_a, \sigma_b} \| \bbE X^a - \bbE X^b \|_2^2 \geq \big( \bbE_{\sigma_a, \sigma_b} \| \bbE X^a - \bbE X^b \|_2 \big)^2$. 
Therefore, with probability at least $1 - \frac{2}{n^4}$,
 \[
 	\| \bbE X^a - \bbE X^b \|_2 \geq \frac{\sqrt{n(n-1)}}{2}  \frac{ 1 - \exp \big( - \frac{\rho}{\beta} \big)}{ 1 + \exp \big( - \frac{\rho}{\beta} \big)} - 4 \sqrt{n \log n}.
\]
Note that ${r \choose 2} = r(r-1)/2 \leq r^2/2$. Applying the union bound on all pairs $a, b \in [r]$, we can conclude the proof.
\end{proof}

%%% ==========================================
%%%	Example 2: Gaussian RUM
%%% ==========================================

\section{Proof of Proposition \ref{prop:mixture_gaussian}}
\begin{proof}
Observe that
\begin{align*}
	\big\| \bbE X^a - \bbE X^b \big\|_2^2
		&= \sum_{1 \leq i < j \leq n } \Big( \bbE X_{(i,j)}^a - \bbE X_{(i,j)}^b \Big)^2\\
		&= \sum_{1 \leq i < j \leq n } \Big(  \Prob{ \eps_i^a - \eps_j^a \geq u_j^a - u_i^a } - \Prob{ \eps_i^b - \eps_j^b \geq u_j^b - u_i^b } \Big)^2 \\
		&= \sum_{1 \leq i < j \leq n } \Bigg( \Phi\bigg( \frac{ u_j^a - u_i^a }{\sigma \sqrt{2}} \bigg) 
				- \Phi \bigg( \frac{ u_j^b - u_i^b }{\sigma \sqrt{2}} \bigg)  \Bigg)^2.
\end{align*}
%{\color{red} In the last line, $x \vee y \triangleq \max\{x, y\}$ and $x \wedge y \triangleq \min\{x, y \}$ for $x, y \in \Reals$.}

Note that for each $a \in [r]$, 
\[
	u_j^a - u_i^a = 	\begin{cases}	
			1	&	\text{with probability }\frac{1}{4},\\
			0	&	\text{with probability }\frac{1}{2},\\
			-1	&	\text{with probability }\frac{1}{4}.
		\end{cases}
\]
Also, $u_j^a - u_i^a$ and $u_j^b - u_i^b$ are independent for $a \neq b \in [k]$. Therefore, for $a, b \in [k]$ and $i, j \in [n]$,
\[
	\Bigg|\, \Phi\bigg( \frac{ u_j^a - u_i^a }{\sigma \sqrt{2}} \bigg) 
				- \Phi \bigg( \frac{ u_j^b - u_i^b }{\sigma \sqrt{2}} \bigg) \, \Bigg|
		= \begin{cases}
			\Phi \Big( \frac{1}{\sigma \sqrt{2}} \Big) - \Phi \Big( - \frac{1}{\sigma \sqrt{2}} \Big)	&	\text{with probability }\frac{1}{8},\\
			\Phi \Big( \frac{1}{\sigma \sqrt{2}} \Big) - \Phi( 0 )	&	\text{with probability }\frac{1}{2},\\
			0	&	\text{with probability }\frac{3}{8}.
		\end{cases}
\]
Then, 
\begin{equation}\label{eqn:expectation_gaussian}
	\bbE_{u^a, u^b} \big\| \bbE X^a - \bbE X^b \big\|_2^2 
		= \frac{1}{4} {n \choose 2} \bigg( \Phi \Big( \frac{1}{\sigma \sqrt{2}} \Big) - \Phi \Big( - \frac{1}{\sigma \sqrt{2}} \Big) \bigg)^2.
\end{equation}

By the same concentration argument as in the proof of Proposition \ref{prop:mixture_mnl}, we can show that for any $t \geq 0$, 
\begin{align*}
	\Prob{  \| \bbE X^a - \bbE X^b \|_2  - \bbE_{u^a,  u^b} \| \bbE X^a - \bbE X^b \|_2  > t } \leq 2 \exp \left( - \frac{t^2}{4n} \right).
\end{align*}
 That is to say, for each $a \neq b \in [r]$,
 \begin{equation}\label{eqn:concentration.aa}
 	\| \bbE X^a - \bbE X^b \|_2 \geq \bbE_{u^a, u^b} \| \bbE X^a - \bbE X^b \|_2 - 4 \sqrt{n \log n}
\end{equation}
with probability at least $1 - \frac{2}{n^4}$. 

Note that $\bbE_{u^a, u^b} \big\| \bbE X^a - \bbE X^b \big\|_2^2  \geq \big( \bbE_{u^a, u^b} \big\| \bbE X^a - \bbE X^b \big\|_2 \big)^2$. 
We conclude the proof by inserting the lower bound on 
$\bbE_{u^a, u^b} \| \bbE X^a - \bbE X^b \|_2$ from \eqref{eqn:expectation_gaussian} into \eqref{eqn:concentration.aa} 
and applying the union bound for all pairs of $a, b \in [r]$. 
%The simplified lower bound follows from \eqref{eqn:gaussian_lower.1} and \eqref{eqn:gaussian_lower.2}. 
 \end{proof}

\end{document}